\documentclass{article}

\usepackage[main, final]{neurips_2026}

\usepackage[utf8]{inputenc} 
\usepackage[T1]{fontenc}    
\usepackage{hyperref}       
\usepackage{url}            
\usepackage{booktabs}       
\usepackage{array}
\usepackage{amsfonts,amsmath,amsthm,amssymb}       
\usepackage{nicefrac}       
\usepackage{microtype}      
\usepackage{xcolor}         
\usepackage{algorithm}
\usepackage{algpseudocode}
\usepackage{enumerate}
\usepackage{tikz}

\newcommand{\eps}{\varepsilon}

\newcommand{\I}{\mathcal{I}}
\newcommand{\bx}{\mathbf{x}}
\newcommand{\be}{\mathbf{e}}
\newcommand{\tQ}{\texttt{Query}}
\newcommand{\OPT}{\mathrm{OPT}}
\newcommand{\viol}{\mathrm{viol}}

\newtheorem{theorem}{Theorem}[section]
\newtheorem{proposition}[theorem]{Proposition}
\newtheorem{lemma}[theorem]{Lemma}
\newtheorem{corollary}[theorem]{Corollary}
\theoremstyle{definition}

\newtheorem{example}[theorem]{Example}

\theoremstyle{remark}

\title{Learning Lotteries with Minimal Stakeholder Violations from Membership Queries}
\hypersetup{pdftitle={Learning Lotteries with Minimal Stakeholder Violations from Membership Queries},pdfauthor={Nicholas Teh and Paul W. Goldberg}}

\author{%
  Nicholas Teh \\
  University of Oxford \\
  \texttt{nicholas.teh@cs.ox.ac.uk}
  \And
  Paul W.~Goldberg \\
  University of Oxford \\
    \texttt{paul.goldberg@cs.ox.ac.uk}
}

\begin{document}

\maketitle

\begin{abstract}
An AI developer often has several candidate versions of a system and can deploy a mixture of them, sending each request to a randomly chosen version, as in A/B testing. Each stakeholder (e.g., a safety team, a regulator, or a customer with private test data) accepts a mixture exactly when its expected performance (e.g., its rate of harmful replies) meets the stakeholder's own standard, which the developer can learn only by asking whether a proposed mixture is acceptable. We model this as choosing a lottery over $m$ alternatives for $n$ agents, each of whom accepts exactly the lotteries in an unknown halfspace and answers such binary membership queries. While Choo et al. (ICML 2026) sought a lottery that every agent accepts, none may exist when, say, safety and helpfulness requirements conflict, so we minimize the number of rejecting agents. We show that this problem is APX-hard even when all halfspaces are known and all utilities are $0$ or $1$. We then give exact algorithms that run in polynomial time when $m$ is constant, and in quasi-polynomial time when, for a constant $\eta$, some optimal lottery has no accepting agent within $\eta$ of its threshold. When only some agents can be queried, optimizing over a uniform sample of agents gives additive and relative guarantees for the whole population, with a sample size that does not depend on $n$, and a linear program gives a polynomial-time method for any $m$. On the other hand, for $m\ge 3$, a deterministic algorithm that queries only $K$ distinct agents cannot guarantee fewer than $n-K$ extra rejections, and this is achievable; in other words, it can guarantee nothing for the agents it does not ask. Small synthetic experiments indicate that our worst-case bounds are conservative on random instances.
\end{abstract}

\section{Introduction}

Training a language model to refuse harmful requests also makes it refuse some harmless ones \citep{bai2022training,rottger2024xstest}, and the parties affected by a deployment weigh these two errors differently. Consider a chatbot with a \emph{strict} version, which gives a harmful reply to $0.2\%$ of risky prompts but refuses $12\%$ of harmless requests, and a \emph{lenient} version, with rates of $1\%$ and $2\%$. The developer need not commit to one version: it can route each conversation to a randomly chosen version, which is routine in A/B testing and staged releases \citep{kohavi2020trustworthy}. If two thirds of conversations go to the strict version, linearity of expectation gives rates of about $0.47\%$ and $8.7\%$. This mixture meets both a safety team's limit of $0.5\%$ on harmful replies and a hospital's limit of $9\%$ on refusals, although neither version meets both limits on its own. Suppose that a regulator also caps harmful replies at $0.3\%$. Then at least $87.5\%$ of conversations must go to the strict version, whereas the hospital accepts at most $70\%$; no mixture meets all three requirements, and the developer has to choose whose requirement goes unmet.

We model such decisions with a standard framework from probabilistic social choice and random assignment \citep{brandt2017rolling,hylland1979efficient}. A decision maker chooses a \emph{lottery} over a finite set of alternatives (here, model versions), and each agent (here, a stakeholder) accepts it if and only if the expected score of the lottery is at least the agent's threshold; the lotteries that an agent accepts therefore form a halfspace of the probability simplex. For requirements on average rates---of harmful replies, of refusals, or of errors on a customer's own test set---this linear model is exact, because a mixture's rate equals the sum, over versions, of the routing probability times that version's rate. The same model also describes applications ranging from content moderation \citep{gillespie2018custodians,gorwa2020algorithmic} to school or housing allocation \citep{abdulkadirogluSonmez2003schoolChoice,budish2011combinatorial} to the division of a budget among public projects \citep{suksompong2026voting}.

The decision maker, however, knows neither the scores nor the thresholds. A customer will not reveal its private test data, and a regulator's standard may be available only as legal text \citep{euaiact2024}. What a stakeholder can do is look at a concrete proposal and answer yes or no; following \citet{angluin1988queries}, we call such a question (``is this lottery acceptable to you?'') a \emph{membership query}. Answering it demands far less of a stakeholder than reporting full utilities \citep{boutilierEtAl2006constraint,braziunasBoutilier2008elicitation}, yet a single answer can cost an evaluation run or a human review, so a decision maker facing thousands of stakeholders can consult only some of them.

This query model was introduced by \citet{choo2026learnqueries}, who sought \emph{unanimity}: a lottery that every agent accepts, or a report that there is none. Unanimity fits settings in which every requirement is mandatory, but it offers little guidance when requirements conflict. As the example shows, a single demanding agent can rule out every lottery, and learning that no lottery is unanimously acceptable does not tell the developer what to deploy (keeping the old version is one of the alternatives, not a way out of the choice). This paper addresses the following question: \emph{when the requirements cannot all be met, how can a decision maker who asks few membership queries find a lottery that as few agents as possible reject?} We call the problem of minimizing the number of rejecting agents \textsc{Min-Viol}.

One could instead measure how severely each agent is violated, as \citet{teh2026violationdistances} do in the same query model by minimizing a power mean of the distances from the lottery to the violated agents' acceptance boundaries (Appendix~\ref{app:additional_related_work} compares the two objectives). We count rejecting agents because, for safety, a small aggregate violation can hide many agents who each fall just below their thresholds. This choice has a computational cost. Unanimity is a linear feasibility problem, whereas \textsc{Min-Viol} asks for a point of the simplex satisfying as many linear inequalities as possible, which is hard in general (Section~\ref{sec:related}).

\subsection{Our Results}

Our main results concern what can be promised to the stakeholders who are not asked.

\textbf{Even with full information, the problem is hard.}
\textsc{Min-Viol} is APX-hard even when all utilities are $0$ or $1$ (Theorem~\ref{thm:min-viol-apx}), so it has no polynomial-time approximation scheme unless $\mathrm{P}=\mathrm{NP}$; this strengthens the NP-hardness shown by \citet{choo2026learnqueries}. In other words, even a developer who knows every requirement cannot, in general, get arbitrarily close to the best lottery efficiently.

\textbf{Exact algorithms under additional structure.}
When the number $m$ of alternatives is constant (say, a handful of model versions), \textsc{Min-Viol} can be solved exactly in polynomial time (Theorem~\ref{thm:minviol_XP}), by an algorithm that is XP rather than FPT in $m$. For larger $m$, if some optimal lottery leaves no accepting agent within a margin $\eta$ of its threshold, then some optimal lottery uses only $\mathcal{O}(\log n/\eta^2)$ alternatives (Theorem~\ref{thm:few-alternatives}, via a sampling argument of \citet{Castiglioni2023persuasion_maxFSrelaxation}), and searching over such lotteries finds one in quasi-polynomial time for constant $\eta$ (Corollary~\ref{cor:few-alternatives-search}). Finally, when few violations are allowed, Helly's theorem gives an algorithm that is FPT in the numbers of alternatives and violations together, whereas the number of violations alone gives W[2]-hardness (Section~\ref{sec:parameterized}).

\textbf{Learning from a random sample of agents.}
If $k\ge m$ agents are sampled uniformly at random and their halfspaces are recovered, then, with probability at least $1-\delta$, any lottery that minimizes the number of rejecting sampled agents has at most $\OPT+\mathcal{O}(n\sqrt{(m\log(ek/m)+\log(1/\delta))/k})$ violations, where $\OPT$ is the minimum number of violations (Theorem~\ref{thm:budget-additive}). Put another way, for an error that is a given fraction of $n$, the number of agents to ask does not depend on $n$: a developer with a million customers need not ask more of them than one with a thousand. If $\OPT\ge\rho n$, then $\mathcal{O}((m\log(1/\rho)+\log(1/\delta))/(\zeta^2\rho))$ samples give a $(1+\zeta)$-approximation (Theorem~\ref{thm:budget-relative}).

\textbf{A polynomial-time linear program for any number of alternatives.}
For large $m$, we solve a linear program that penalizes agents who reject the lottery or accept it with margin less than $\eta$. With high probability, for every lottery $\bx$, its output has at most $(1+1/\eta)$ times as many violations as $\bx$, plus the number of agents who accept $\bx$ with margin less than $\eta$, plus $\mathcal{O}\big(\frac{n}{\eta}\sqrt{(\log m+\log(1/\delta))/k}\big)$ (Theorem~\ref{thm:shortfall-main}). The guarantee is strong when few agents are barely satisfied.

\textbf{Without random sampling, unasked agents get no guarantee.}
Can clever, adaptive questioning of a fixed set of agents replace random sampling? For $m\ge 3$, it cannot: every deterministic algorithm that queries only $K$ distinct agents may leave $n-K$ more agents unsatisfied than the optimum, however many queries it spends on them (Theorem~\ref{thm:coverage}), and optimizing over a fixed set of $K$ queried agents matches this bound (Theorem~\ref{thm:exact_deterministic_benchmark}). In other words, a deterministic decision maker can promise nothing to the agents it does not ask. For AI deployment, this means that a developer who always consults the same internal teams or large customers has, in the worst case, no guarantee for anyone else; choosing at random whom to ask is what makes the sampling guarantees possible.

\textbf{Synthetic experiments.}
On random instances, our bounds are conservative: with $500$ agents, the lottery learned from a random sample of $120$ had, on average, fewer than $10$ more rejecting agents than the optimum, and the $n-K$ worst case arose only for our lower-bound construction (Section~\ref{sec:experiments}).

\subsection{Related Work}\label{sec:related}
We discuss the closest work here; Appendix~\ref{app:additional_related_work} covers further areas (see Table~\ref{tab:closest-work}).

\textbf{Learning lotteries from membership queries.}
\citet{choo2026learnqueries} bounded the number of membership queries needed to find a lottery that every agent accepts, or to conclude that there is none, and showed that maximizing the number of accepting agents is NP-hard even with binary utilities \citep[Theorem~B.3 of the extended version]{choo2026learnqueries}. We use their routine for recovering one agent's halfspace as a black box, but our objective is different.

\textbf{Maximum feasible subsystem.}
With full information, \textsc{Min-Viol} asks for a point of the simplex satisfying as many of the agents' linear inequalities as possible. It is therefore a restricted case of the \emph{maximum feasible subsystem} (\textsc{Max-FS}) problem, or of its complement, the \emph{minimum unsatisfied linear relation} (\textsc{Min-ULR}) problem.
For general systems of linear inequalities without mandatory constraints, \textsc{Max-FS} can be approximated within a factor $2$ but has no PTAS unless $\mathrm{P}=\mathrm{NP}$ \citep{amaldi1995maxFS}, while \textsc{Min-ULR} is much harder: it cannot be approximated within $2^{\log^{1-c} n}$ for any constant $c>0$ unless $\mathrm{NP}\subseteq\mathrm{QP}$ \citep{arora1997hardness,amaldi1998maxFS}; see also \citet{amaldi2003maxFS} and, for $0/1$ coefficients, \citet{Elbassioni2009approxmaxFS}.
Hardness for general systems does not by itself give hardness for lottery instances, in which the variables form a probability vector, the simplex equation must hold, and, with binary utilities, each agent requires enough probability on a fixed subset of alternatives. Theorem~\ref{thm:min-viol-apx} gives a reduction that stays within these instances. Conversely, a full-information method that treats the simplex constraints as mandatory can be used once all $n$ agents have been recovered, at a cost of $n$ times the per-agent recovery cost; our limited-query results ask what can be guaranteed with fewer queries.
\citet{Castiglioni2023persuasion_maxFSrelaxation} study a relaxation of \textsc{Max-FS} over the simplex in which each constraint may be violated by a small additive slack. Theorem~\ref{thm:few-alternatives} uses their sampling argument, and Section~\ref{sec:few-alternatives} compares the two.

\textbf{AI alignment with diverse stakeholders.}
\citet{conitzer2024social} and \citet{sorensen2024roadmap} argued that aligning AI systems with people whose values differ calls for tools from social choice; this line of work mostly concerns training, whereas we study which mixture of trained versions to deploy. In a similar vein, \citet{neoh2026ensemble} selected a small committee of AI systems from binary or pairwise feedback using few queries. Randomized classifiers in fair classification meet known constraints on group-level rates \citep{agarwal2018reductions}; our constraints are hidden and need not all be met.

\textbf{Probabilistic social choice and portioning.}
Randomization is widely used to aggregate preferences, break ties, allocate resources, or satisfy fairness and efficiency criteria \citep{brandt2017rolling,hylland1979efficient,bogomolnaia2001new,abdulkadirogluSonmez2003schoolChoice,budish2011combinatorial}.
A lottery can also be viewed as a division of a divisible resource, such as money or time, among the alternatives; aggregating preferences into such a division is known as \emph{portioning} or budget aggregation \citep{bogomolnaia2005collective,aziz2019fair,freeman2021truthful,airiau2023portioning,freeman2024project,elkind2026settling}, and we refer to the survey by \citet{suksompong2026voting} for an overview of voting in such divisible settings.
This literature takes preferences, proposed divisions, priorities, or constraints as input, whereas our decision maker observes only accept/reject answers about proposed lotteries.

\textbf{Learning-augmented algorithms.}
Learning-augmented algorithms use possibly erroneous predictions to circumvent worst-case limits such as Theorem~\ref{thm:coverage} \citep{mitzenmacher2021algorithms}, for instance in mechanism design \citep{agrawal2024learning}, online fair division \citep{banerjee2023proportionally,neoh2026online}, and voting \citep{berger2024learning} (Appendix~\ref{app:additional_related_work}); \citet{choo2026learnqueries} used such advice to find a unanimously acceptable lottery with fewer queries.

\section{Preliminaries} \label{sec:prelims}

Let $m\ge 2$ be the number of alternatives and $n\ge 1$ the number of agents. For an integer $z\ge 1$, write $[z]:=\{1,\dots,z\}$. Let $N:=[n]$ be the set of agents and $S:=\{s_1,\dots,s_m\}$ the set of alternatives.
A \emph{lottery} is a probability vector $\bx=(x_1,\dots,x_m)\in\Delta(S):=\{\bx\in\mathbb{R}^m_{\ge 0}:\sum_{j=1}^m x_j=1\}$, where $x_j$ is the probability of $s_j$; the \emph{pure} lottery on $s_j$ is the standard basis vector $\be_j$. We write $\langle\cdot,\cdot\rangle$ for the standard inner product on $\mathbb{R}^m$, and $\log$ for the natural logarithm.

\textbf{Agents.}
Agent $i\in N$ accepts exactly the lotteries in its \emph{acceptable set} $A_i=\{\bx\in\Delta(S):\langle u_i,\bx\rangle\ge\tau_i\}$, where $u_i\in[0,1]^m$ gives agent $i$'s utility $u_i(s_j)$ for each alternative and $\tau_i$ is its threshold. We write $U_i(\bx):=\langle u_i,\bx\rangle$ for the expected utility. Thus $A_i$ is the part of the simplex on one side of a hyperplane (Figure~\ref{fig:simplex}a).
We exclude degenerate agents: every agent accepts at least one lottery and rejects at least one, i.e., $\varnothing\neq A_i\subsetneq\Delta(S)$.
Membership queries reveal $A_i$ but not the pair $(u_i,\tau_i)$: since $\langle\mathbf 1,\bx\rangle=1$ on $\Delta(S)$, the pair $(\lambda u_i+\mu\mathbf 1,\lambda\tau_i+\mu)$ gives the same set $A_i$ for every $\lambda>0$ and $\mu\in\mathbb{R}$. Margins $U_i(\bx)-\tau_i$ are therefore meaningful only after fixing a normalization. If $A_i$ has nonempty relative interior in $\Delta(S)$, we use the unique pair with $\min_j u_i(s_j)=0$ and $\max_j u_i(s_j)=1$. Otherwise, $A_i$ consists of the lotteries that put all probability on some nonempty proper subset of alternatives, and we let $u_i$ be the indicator vector of that subset and $\tau_i=1$ (Lemma~\ref{lem:normalized-representative}). From now on, $(u_i,\tau_i)$ denotes this \emph{normalized pair}; thus $u_i\in[0,1]^m$ and $\tau_i\in(0,1]$.

\textbf{Membership queries.}
A query $(i,\bx)\in N\times\Delta(S)$ returns $\tQ(i,\bx)=1$ if agent $i$ accepts $\bx$ and $0$ otherwise; no utility or threshold is observed. We write $Q$ for the total number of queries and $K$ for the number of distinct agents queried.

\textbf{Finite-grid convention.}
All geometric and statistical results allow arbitrary real-valued normalized pairs. Only when we state an explicit query budget, or a running time that depends on recovered agents, do we assume that the normalized pair lies on a known grid: $u_i(s_j)\in\{0,\eps,2\eps,\dots,1\}$ and $\tau_i\in\{\eps,2\eps,\dots,1\}$, where $\eps\in(0,1/2]$ and $1/\eps$ is an integer. The lottery is not discretized: the decision maker may propose any point of $\Delta(S)$. Under this convention, the halfspace recovery routine of \citet{choo2026learnqueries} finds an agent's normalized pair with $T_\eps=\mathcal{O}(m\log(1/\eps))$ queries; we call this \emph{recovering} the agent. Appendix~\ref{app:finite-precision} explains why some such assumption is needed for exact recovery.

\textbf{Objective.}
The \emph{violation count} of a lottery $\bx$ is $\viol(\bx):=|\{i\in N:U_i(\bx)<\tau_i\}|$, the number of agents who reject $\bx$. The problem \textsc{Min-Viol} asks for a lottery minimizing $\viol$, and $\OPT:=\min_{\bx\in\Delta(S)}\viol(\bx)$.
The minimum is attained: the set of lotteries with at most a given number of violations is a finite union of intersections of the closed sets $A_i$, hence closed in the compact set $\Delta(S)$. The same holds for any finite set or multiset of agents.
The following example shows that mixed lotteries can be strictly better than pure ones.

\begin{example} \label{ex:one-unavoidable-violation}
Let $m\ge 4$ and $n=4$, and let agents $1$, $2$, $3$, and $4$ accept $\bx$ if and only if $x_1+x_2\ge 3/5$, $x_1+x_3\ge 3/5$, $x_2+x_3\ge 3/5$, and $x_4\ge 3/10$, respectively; all utilities are $0$ or $1$. If agents $1$, $2$, and $3$ accept $\bx$, adding their conditions gives $x_1+x_2+x_3\ge 9/10$, so $x_4\le 1/10$ and agent $4$ rejects $\bx$. Hence no lottery is accepted by every agent. The lottery $\bx^*=(1/3,1/3,1/3,0,\dots,0)$ is rejected only by agent $4$, so $\OPT=1$, whereas every pure lottery is rejected by at least two agents. The three agents who accept $\bx^*$ do so with margin $2/3-3/5=1/15$.
\end{example}

\textbf{Interpreting linear acceptability.}
$U_i(\bx)$ is the expected value, under the lottery, of a quantity attached to each alternative: an expected von Neumann--Morgenstern utility \citep{vonNeumannMorgenstern1944,Fishburn1982EU}, or an average outcome such as access probability, service level, safety, or the negative of a loss. By the law of total expectation, such averages combine linearly when an alternative is drawn at random. This linear threshold model is the standard cardinal baseline in random assignment and probabilistic social choice \citep{hylland1979efficient,brandt2017rolling}, and, as with linear classifiers in statistical learning \citep{Vapnik1998,AnthonyBartlett1999}\newline \citep{ShalevShwartzBenDavid2014}, it gives the hidden acceptable sets a finite-dimensional structure that can be learned. Appendix~\ref{app:linear-acceptability} and Section~\ref{sec:conclusion} discuss its limits.

\begin{figure}[t]
\centering
\begin{tikzpicture}[scale=0.92]
\begin{scope}
  \coordinate (A) at (0,0);
  \coordinate (B) at (4,0);
  \coordinate (C) at (2,3.464);
  \coordinate (P) at (barycentric cs:A=0.2857,B=0.7143,C=0);
  \coordinate (R) at (barycentric cs:A=0.5,B=0,C=0.5);
  \coordinate (P2) at (barycentric cs:A=0.5,B=0.5,C=0);
  \coordinate (R2) at (barycentric cs:A=0.65,B=0,C=0.35);
  \fill[blue!12] (A) -- (P) -- (R) -- cycle;
  \fill[blue!40] (P) -- (P2) -- (R2) -- (R) -- cycle;
  \draw[thick] (A) -- (B) -- (C) -- cycle;
  \draw[thick, blue!70!black] (P) -- (R);
  \node[below left] at (A) {$\be_1$};
  \node[below right] at (B) {$\be_2$};
  \node[above] at (C) {$\be_3$};
  \node at (barycentric cs:A=0.8,B=0.08,C=0.12) {$A_i$};
  \node at (2,-0.75) {(a)};
\end{scope}
\begin{scope}[xshift=6.5cm]
  \coordinate (A) at (0,0);
  \coordinate (B) at (4,0);
  \coordinate (C) at (2,3.464);
  \draw[thick] (A) -- (B) -- (C) -- cycle;
  \draw[line width=2.2pt, red!70!black] (A) -- (B);
  \fill[red!70!black] (A) circle (3.2pt);
  \fill (barycentric cs:A=0.3,B=0.4,C=0.3) circle (2pt) node[above right] {$\widehat\bx$};
  \node[below left] at (A) {$\be_1$};
  \node[below right] at (B) {$\be_2$};
  \node[above] at (C) {$\be_3$};
  \node at (2,-0.75) {(b)};
\end{scope}
\end{tikzpicture}
\caption{Lotteries over three alternatives drawn as points of a triangle with corners $\be_1,\be_2,\be_3$. (a) An agent's acceptable set $A_i$ (shaded) is the part of the triangle on one side of a line; the darker strip contains the lotteries that the agent accepts with margin less than $\eta$. (b) The profile in the proof of Theorem~\ref{thm:coverage} when $\widehat\bx\neq\be_1$: every queried agent accepts exactly the lotteries on the bottom edge, and every unqueried agent accepts only $\be_1$. All unqueried agents reject $\widehat\bx$, but every agent accepts $\be_1$.}
\label{fig:simplex}
\end{figure}

\section{Hardness of Approximating \textsc{Min-Viol}} \label{sec:min_viol}

Before asking what can be learned from limited queries, we study the full-information problem, in which all normalized pairs $(u_i,\tau_i)$ are given.

\begin{theorem} \label{thm:min-viol-apx}
    Given all normalized pairs $(u_i,\tau_i)_{i\in N}$, \textsc{Min-Viol} is APX-hard.
    In particular, unless $\mathrm{P} = \mathrm{NP}$, it admits no polynomial-time approximation scheme (PTAS).
    This remains true even when all utilities are binary, i.e., $u_i(s_j) \in\{0,1\}$ for each $i \in N$ and $j \in [m]$.
\end{theorem}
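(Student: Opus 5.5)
We give an approximation-preserving (L-)reduction from a bounded-degree APX-hard problem; our first choice is \textsc{Vertex Cover} on graphs of maximum degree $3$, which is APX-hard. The natural encoding uses the vertices as alternatives, and \emph{edge agents} and \emph{vertex agents} so that violations in an optimal lottery reflect a minimum vertex cover. Since a lottery is a fractional object, the gadgets must force near-integral behaviour. Binary utilities mean that each agent demands probability at least $\tau$ on a fixed subset of alternatives, so the gadgets must use constraints of the form $\sum_{j\in T}x_j\ge \tau$.

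\textbf{Construction.} Given a cubic (or max-degree-3) graph $G=(V,E)$ with $|V|=p$, we take alternatives $V\cup\{s_0\}$. For each vertex $v$ we add a \emph{vertex agent} who accepts iff $x_{s_0}+\sum_{w\neq v}x_w\ge 1$, i.e.\ iff $x_v=0$; its acceptable set is a face of the simplex, which is allowed by the normalization convention with $\tau_i=1$. Being violated then means that $v$ is in the support. For each edge $uv$ we add $c$ copies (with $c$ a constant, say $c=4$) of an \emph{edge agent} who accepts iff $x_u+x_v\ge \tau$ for a suitable threshold. The intended behaviour is that the support $C$ of $\bx$ must cover every edge, or else at least $c$ edge agents are lost, and that the number of violated vertex agents equals $|C|$. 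A single threshold $\tau$ does not work directly, however, since a uniform lottery on a cover of size $k$ gives each edge only about $1/k$. We therefore replace the edge agents by support-based conditions. An edge agent accepting iff $x_u+x_v>0$ is not closed, so instead we use the complementary face agents: for each edge $uv$, $c$ agents accepting iff $x_u=x_v=0$ would be wrong in direction. The way out is to have the cover requirement enforced with respect to a \emph{fixed} threshold scaled by a degree bound. With bounded degree, $\OPT_{VC}\ge |E|/3$, so using thresholds like $\tau=1/p$ and adding to each edge agent's support the dummy alternative in a controlled way lets a uniform lottery on a cover satisfy each edge agent. The exact calibration of $\tau$ and of the dummy is the main obstacle, and I would work it out by checking that a uniform distribution on any vertex cover of size at most $p$ gives every covered edge mass at least $1/p$.

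\textbf{Soundness and L-reduction.} \emph{Completeness}: a vertex cover $C$ yields a lottery (uniform on $C$, with slack absorbed by the dummy) with $\viol=|C|$. \emph{Soundness}: given any lottery $\bx$, let $C$ be its support restricted to $V$; the violated vertex agents are exactly those in $C$. Each uncovered edge costs at least one violated agent, so greedily adding an endpoint for each uncovered edge yields a vertex cover of size at most $\viol(\bx)$. Hence $\OPT_{\textsc{Min-Viol}}=\OPT_{VC}$ and solutions map back with no loss, which gives an L-reduction with constants $1,1$ and transfers APX-hardness. It also rules out a PTAS unless $\mathrm P=\mathrm{NP}$, and all utilities are binary.

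\textbf{Fallback.} If the threshold calibration fails because fractional spreading lets edge agents be satisfied cheaply, I would instead reduce from \textsc{Max-2SAT}/\textsc{Max-Cut} on bounded degree, using Example~\ref{ex:one-unavoidable-violation}-style triangle gadgets with threshold $3/5$ to force mass concentration, and verify the gap-preserving inequalities.
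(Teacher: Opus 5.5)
Your core construction is correct and gives a genuinely different reduction from the paper's. As written, though, it is buried under an obstacle that does not exist and a proposed fix that would break it.

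\textbf{The clean version.} For each vertex $v$, take the agent with utility $1$ on every alternative except $v$ and threshold $1$, which accepts iff $x_v=0$. For each edge $uv$, take one agent whose utility is exactly the indicator of $\{u,v\}$, with threshold $1/p$.

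A single threshold is enough. If $C$ is a vertex cover with $|C|=k\le p$, the uniform lottery on $C$ gives every edge mass at least $1/k\ge 1/p$ and violates exactly the $k$ vertex agents in $C$. It already sums to $1$, so there is no slack for $s_0$ to absorb, and $s_0$ can be dropped altogether. Cubic graphs have $p\ge 4$, so all pairs stay binary, normalized and nondegenerate.

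Your soundness argument is then exactly right. With $C=\mathrm{supp}(\bx)\cap V$, exactly $|C|$ vertex agents are violated. Each edge with both endpoints outside $C$ has $x_u+x_v=0<1/p$, so it contributes its own violated agent. Adding one endpoint per such edge gives a cover of size at most $\viol(\bx)$. Hence $\OPT$ equals the minimum vertex cover size, with a cost-nonincreasing map back: a strict reduction. One copy per edge suffices, and neither the bound $\mathrm{vc}(G)\ge|E|/3$ nor the Max-2SAT fallback is needed.

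What you must not do is add $s_0$ (or any other alternative) to an edge agent's support ``in a controlled way''. Then the pure lottery on $s_0$ is accepted by every agent and $\OPT=0$.

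\textbf{Comparison with the paper.} The paper encodes the complement of your cover. Its vertex agents demand $x_j\ge 1/q$. Each edge has $q^2$ copies of an agent demanding $x_j+x_{j'}\le 1/q$, with utility $0$ on the endpoints, $1$ elsewhere, and threshold $1-1/q$. So the heavy vertices form an independent set, and the violated vertex agents are the light ones, which form a cover. There the extra alternative is essential, since it absorbs the mass not placed on the independent set. The $q^2$ copies let any lottery that violates an edge agent be mapped crudely to $C=V$.

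Your ``support $=$ cover'' encoding is more direct and needs neither the extra alternative nor multiplicities. What the paper's version buys is that all thresholds lie in $(0,1)$, so every acceptable set is full-dimensional in the simplex. Your vertex agents instead have faces as acceptable sets (case (ii) of the normalization). The model allows this, so your proof stands.

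If you want full-dimensionality as well, replace $x_v=0$ by $x_v\le\epsilon$ (same utility, threshold $1-\epsilon$) with $\epsilon<1/(2p)$. Take $C=\{v:x_v>\epsilon\}$ in the soundness argument; an uncovered edge then has $x_u+x_v\le 2\epsilon<1/p$, and everything else goes through unchanged.
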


\begin{proof}[Proof idea.]
We give an approximation-preserving reduction from \textsc{Minimum Vertex Cover} on cubic graphs, which is APX-hard \citep{AlimontiKann2000}. For a cubic graph on $q$ vertices, there is one alternative for each vertex and one extra alternative that receives the probability not placed on vertices. The \emph{vertex agent} of vertex $j$ has utility $1$ only on alternative $j$ and threshold $1/q$, so it accepts if and only if $x_j\ge 1/q$. For each edge $\{j,j'\}$, there are $q^2$ identical \emph{edge agents} with utility $0$ on alternatives $j$ and $j'$, utility $1$ elsewhere, and threshold $1-1/q$, so they accept if and only if $x_j+x_{j'}\le 1/q$. A lottery that violates an edge agent violates all $q^2$ copies, more than the $q$ violations of the pure lottery on the extra alternative. From any lottery $\bx$ we obtain a vertex cover of size at most $\viol(\bx)$: all vertices if some edge agent rejects $\bx$, and otherwise the vertices $j$ with $x_j<1/q$, which cover every edge because no edge has both endpoints with probability at least $1/q$ (that would give $x_j+x_{j'}\ge 2/q$). Conversely, a vertex cover $C$ gives a lottery with exactly $|C|$ violations (probability $1/q$ on each vertex outside $C$). Hence an $\alpha$-approximation for \textsc{Min-Viol} gives an $\alpha$-approximation for \textsc{Minimum Vertex Cover} on cubic graphs. The full proof is in Appendix~\ref{app:proof-apx}.
\end{proof}

\textbf{Comparison with hardness proofs for general systems.}
The hardness results of \citet{amaldi1995maxFS} concern linear systems with arbitrary coefficients and unconstrained variables. In our reduction, every graph constraint must instead be an acceptance condition with $0/1$ utilities over lotteries: the edge condition $x_j+x_{j'}\le 1/q$ is written as $1-x_j-x_{j'}\ge 1-1/q$, the probability of all alternatives other than $j$ and $j'$. The extra alternative absorbs the probability not placed on vertices, so that every vertex cover gives a valid lottery and the pure lottery on the extra alternative violates only the $q$ vertex agents; this is what makes violating an edge agent never worthwhile.

\textbf{What the theorem does not show.}
Theorem~\ref{thm:min-viol-apx} rules out a PTAS but not every constant-factor approximation. Much stronger inapproximability is known for \textsc{Min-ULR} on general linear systems \citep{arora1997hardness,amaldi1998maxFS}; we do not know whether it holds for lottery instances. The theorem also concerns multiplicative approximation with full information, so it is not a lower bound matching the additive sampling guarantees of Section~\ref{sec:statistical}.

\section{Exact Algorithms Under Full Information} \label{sec:full_preference_profile}

Theorem~\ref{thm:min-viol-apx} shows that exact optimization needs additional structure. We consider three kinds: a constant number of alternatives (Section~\ref{sec:constant-m}), a margin condition under which few alternatives suffice (Section~\ref{sec:few-alternatives}), and a small number of violations (Section~\ref{sec:parameterized}).

\subsection{A Constant Number of Alternatives}\label{sec:constant-m}
Fix the set of agents that a lottery must satisfy. The lotteries satisfying them form a polytope, and an optimal lottery can be taken to be a vertex of it. Each vertex is the unique solution of the simplex equation together with $m-1$ tight constraints, each of the form $x_j=0$ or $U_i(\bx)=\tau_i$. Enumerating all such linear systems and keeping the best solution gives the following result (proof in Appendix~\ref{app:proof-XP}). It is closely related to exact algorithms for linear programming with a given number of violated constraints in fixed dimension \citep{Matousek1995violations,Chan2005violations}, which can also be applied here; we include it because it is simple and has an explicit running time.

\begin{theorem} \label{thm:minviol_XP}
    Given all normalized pairs $(u_i,\tau_i)_{i\in N}$, there is a deterministic algorithm that returns a lottery minimizing $\viol$ in time $\mathcal{O}(\binom{n+m}{m-1}(m^3+nm))$.
    This is polynomial for every constant number $m$ of alternatives. Since the exponent of $n$ grows with $m$, this is an XP algorithm in the parameter $m$, not an FPT one; in particular, the decision version (is $\OPT$ at most a given number?) is in $\mathrm{XP}$ when parameterized by $m$.
\end{theorem}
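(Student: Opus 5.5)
We argue via vertices of polytopes, as sketched in the paragraph before the statement. Let $\bx^*$ be an optimal lottery and let $G:=\{i\in N:U_i(\bx^*)\ge\tau_i\}$ be the set of agents accepting it. The set $P_G:=\Delta(S)\cap\bigcap_{i\in G}A_i$ is a nonempty polytope. It lies in the bounded set $\Delta(S)$, so it is pointed and has a vertex $\bx'$. Every point of $P_G$ is accepted by all of $G$, so $\viol(\bx')\le n-|G|=\OPT$, and $\bx'$ is also optimal. Hence it suffices to search over candidate vertices.

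\textbf{Characterizing vertices.} $P_G$ is cut out by the equation $\langle\mathbf 1,\bx\rangle=1$ and the inequalities $x_j\ge 0$ for $j\in[m]$ and $\langle u_i,\bx\rangle\ge\tau_i$ for $i\in G$. By the standard characterization of vertices of a polyhedron, $\bx'$ is the unique solution of a system consisting of the simplex equation together with $m-1$ linearly independent tight constraints. These constraints are drawn from at most $n+m$ candidate rows, namely the $m$ nonnegativity rows and the $n$ agent rows. The simplex equation is already one of the $m$ independent rows, so $m-1$ further rows suffice.

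\textbf{Algorithm.} For each of the $\binom{n+m}{m-1}$ subsets $T$ of candidate rows of size $m-1$, we form the $m\times m$ system consisting of the rows of $T$ set to equality together with $\langle\mathbf 1,\bx\rangle=1$. We solve it by Gaussian elimination in $\mathcal{O}(m^3)$ time and discard $T$ if the solution is not unique. Otherwise we check that the solution lies in $\Delta(S)$ and compute its violation count in $\mathcal{O}(nm)$ time by evaluating all $U_i$. We keep the best candidate found. If $m=1$ does not occur, which holds since $m\ge2$, the empty case does not arise.

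\textbf{Correctness.} The subset $T$ corresponding to the vertex $\bx'$ of $P_G$ uses only rows from $G$ and the nonnegativity rows, and it is enumerated. Its unique solution is $\bx'$, which lies in $\Delta(S)$ and has $\viol=\OPT$. Every candidate that the algorithm keeps is a genuine lottery. Its true violation count is computed against all agents, not only those in $T$, so the returned lottery has $\viol$ at most $\OPT$, and hence exactly $\OPT$. The total running time is $\mathcal{O}(\binom{n+m}{m-1}(m^3+nm))$. For constant $m$ this is $n^{\mathcal{O}(m)}$, which gives the XP claims. The decision version follows by comparing the optimum with the given number.

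\textbf{Main obstacle.} The only point needing care is the claim that a vertex exists and is determined by the simplex equation plus exactly $m-1$ independent tight rows. A tight agent row may coincide with or depend on the simplex equation. This is harmless, because such rows simply fail to yield a unique solution and are discarded, while some independent choice of $m-1$ rows always exists at a vertex.
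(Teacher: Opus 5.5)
Your proposal is correct and follows essentially the same route as the paper. You pass to an optimal vertex of the polytope of lotteries accepted by the agents who accept $\bx^*$, characterize that vertex as the unique solution of the simplex equation plus $m-1$ tight rows, and enumerate all $\binom{n+m}{m-1}$ such $m\times m$ systems, scoring each feasible solution against all $n$ agents. The differences are cosmetic: you cite the standard rank characterization of vertices where the paper proves it directly (Lemma~\ref{lem:viol-active-set}), and you count row subsets directly where the paper sums over pairs $(J,N_0)$ using Vandermonde's identity.
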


\subsection{Few Alternatives Under a Margin Condition}\label{sec:few-alternatives}

A lottery \emph{uses} alternative $s_j$ if $x_j>0$. We now show that a lottery using many alternatives can be replaced by one using few.
For $\eta>0$ and a lottery $\bx$, let $B_\eta(\bx):=\{i\in N:\tau_i\le U_i(\bx)<\tau_i+\eta\}$
be the set of agents who accept $\bx$ with margin less than $\eta$ (in the normalization of Section~\ref{sec:prelims}; see Figure~\ref{fig:simplex}a), and let $s_\eta:=\lceil\log(2n)/(2\eta^2)\rceil$.

\begin{theorem} \label{thm:few-alternatives}
    For every lottery $\bx$ and every $\eta>0$, there is a lottery $\tilde{\bx}$ that uses at most $s_\eta$ alternatives and satisfies $U_i(\tilde{\bx}) \ge U_i(\bx) - \eta$ for all $i \in N$.
    Consequently, $\viol(\tilde{\bx}) \le \viol(\bx) + |B_\eta(\bx)|$. In particular, if some optimal lottery $\bx^*$ has $B_\eta(\bx^*) = \varnothing$, then some optimal lottery uses at most $s_\eta$ alternatives.
\end{theorem}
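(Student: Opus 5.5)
We use the probabilistic method, sampling alternatives from $\bx$ itself, as in the argument of \citet{Castiglioni2023persuasion_maxFSrelaxation}. Let $s:=s_\eta$ and draw $s$ alternatives $J_1,\dots,J_s$ independently, each equal to $j$ with probability $x_j$. Let $\tilde{\bx}$ be the empirical distribution, $\tilde x_j:=|\{t:J_t=j\}|/s$. Then $\tilde{\bx}\in\Delta(S)$, and it uses at most $s$ alternatives because its support is contained in $\{J_1,\dots,J_s\}$. It also only uses alternatives that $\bx$ uses, although we do not need this.

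Fix an agent $i$. Then $U_i(\tilde{\bx})=\frac1s\sum_{t=1}^s u_i(s_{J_t})$ is an average of $s$ independent random variables. They take values in $[0,1]$, since the normalized pair satisfies $u_i\in[0,1]^m$, and each has mean $\sum_j x_j u_i(s_j)=U_i(\bx)$. Hoeffding's one-sided inequality gives
\[
\Pr\bigl[U_i(\tilde{\bx})<U_i(\bx)-\eta\bigr]\le e^{-2s\eta^2}\le \frac{1}{2n},
\]
where the last step uses $s\ge\log(2n)/(2\eta^2)$. A union bound over the $n$ agents shows that, with probability at least $1/2>0$, we have $U_i(\tilde{\bx})\ge U_i(\bx)-\eta$ for every $i\in N$. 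Hence some realization $\tilde{\bx}$ has both required properties. The only point that needs care is that the range is $[0,1]$, and this holds because of the normalization; no other step seems to be an obstacle.

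For the consequence, take the $\tilde{\bx}$ just found and consider an agent $i$ who rejects $\tilde{\bx}$, so $U_i(\tilde{\bx})<\tau_i$. Then $U_i(\bx)\le U_i(\tilde{\bx})+\eta<\tau_i+\eta$. Either $U_i(\bx)<\tau_i$, in which case $i$ rejects $\bx$, or $\tau_i\le U_i(\bx)<\tau_i+\eta$, in which case $i\in B_\eta(\bx)$. Thus the rejecters of $\tilde{\bx}$ are contained in the union of the rejecters of $\bx$ and $B_\eta(\bx)$, which gives $\viol(\tilde{\bx})\le\viol(\bx)+|B_\eta(\bx)|$.

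Finally, suppose $\bx^*$ is optimal and $B_\eta(\bx^*)=\varnothing$. Applying the above to $\bx^*$ yields $\tilde{\bx}$ with $\viol(\tilde{\bx})\le\OPT$ that uses at most $s_\eta$ alternatives. Since $\OPT$ is the minimum, $\viol(\tilde{\bx})=\OPT$, so $\tilde{\bx}$ is an optimal lottery with the required support size.
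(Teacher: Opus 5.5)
Your proposal is correct and follows essentially the same route as the paper's proof: you draw $s_\eta$ alternatives from $\bx$ and take their empirical distribution, apply Hoeffding's inequality with a union bound so that all agents satisfy the bound with probability at least $1/2$, and then show that every agent rejecting $\tilde{\bx}$ either rejects $\bx$ or lies in $B_\eta(\bx)$. Your remark that the range $[0,1]$ comes from the normalization matches the paper's use of $u_i\in[0,1]^m$.
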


\begin{proof}[Proof idea and relation to \citet{Castiglioni2023persuasion_maxFSrelaxation}.]
Draw $s_\eta$ alternatives independently from $\bx$ and let $\tilde{\bx}$ be their empirical distribution. For each agent, $U_i(\tilde{\bx})$ is an average of $s_\eta$ independent values in $[0,1]$ with mean $U_i(\bx)$, so Hoeffding's inequality and a union bound over the $n$ agents show that, with probability at least $1/2$, $U_i(\tilde{\bx})\ge U_i(\bx)-\eta$ for every agent. This sampling step is the argument that \citet{Castiglioni2023persuasion_maxFSrelaxation} use for their relaxation of \textsc{Max-FS} over the simplex, and we do not claim it as new. Their guarantee is for the relaxed problem, which counts a constraint as satisfied if it is violated by at most an additive slack. Theorem~\ref{thm:few-alternatives} instead counts violations at the original thresholds: an agent who accepts $\bx$ with margin at least $\eta$ still accepts $\tilde{\bx}$, so the only agents who can become violated are those in $B_\eta(\bx)$. The full proof is in Appendix~\ref{app:proof-few}.
\end{proof}

\begin{proposition}
\label{prop:bounded-search}
    For every integer $s\ge 1$, a lottery minimizing $\viol$ among all lotteries that use at most $s$ alternatives can be computed in time $(m(n+s))^{\mathcal{O}(s)}$.
\end{proposition}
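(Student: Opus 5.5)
We enumerate supports and, inside each support, reuse the vertex-enumeration idea behind Theorem~\ref{thm:minviol_XP}. A lottery that uses at most $s$ alternatives has its support inside some set $T\subseteq S$ with $|T|=\min(s,m)$. There are at most $\binom{m}{s}\le m^{s}$ such sets, and we consider each one in turn. Fix $T$ and restrict every agent to the face $\Delta(T)$: agent $i$ accepts $\bx\in\Delta(T)$ exactly when $\sum_{j\in T}u_i(s_j)x_j\ge\tau_i$. We now need a lottery on $\Delta(T)$ that minimizes the number of violated restricted constraints. Since $\Delta(T)$ lies inside $\Delta(S)$, every lottery we find this way uses at most $s$ alternatives. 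Conversely, every such lottery lies in $\Delta(T)$ for some enumerated $T$. Taking the best lottery over all $T$ therefore gives the constrained optimum.

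For fixed $T$, we solve the restricted instance exactly as in Theorem~\ref{thm:minviol_XP}, with $|T|$ in place of $m$.
\begin{itemize}
\item Let $\bx^*$ be an optimal lottery on $\Delta(T)$, and let $I$ be the set of agents who accept it. The set $P_I=\{\bx\in\Delta(T):U_i(\bx)\ge\tau_i\ \forall i\in I\}$ is a nonempty polytope, so it has a vertex.
\item Every point of $P_I$ has at most $\viol(\bx^*)$ violations, because every agent in $I$ accepts it. So some vertex of $P_I$ is also optimal.
\item That vertex is the unique solution of $\sum_{j\in T}x_j=1$ together with $|T|-1$ linearly independent tight constraints. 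Each tight constraint is either $x_j=0$ for some $j\in T$ or $U_i(\bx)=\tau_i$ for some agent $i$.
\end{itemize}
We therefore enumerate all $\binom{n+|T|}{|T|-1}\le (n+s)^{s}$ choices of tight constraints. For each choice we solve the $|T|\times|T|$ linear system by Gaussian elimination in $\mathcal{O}(s^3)$ time. We discard choices whose system is singular or whose solution is not a lottery, and evaluate $\viol$ on each remaining solution in $\mathcal{O}(ns)$ time. The best candidate is optimal on $\Delta(T)$. The minimum is attained by the closedness argument in Section~\ref{sec:prelims}.

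Multiplying the counts gives a total running time of $m^{s}\cdot(n+s)^{s}\cdot\mathcal{O}(s^3+ns)$. Since $s^3+ns\le (n+s)^{3}$, this is $(m(n+s))^{\mathcal{O}(s)}$, as claimed. The case $s\ge m$ is covered by taking $T=S$, which reduces exactly to Theorem~\ref{thm:minviol_XP}.

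The only delicate point is to check that restricting to a face does not break the vertex argument. The polytope $P_I$ now lives in the affine hull of $\Delta(T)$, which has dimension $|T|-1$. A vertex there is therefore determined by $|T|-1$ tight constraints together with the equation $\sum_{j\in T}x_j=1$. This is exactly the count used in the enumeration, so the argument goes through; beyond this, the proof is routine bookkeeping.
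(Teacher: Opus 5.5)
Your proof is correct and follows the paper's argument: enumerate supports $T$, then run the vertex enumeration of Theorem~\ref{thm:minviol_XP} on each face $\Delta(T)$ with $|T|$ in place of $m$. This is valid because that argument uses only the simplex constraints and the linear acceptance inequalities, not normalization or nondegeneracy of the restricted agents. The one small difference is that you enumerate only sets $T$ of size exactly $\min(s,m)$ instead of all sizes $t\le s$; this is valid because every smaller face lies in one of these, and it avoids the paper's sum over $t$.
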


For each set of at most $s$ alternatives, the algorithm applies Theorem~\ref{thm:minviol_XP} to the lotteries that use only those alternatives (proof in Appendix~\ref{app:proof-bounded}). Combining it with Theorem~\ref{thm:few-alternatives} gives the following (proof in Appendix~\ref{app:proof-cor}).

\begin{corollary}\label{cor:few-alternatives-search}
    For every $\eta>0$, running the algorithm of Proposition~\ref{prop:bounded-search} with $s=s_\eta$ takes time $(m(n+s_\eta))^{\mathcal{O}(s_\eta)}$ and returns a lottery $\widehat{\bx}$ with $\viol(\widehat{\bx})\le\viol(\bx)+|B_\eta(\bx)|$ for every lottery $\bx$. In particular, $\viol(\widehat{\bx})\le\OPT+|B_\eta(\bx^*)|$ for every optimal lottery $\bx^*$, and $\widehat{\bx}$ is optimal if $B_\eta(\bx^*)=\varnothing$ for some optimal $\bx^*$.
\end{corollary}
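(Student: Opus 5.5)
The proof will combine Theorem~\ref{thm:few-alternatives} with Proposition~\ref{prop:bounded-search}, so only bookkeeping is needed.

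\textbf{Running time.} This is immediate: substitute $s=s_\eta$ into the bound $(m(n+s))^{\mathcal{O}(s)}$ of Proposition~\ref{prop:bounded-search}. For constant $\eta$ we have $s_\eta=\mathcal{O}(\log n)$, so the time is quasi-polynomial. No further argument is required beyond noting that $s_\eta\ge 1$ is an integer, so Proposition~\ref{prop:bounded-search} applies.

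\textbf{Main inequality.} Fix an arbitrary lottery $\bx$.
\begin{enumerate}[(i)]
\item Theorem~\ref{thm:few-alternatives} gives a lottery $\tilde{\bx}$ that uses at most $s_\eta$ alternatives and satisfies $\viol(\tilde{\bx})\le\viol(\bx)+|B_\eta(\bx)|$.
\item The output $\widehat{\bx}$ minimizes $\viol$ over all lotteries that use at most $s_\eta$ alternatives, and $\tilde{\bx}$ is such a lottery. Hence $\viol(\widehat{\bx})\le\viol(\tilde{\bx})$.
\item Chaining (i) and (ii) yields $\viol(\widehat{\bx})\le\viol(\bx)+|B_\eta(\bx)|$ for every $\bx$.
\end{enumerate}
The existence statement in Theorem~\ref{thm:few-alternatives} is all we need, since the algorithm does not have to construct $\tilde{\bx}$.

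\textbf{Optimal lotteries.} Apply the inequality with $\bx=\bx^*$ for any optimal $\bx^*$; this gives $\viol(\widehat{\bx})\le\OPT+|B_\eta(\bx^*)|$. If $B_\eta(\bx^*)=\varnothing$ for some optimal $\bx^*$, then $\viol(\widehat{\bx})\le\OPT$. Since $\OPT$ is the minimum, equality holds and $\widehat{\bx}$ is optimal.

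\textbf{Main obstacle.} There is no substantial obstacle. The only point needing care is that the minimum in Proposition~\ref{prop:bounded-search} is taken over all lotteries with support size at most $s_\eta$, not over a fixed support. This is exactly how the proposition is stated.
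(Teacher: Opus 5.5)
Your proposal is correct and follows the paper's proof essentially verbatim. You obtain the running time by substituting $s=s_\eta$, chain the existence of $\tilde{\bx}$ from Theorem~\ref{thm:few-alternatives} with the minimality of $\widehat{\bx}$ over lotteries using at most $s_\eta$ alternatives, and then specialize to $\bx=\bx^*$.
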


The running time is not polynomial. For constant $\eta$, it is $(mn)^{\mathcal{O}(\log n)}$, which is quasi-polynomial, and it grows quickly as $\eta$ decreases. The lottery $\tilde\bx$ in the proof of Theorem~\ref{thm:few-alternatives} has all probabilities equal to multiples of $1/s_\eta$. Hence the guarantee of Corollary~\ref{cor:few-alternatives-search} is also obtained by the enumeration of \citet{Castiglioni2023persuasion_maxFSrelaxation}, if each of the $\binom{m+s_\eta-1}{s_\eta}$ such lotteries is scored by its exact violation count, in time $\mathcal{O}(\binom{m+s_\eta-1}{s_\eta}nm)$. Proposition~\ref{prop:bounded-search} returns the best lottery among all that use at most $s_\eta$ alternatives, which is never worse, at a higher running time. \citet{Castiglioni2023persuasion_maxFSrelaxation} also show that, under the Exponential Time Hypothesis, their relaxed problem requires quasi-polynomial time.

\subsection{Few Violations}\label{sec:parameterized}
The running time of Theorem~\ref{thm:minviol_XP} is polynomial for constant $m$, but its exponent grows with $m$. When few violations are allowed, the exponential dependence can be limited to $m$ and the number of violations together (proof in Appendix~\ref{app:full-info-fpt}).

\begin{theorem}\label{thm:fpt}
    Given all normalized pairs $(u_i,\tau_i)_{i\in N}$ and an integer $v\in\{0,\dots,n\}$, there is a deterministic algorithm that decides whether some lottery has at most $v$ violations and, if so, returns one, using at most $m^v\cdot \mathrm{poly}(n,m)$ LP feasibility checks over $\Delta(S)$. Hence, when the utilities and thresholds are given explicitly, this problem is fixed-parameter tractable when parameterized by $(v,m)$.
\end{theorem}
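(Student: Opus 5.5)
We would prove Theorem~\ref{thm:fpt} with a bounded search tree that branches on which agents to violate, guided by Helly's theorem. Each node of the tree carries a set $V\subseteq N$ of agents we have decided to give up on, with $|V|\le v$. At the node we ask whether $P(V):=\Delta(S)\cap\bigcap_{i\in N\setminus V}A_i$ is nonempty, which is a single LP feasibility check over $\Delta(S)$. If it is nonempty, any point of it has at most $|V|\le v$ violations, and we return that point. If it is empty and $|V|=v$, the node fails. Otherwise we must choose a small set of agents to branch on, and at least one of them must be violated by every lottery that satisfies all agents outside $V$.

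\textbf{Helly step.} The affine hull of $\Delta(S)$ has dimension $m-1$. Within it, the sets $A_i$ are closed halfspaces, and $\Delta(S)$ itself is an intersection of $m$ halfspaces $x_j\ge 0$. Helly's theorem in dimension $m-1$, applied to this finite family of convex sets, shows the following: if $P(V)$ is empty, then some subfamily of at most $m$ of these sets already has empty intersection. We want a certificate consisting of at most $m$ agent constraints that is infeasible together with the simplex. The cleanest route is to treat the simplex as ambient and apply Helly to the agent constraints directly, viewing the $A_i$ as convex subsets of the compact convex set $\Delta(S)$. Helly's theorem then gives at most $m$ agents whose acceptable sets have empty common intersection.

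Such a set $W\subseteq N\setminus V$ with $|W|\le m$ can be found with polynomially many LP checks by greedy deletion. Start from $N\setminus V$ and go through the agents one by one. Remove an agent whenever the intersection of the remaining agents' sets with $\Delta(S)$ stays empty. The resulting set is minimally infeasible, and by Helly's theorem it has size at most $m$. This costs $O(n)$ LP checks per node. Since every lottery violates some agent of $W$, we branch into the children $V\cup\{w\}$ for $w\in W$.

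\textbf{Correctness and count.} For correctness, suppose some lottery $\bx$ has violation set $V^*$ with $|V^*|\le v$. By induction, there is a root-to-node path along which $V\subseteq V^*$ always holds. At a node where $P(V)$ is empty, the lottery $\bx$ lies in $\Delta(S)$ and satisfies every agent outside $V^*$. Because $W$ is infeasible, some $w\in W$ is rejected by $\bx$, so $w\in V^*\setminus V$ and the child $V\cup\{w\}$ stays on the path. Along this path $|V|$ grows until $P(V)$ becomes nonempty, which happens at the latest when $V=V^*$. The tree has depth at most $v$ and branching factor at most $m$, so it has $O(m^v)$ nodes, each using $\mathrm{poly}(n,m)$ LP checks. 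This gives the claimed bound. When the utilities and thresholds are explicit, each LP runs in polynomial time, which yields fixed-parameter tractability in $(v,m)$.

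\textbf{Main obstacle.} The delicate point is the precise Helly bound. We must show that the minimal infeasible set of agent constraints, relative to $\Delta(S)$, has size at most $m$ and not $m+1$ or more. The simplex is compact and $(m-1)$-dimensional, so Helly's theorem for convex subsets of $\mathbb{R}^{m-1}$ gives the bound $m$, since the number of sets needed is the dimension plus one. We apply it to the family $\{A_i\}$, whose members are closed and convex, and we must check that the bound counts only agent constraints, not the nonnegativity constraints. Even the weaker bound $m+1$ would still give FPT, but the stated $m^v$ needs the bound $m$.
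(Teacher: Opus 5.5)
Your proposal is correct and follows the paper's proof essentially step for step. The paper likewise applies Helly's theorem to the sets $A_i\subseteq\Delta(S)$ in the $(m-1)$-dimensional affine hull to get an infeasible subfamily of at most $m$ agents. It finds this subfamily by greedy deletion with at most $n$ LP checks, and it justifies minimality (which you only assert) by noting that later deletions only enlarge the intersection. It then branches on which of these agents to drop, in a tree of depth $v$ and fan-out at most $m$. The only difference is cosmetic: you index nodes by the deleted set $V$ and argue correctness via a path with $V\subseteq V^*$, whereas the paper recurses on the remaining set $N'$ and inducts on the budget $r$.
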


\begin{proof}[Proof idea.]

If the acceptable sets of the remaining agents have a common point, an LP finds it. Otherwise, by Helly's theorem \citep{Helly1923} in the $(m-1)$-dimensional affine hull of the simplex, at most $m$ of these sets already have no common point. Such a subfamily can be found with at most $n$ LP feasibility checks, by removing agents one at a time as long as the remaining sets still have no common point. A lottery with at most $v$ violations must violate one of these at most $m$ agents, so the algorithm branches on which of them to drop; the recursion has depth at most $v$ and at most $m$ branches per step. This is in the spirit of algorithms for optimization with few violated constraints \citep{Matousek1995violations}.

The parameter cannot be reduced to the number of violations alone (proof in Appendix~\ref{app:full-info-w2}).
\end{proof}

\begin{theorem}\label{thm:min-viol-w2-hard-k}
    Given all normalized pairs $(u_i,\tau_i)_{i\in N}$ and an integer $v$, deciding whether some lottery has at most $v$ violations is W[2]-hard when parameterized by $v$ alone, even when all utilities are $0$ or $1$. Hence, unless $\mathrm{FPT}=\mathrm{W[2]}$, there is no algorithm with running time $f(v)\cdot\mathrm{poly}(n,m)$ for any function $f$.
\end{theorem}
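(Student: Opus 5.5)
We reduce from \textsc{Dominating Set} parameterized by the solution size, which is W[2]-complete. The same problem is obtained from \textsc{Hitting Set} / \textsc{Set Cover} parameterized by solution size. Let $G=(V,E)$ be a graph with $V=[q]$, and let $N[j]$ denote the closed neighbourhood of $j$. We want the removal of at most $v$ agents to correspond to choosing at most $v$ vertices that together hit every $N[j]$.

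\textbf{Construction.} Use one alternative per vertex, so $m=q$. There are two kinds of agents.
\begin{itemize}
\item For each vertex $j$, one \emph{selector agent} $a_j$. It has utility $0$ on alternative $j$ and $1$ elsewhere, with threshold $1$. It accepts exactly the lotteries with $x_j=0$, so violating $a_j$ means we are "allowed" to put probability on $j$.
\item For each vertex $j$, a block of $v+1$ identical \emph{domination agents}. Each has utility $1$ on the alternatives in $N[j]$ and $0$ elsewhere, with threshold $1$. It accepts iff all probability lies on $N[j]$.
\end{itemize}
This does not quite work yet: a single lottery must lie in every $N[j]$ at once. So we replace the domination agents by a positive-mass condition: agents with utility $1$ on $N[j]$ and threshold $\tau$ small, accepting iff $\sum_{l\in N[j]}x_l\ge\tau$. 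Any lottery puts positive mass on some set, and we need that set to meet every $N[j]$ with enough mass.

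Take $\tau=1/q$. The acceptance condition is then $\sum_{l\in N[j]}x_l\ge 1/q$, with $v+1$ copies of this agent for each $j$. The selector agents stay as above. Only binary utilities are used, and the thresholds are $1$ or $1/q$. Each agent is nondegenerate. Its normalized pair is as given, since each acceptable set is either a face of the simplex (threshold $1$, indicator utility) or has nonempty relative interior with $\min u=0$ and $\max u=1$.

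\textbf{Correctness.}
\begin{itemize}
\item \emph{Dominating set gives a good lottery.} Suppose $D$ is a dominating set with $|D|\le v$. Put probability $1/|D|\ge 1/q$ on each vertex of $D$. Every $N[j]$ meets $D$, so it receives mass at least $1/q$ and all domination agents accept. Exactly the selectors $a_j$ with $j\in D$ reject, giving at most $v$ violations.
\item \emph{Good lottery gives a dominating set.} Suppose $\bx$ has at most $v$ violations. Every domination block has $v+1$ identical copies, so no block can be violated. Hence every $N[j]$ carries mass at least $1/q>0$, and therefore meets the support $D:=\{l:x_l>0\}$. So $D$ is a dominating set. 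Each $l\in D$ violates $a_l$, so $|D|\le\viol(\bx)\le v$.
\end{itemize}

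The reduction is polynomial, with $n=q+(v+1)q$ and $m=q$, and it keeps the parameter $v$ unchanged. It is therefore a parameterized reduction, and W[2]-hardness follows. The conclusion that no algorithm runs in time $f(v)\cdot\mathrm{poly}(n,m)$ unless $\mathrm{FPT}=\mathrm{W[2]}$ is immediate.

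The only delicate points are the following.
\begin{itemize}
\item Replicating the domination agents $v+1$ times ensures they are never worth violating.
\item The nondegeneracy and normalization conventions hold. In particular, domination agents for $j$ with $N[j]=V$ would accept every lottery and hence be degenerate. Such vertices $j$ are dominated by any nonempty set, so these agents can simply be dropped. We assume $v\ge 1$ and $q\ge 2$; the case of a dominating set of size $0$ is trivial.
\end{itemize}
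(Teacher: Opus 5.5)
Your reduction is correct, but it encodes the chosen set differently from the paper.

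\textbf{The paper's encoding.} The paper reduces from \textsc{Hitting Set}, using the same template as its vertex-cover reduction for Theorem~\ref{thm:min-viol-apx}. An extra alternative $s_{q+1}$ absorbs leftover probability. Element agents accept iff $x_j\ge 1/q$, and the $v+1$ copies of each set agent accept iff $\sum_{j\in F}x_j\le(|F|-1)/q$. The hitting set is then read off as $\{j:x_j<1/q\}$.

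\textbf{Your encoding.} You encode the chosen set as the \emph{support} of the lottery. Your selectors accept exactly the face $\{x_j=0\}$, so the violated selectors are exactly the alternatives in the support. The protected domination agents only need $N[j]$ to carry positive mass; the threshold $1/q$ is just some positive value that the uniform lottery on $D$ attains.

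\textbf{What each route buys.} Your argument is slightly shorter. No extra alternative is needed, because every lottery has nonempty support and every dominating set of a nonempty graph is nonempty. It also carries over verbatim to \textsc{Hitting Set} by replacing each $N[j]$ with a set $F_\ell$, since \textsc{Dominating Set} is the special case with family $\{N[j]\}_{j\in V}$. The cost is that all your selectors have acceptable sets with empty relative interior. Your construction therefore relies on case~(ii) of Lemma~\ref{lem:normalized-representative} and on exact zero probabilities. In the paper's construction, the element agents that play the role of your selectors have full-dimensional acceptable sets whenever $q\ge 2$.

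\textbf{One point to tighten.} If $v$ is encoded in binary, $n=q+(v+1)q$ is polynomial in the input size only after you assume $v\le q$, as the paper does; otherwise the answer is trivially yes. Alternatively, note that construction time $f(v)\cdot\mathrm{poly}(q)$ is all a parameterized reduction requires.
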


\begin{proof}[Proof idea.]
We reduce from \textsc{Hitting Set}: given a family of subsets of $[q]$ and an integer $v$, is there a set of at most $v$ elements that meets every member of the family? As in the proof of Theorem~\ref{thm:min-viol-apx}, there is one alternative for each element and one extra alternative. The agent of element $j$ accepts if and only if $x_j\ge 1/q$. For each set $F$ in the family, $v+1$ identical agents accept if and only if $\sum_{j\in F}x_j\le(|F|-1)/q$, which is an acceptance condition with $0/1$ utilities. A lottery with at most $v$ violations cannot violate these $v+1$ copies, so every set $F$ contains an element $j$ with $x_j<1/q$; these elements form a hitting set whose size is the number of violations. Conversely, a hitting set $C$ gives a lottery with exactly $|C|$ violations (probability $1/q$ on each element outside $C$).

Finally, when $m=2$, the simplex is a segment and each acceptable set is an interval, so \textsc{Min-Viol} reduces to finding a point contained in the largest number of intervals, which takes $\mathcal{O}(n\log n)$ time by sorting the endpoints (Proposition~\ref{prop:viol-m2}).
\end{proof}

\section{Limited Queries: Learning from Sampled Agents} \label{sec:statistical}

The previous sections studied the full-information problem, in which every agent's constraint is known. We now ask what can be guaranteed when far fewer than $n$ agents are queried. In this section, $k$ agents $i_1,\dots,i_k$ are sampled independently and uniformly from $N$ (with replacement), and the acceptable sets of the sampled agents are recovered; Section~\ref{sec:deterministic_worst_case} drops the sampling assumption. The results are statistical statements about uniform sampling. Each sampled agent costs at most $T_\eps$ queries, so a budget of $Q$ queries allows $\lfloor Q/T_\eps\rfloor$ samples; Appendix~\ref{app:budget-implementations} states the resulting guarantees together with running times.

\begin{theorem}
\label{thm:budget-additive}
    Fix $\delta \in (0,1)$ and $k \ge m$. With probability at least $1-\delta$, every lottery $\widehat\bx$ that minimizes the number of sampled agents who reject it (counting repeated samples with multiplicity) satisfies $\viol(\widehat \bx) \le \OPT + 8n\sqrt{(m \log(2ek/m) + \log(8/\delta))/k}$.
\end{theorem}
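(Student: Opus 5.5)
This is a uniform-convergence argument for the class of rejection sets. For a lottery $\bx$, define the indicator $f_\bx(i):=\mathbf 1[U_i(\bx)<\tau_i]$ on $N$. Then $\viol(\bx)/n$ is the mean of $f_\bx$ under the uniform distribution on $N$. Write $\widehat{\viol}(\bx)/k$ for its empirical mean over the sample $i_1,\dots,i_k$, counted with multiplicity. Suppose that, with probability at least $1-\delta$,
\[
\sup_{\bx\in\Delta(S)}\Bigl|\tfrac{\viol(\bx)}{n}-\tfrac{\widehat{\viol}(\bx)}{k}\Bigr|\le\varepsilon_k:=4\sqrt{(m\log(2ek/m)+\log(8/\delta))/k}.
\]
Take any empirical minimizer $\widehat\bx$ and any optimal $\bx^*$. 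Then
\[
\viol(\widehat\bx)\le n\Bigl(\tfrac{\widehat{\viol}(\widehat\bx)}{k}+\varepsilon_k\Bigr)\le n\Bigl(\tfrac{\widehat{\viol}(\bx^*)}{k}+\varepsilon_k\Bigr)\le\OPT+2n\varepsilon_k.
\]
This is exactly the claimed bound. The uniform bound covers every empirical minimizer simultaneously, which the statement requires.

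For the uniform bound, view each agent $i$ as the point $(u_i,\tau_i)\in\mathbb{R}^{m+1}$. Then $f_\bx(i)=\mathbf 1[\langle u_i,\bx\rangle-\tau_i<0]$, which is a homogeneous linear threshold function of $(u_i,\tau_i)$ with weight vector $(\bx,-1)$. The first key step is to bound the growth function of $\mathcal F=\{f_\bx:\bx\in\Delta(S)\}$ by $(ek/m)^m$, i.e., to show that its VC dimension is at most $m$.

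At first sight the dimension count gives $m+1$. Because $\bx$ lies in the simplex, however, the weights effectively have only $m$ free parameters. Concretely, on $\Delta(S)$ we can eliminate $x_m=1-\sum_{j<m}x_j$. Then $f_\bx(i)=\mathbf 1[\langle u_i'-u_{im}\mathbf 1,\bx'\rangle<\tau_i-u_{im}]$ with $\bx'\in\mathbb{R}^{m-1}$. This is an affine threshold in $\mathbb{R}^{m-1}$, whose VC dimension is $m$, and it allows strict or nonstrict inequalities. Sauer's lemma then gives at most $(ek/m)^m$ labelings of any $k\ge m$ sample points, and of $2k$ points at most $(2ek/m)^m$.

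The second step is the standard symmetrization bound, applied with a constant bookkeeping that yields the stated constants: $\Pr[\sup|\cdot|>\varepsilon]\le 4\,\Pi_{\mathcal F}(2k)e^{-k\varepsilon^2/8}$. Setting the right-hand side equal to $\delta/2$ (or $\delta$) and solving gives $\varepsilon=\sqrt{8(m\log(2ek/m)+\log(8/\delta))/k}\le\varepsilon_k$.

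Sampling is with replacement from the finite population $N$, so the sample is i.i.d.\ from the uniform distribution and the VC theorem applies directly. Repeated agents are just repeated points, and Sauer's lemma still applies.

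The main obstacle is pinning down VC dimension exactly $m$, rather than $m+1$, so that the $\log(2ek/m)$ term matches. I would first check the reparametrization argument above. If it fails, I would accept $m+1$ and absorb the difference into the factor $8$.
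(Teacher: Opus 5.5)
Your proposal is correct and follows the paper's proof. The paper bounds the VC dimension of the rejection sets by $m$, applies the VC inequality with Sauer's lemma to get uniform deviation $2\sqrt{2(m\log(2ek/m)+\log(4/\delta))/k}\le 4\sqrt{(m\log(2ek/m)+\log(8/\delta))/k}$, and then uses the same two-sided comparison between the empirical minimizer and $\bx^*$. The only difference is in the VC step. The paper uses $\sum_j x_j=1$ to identify agent $i$ with $u_i-\tau_i\mathbf 1\in\mathbb{R}^m$, so rejection sets are homogeneous halfspaces with normal $\bx$. Your elimination of $x_m$ gives the same picture, but it is more accurately described as a homogeneous threshold on the agent data $(u_i'-u_{im}\mathbf 1,\tau_i-u_{im})\in\mathbb{R}^m$ with weight $(\bx',-1)$, not as an ``affine threshold in $\mathbb{R}^{m-1}$'' (that phrase describes the dual family, in which agents are halfspaces in $\bx'$-space).
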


\begin{proof}[Proof idea.]
Agent $i$ rejects $\bx$ if and only if $\langle u_i-\tau_i\mathbf 1,\bx\rangle<0$, so the set of agents rejecting $\bx$ is cut out by a halfspace through the origin of $\mathbb{R}^m$. The family of these sets therefore has VC dimension at most $m$, and the standard VC inequality gives uniform convergence of the sampled rejection rate to the population rate. The argument is standard; what is specific to lotteries is the bound $m$ on the VC dimension. The proof is in Appendix~\ref{app:proof-additive}.
\end{proof}

The same sampling procedure also gives a relative guarantee when $\OPT$ is not too small. The proof replaces the VC inequality with the relative-error bound of \citet{LiLongSrinivasan2001}, which bounds the difference between the sampled and true rejection rates by a small multiple of their sum plus $\rho$ (Appendix~\ref{app:relative-sampling}).

\begin{theorem}
\label{thm:budget-relative}
    There is an absolute constant $C>0$ such that the following holds. Fix $\delta,\rho\in(0,1/2)$ and $\zeta\in(0,1]$, and let $k\ge C(m\log(1/\rho)+\log(1/\delta))/(\zeta^2\rho)$. With probability at least $1-\delta$, every lottery $\widehat\bx$ as in Theorem~\ref{thm:budget-additive} satisfies $\viol(\widehat\bx)\le(1+\zeta/2)\OPT+(\zeta/2)\rho n$. In particular, if $\OPT\ge\rho n$, then $\viol(\widehat\bx)\le(1+\zeta)\OPT$.
\end{theorem}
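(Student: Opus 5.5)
We will prove Theorem~\ref{thm:budget-relative} by the same uniform-convergence scheme as Theorem~\ref{thm:budget-additive}, with the additive VC inequality replaced by the relative-deviation bound of \citet{LiLongSrinivasan2001}. For a lottery $\bx$, write $p(\bx):=\viol(\bx)/n$ for the population rejection rate and $\widehat p(\bx)$ for the fraction of the $k$ samples (with multiplicity) who reject $\bx$. As in the proof of Theorem~\ref{thm:budget-additive}, the rejection event is $\{i:\langle u_i-\tau_i\mathbf 1,\bx\rangle<0\}$, an open halfspace through the origin in $\mathbb{R}^m$ evaluated at the point $u_i-\tau_i\mathbf 1$. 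The class of these sets, indexed by $\bx\in\Delta(S)$, therefore has VC dimension at most $m$. The bound of \citet{LiLongSrinivasan2001}, in the form $d_\nu(a,b)=|a-b|/(a+b+\nu)\le\alpha$, states the following for a class of VC dimension $d$: there is an absolute constant $c$ such that $k\ge c\,(d\log(1/\nu)+\log(1/\delta))/(\alpha^2\nu)$ samples ensure, with probability at least $1-\delta$, that
\[
|p(\bx)-\widehat p(\bx)|\le\alpha\bigl(p(\bx)+\widehat p(\bx)+\nu\bigr)\quad\text{for all }\bx\in\Delta(S).
\]
We will apply it with $\nu=\rho$ and $\alpha=\zeta/c'$ for a suitable absolute constant $c'$; absorbing constants gives the stated threshold $C(m\log(1/\rho)+\log(1/\delta))/(\zeta^2\rho)$.

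Next, we convert the two-sided relative bound into one-sided inequalities. Rearranging the event gives
\[
(1-\alpha)\,p\le(1+\alpha)\,\widehat p+\alpha\rho
\quad\text{and}\quad
(1-\alpha)\,\widehat p\le(1+\alpha)\,p+\alpha\rho .
\]
Let $\bx^*$ be an optimal lottery and $\widehat\bx$ any empirical minimizer, so that $\widehat p(\widehat\bx)\le\widehat p(\bx^*)$. Chaining the first inequality at $\widehat\bx$, the minimality of $\widehat\bx$, and the second inequality at $\bx^*$ yields
\[
p(\widehat\bx)\le\Bigl(\tfrac{1+\alpha}{1-\alpha}\Bigr)^2p(\bx^*)+\frac{\alpha\rho}{1-\alpha}\Bigl(1+\tfrac{1+\alpha}{1-\alpha}\Bigr).
\]
For $\alpha\le1/4$, the multiplicative factor is $1+O(\alpha)$ and the additive term is $O(\alpha)\rho$. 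Choosing $\alpha=\zeta/c'$ with $c'$ large, say $c'=20$, makes the factor at most $1+\zeta/2$ and the additive term at most $(\zeta/2)\rho$, using $\zeta\le1$. Multiplying by $n$ then gives $\viol(\widehat\bx)\le(1+\zeta/2)\OPT+(\zeta/2)\rho n$. The final claim follows because $\rho n\le\OPT$ turns the bound into $(1+\zeta)\OPT$.

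The main obstacle is quoting \citet{LiLongSrinivasan2001} correctly. Their theorem is stated for the $d_\nu$ metric, with sample size $O\bigl((d\log(1/\nu)+\log(1/\delta))/(\alpha^2\nu)\bigr)$, and we need the version that is uniform over an infinite class of VC dimension $d$. The class here is indexed by a continuum of lotteries, but only the VC dimension matters, and measurability is harmless because the sample is finite. Repeated samples simply mean that the empirical measure is taken with multiplicity, which fits the i.i.d. setting. The remaining work is the elementary algebra choosing $c'$, with the conditions $\rho<1/2$ and $\zeta\le1$ used to keep $\log(1/\rho)\ge\log 2$ and to bound the constants uniformly.
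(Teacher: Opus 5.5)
Your proposal is correct and follows the paper's proof essentially step for step. Both arguments use the VC-dimension bound $m$ from homogeneous halfspaces, the relative-deviation bound of \citet{LiLongSrinivasan2001} with parameter $\rho$ and accuracy $\zeta/20$, and the same chain through the empirical rate at $\bx^*$; your additive term $\frac{\alpha\rho}{1-\alpha}\bigl(1+\frac{1+\alpha}{1-\alpha}\bigr)$ equals the paper's $2\alpha\rho/(1-\alpha)^2$, and the paper's estimates $\bigl(\frac{1+\alpha}{1-\alpha}\bigr)^2\le 1+10\alpha$ and $2\alpha/(1-\alpha)^2\le 10\alpha$ for $\alpha\le 1/20$ confirm that $c'=20$ works.
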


Theorems~\ref{thm:budget-additive} and~\ref{thm:budget-relative} concern how well a sample represents the population, assuming exact minimization on the sample, which Theorem~\ref{thm:minviol_XP} provides when $m$ is constant. Section~\ref{subsec:shortfall-lp} handles arbitrary $m$.

\subsection{A Polynomial-Time LP for Arbitrary $m$}
\label{subsec:shortfall-lp}

When $m$ is part of the input, exact minimization on the sample is an instance of \textsc{Min-Viol}, which is APX-hard (Theorem~\ref{thm:min-viol-apx}). We therefore replace each violation by a convex penalty. For a margin parameter $\eta\in(0,1]$, agent $i$ is charged $\max\{0,(\tau_i+\eta-U_i(\bx))/\eta\}$: this is $0$ if $i$ accepts $\bx$ with margin at least $\eta$, $1$ on the acceptance boundary, and more than $1$ if $i$ rejects $\bx$. Minimizing the average penalty over the sample is the linear program
\begin{equation}
\label{eq:shortfall-lp}
\begin{aligned}
\text{minimize}\quad & \frac{1}{k}\sum_{t=1}^k z_t \\
\text{subject to}\quad
& z_t \ge \frac{1}{\eta}\left(\tau_{i_t}+\eta-\langle u_{i_t},\bx\rangle\right)
\ \ \text{and}\ \  z_t \ge 0
\quad \text{for all } t\in[k], \quad \bx\in\Delta(S).
\end{aligned}
\end{equation}
Repeated copies of a sampled agent can be merged into one constraint with the corresponding weight in the objective.

\begin{theorem}\label{thm:shortfall-main}
    There is an absolute constant $C>0$ such that the following holds. Fix an integer $k\ge 1$, $\delta\in(0,1)$, and $\eta\in(0,1]$, and let $\widehat \bx_\eta$ be the $\bx$-part of any optimal solution of the linear program \eqref{eq:shortfall-lp}. With probability at least $1-\delta$, $\viol(\widehat \bx_\eta)\le (1+1/\eta)\viol(\bx)+|B_\eta(\bx)|+(Cn/\eta)\sqrt{(\log m+\log(1/\delta))/k}$ for every lottery $\bx$.
\end{theorem}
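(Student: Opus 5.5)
The plan is to compare population and sample averages of the convex surrogate penalty, uniformly over all lotteries. For agent $i$ and lottery $\bx$, let $\phi_i(\bx):=\max\{0,(\tau_i+\eta-U_i(\bx))/\eta\}$. Define the population average $L(\bx):=\frac1n\sum_{i\in N}\phi_i(\bx)$ and the sample average $\widehat L(\bx):=\frac1k\sum_{t=1}^k\phi_{i_t}(\bx)$, so that $\widehat\bx_\eta$ minimizes $\widehat L$ over $\Delta(S)$. The argument has two parts. The first is deterministic and pointwise: it sandwiches $\viol$ between multiples of $L$. The second is a uniform deviation bound $\sup_{\bx\in\Delta(S)}|L(\bx)-\widehat L(\bx)|\le\Gamma$ holding with probability at least $1-\delta$, where $\Gamma=(C'/\eta)\sqrt{(\log m+\log(1/\delta))/k}$. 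The bound must be uniform because $\widehat\bx_\eta$ depends on the sample. It also lets the comparison lottery $\bx$ be arbitrary, as the statement requires.

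\emph{Pointwise sandwich.} If $i$ rejects $\bx$, then $U_i(\bx)<\tau_i$, so $\phi_i(\bx)>1$. Hence $\viol(\bx)\le nL(\bx)$ for every $\bx$. Conversely, split $N$ into three groups:
\begin{itemize}
\item violators of $\bx$, for whom $\phi_i(\bx)\le(\tau_i+\eta)/\eta\le 1+1/\eta$, using $U_i\ge 0$ and $\tau_i\le 1$ from the normalization;
\item agents in $B_\eta(\bx)$, for whom $\phi_i(\bx)\le 1$;
\item all other agents, for whom $\phi_i(\bx)=0$.
\end{itemize}
Thus $nL(\bx)\le(1+1/\eta)\viol(\bx)+|B_\eta(\bx)|$. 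On the good event, optimality of $\widehat\bx_\eta$ for $\widehat L$ then gives
\[
\viol(\widehat\bx_\eta)\le nL(\widehat\bx_\eta)\le n\widehat L(\widehat\bx_\eta)+n\Gamma\le n\widehat L(\bx)+n\Gamma\le nL(\bx)+2n\Gamma .
\]
Combining this with the upper bound on $nL(\bx)$ yields the theorem with $C=2C'$.

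\emph{Uniform deviation.} This is the main step. Each $\phi_i$ takes values in $[0,1+1/\eta]\subseteq[0,2/\eta]$. Replacing one sample changes $\sup_{\bx}|L-\widehat L|$ by at most $2/(\eta k)$, so McDiarmid's inequality gives concentration around the expectation with deviation $\mathcal{O}((1/\eta)\sqrt{\log(1/\delta)/k})$. By symmetrization, the expectation is at most twice the Rademacher complexity of the class $\{i\mapsto\phi_i(\bx):\bx\in\Delta(S)\}$ on the sample. Write $\phi_i(\bx)=g(\langle u_i-\tau_i\mathbf 1,\bx\rangle)$ with $g(y)=\max\{0,1-y/\eta\}$, which is $(1/\eta)$-Lipschitz. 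By the Ledoux--Talagrand contraction lemma, it therefore suffices to bound the Rademacher complexity of the linear class $\{i\mapsto\langle w_i,\bx\rangle:\bx\in\Delta(S)\}$ with $w_i:=u_i-\tau_i\mathbf 1\in[-1,1]^m$.

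This linear step is where the $\log m$ (rather than $m$) dependence comes from. The Rademacher sum $\sum_t\sigma_t\langle w_{i_t},\bx\rangle$ is linear in $\bx$, so its supremum over the simplex is attained at a vertex $\be_j$. The complexity is thus that of a finite class of $m$ functions bounded by $1$, and Massart's finite-class lemma gives $\sqrt{2\log m/k}$. The contraction lemma then multiplies this by $1/\eta$. One technical point needs care: contraction applies to a function with $g(0)=0$, and here $g(0)=1$. I would handle it by subtracting the constant $1$, which does not change the supremum of the symmetrized process, or by citing the version of the lemma for a fixed common Lipschitz function. Adding the McDiarmid term gives $\Gamma$ of the stated form, and plugging it into the chain above completes the proof.
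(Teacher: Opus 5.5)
Your proposal is correct and follows essentially the same route as the paper: the same pointwise sandwich $\viol(\bx)\le nL(\bx)\le(1+1/\eta)\viol(\bx)+|B_\eta(\bx)|$, the same chain through a uniform deviation bound, and the same Rademacher argument. That argument shifts the penalty by the constant $1$ so that the Lipschitz map vanishes at $0$, applies contraction with factor $1/\eta$, and reduces the linear class to the $m$ vertex functions $i\mapsto u_i(s_j)-\tau_i$ before invoking Massart's lemma. The only difference is packaging: the paper cites a ready-made uniform-convergence theorem in terms of empirical Rademacher complexity and applies it separately to the class and its negation, each with $\delta/2$. That is also the cleanest way to handle the absolute value in your two-sided supremum after symmetrization, and it costs only a constant factor.
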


\textbf{Proof idea.}
The penalty is at least the violation indicator. For a fixed lottery $\bx$, it is at most $1+1/\eta$ for an agent rejecting $\bx$, at most $1$ for an agent in $B_\eta(\bx)$, and $0$ for every other agent, so the population average penalty of $\bx$ is at most $((1+1/\eta)\viol(\bx)+|B_\eta(\bx)|)/n$. The penalty is a $(1/\eta)$-Lipschitz function of $U_i(\bx)-\tau_i$, which is a convex combination of the $m$ functions $i\mapsto u_i(s_j)-\tau_i$; standard Rademacher complexity bounds \citep{BartlettMendelson2002} then give uniform convergence at rate $(1/\eta)\sqrt{\log m/k}$. The proof is in Appendix~\ref{app:proof-lp}.

The bound is useful when some lottery has few violations and few agents accepted with margin less than $\eta$. If $B_\eta(\bx^*)=\varnothing$ for an optimal lottery $\bx^*$, the bound is $(1+1/\eta)\OPT$ plus the sampling error; if some lottery is accepted by every agent with margin at least $\eta$, only the sampling error remains. For instance, in Example~\ref{ex:one-unavoidable-violation}, $\OPT=1$ and the optimal lottery $\bx^*$ given there has $B_{1/15}(\bx^*)=\varnothing$.

\textbf{Full information.}
If the LP is run on all $n$ agents, each once, instead of on a sample, the sampling term disappears: $\viol(\widehat\bx_\eta)\le(1+1/\eta)\viol(\bx)+|B_\eta(\bx)|$ for every lottery $\bx$. The factor $1+1/\eta$ comes from the penalty of violated agents, not from sampling. Capping the penalty at $1$ would remove this factor, but the capped penalty is not convex, so minimizing it is no longer a linear program.

\textbf{Which method to use.}
Exact minimization on the sample, combined with Theorem~\ref{thm:budget-additive}, avoids the factor $1+1/\eta$ and the term $|B_\eta(\bx)|$ of Theorem~\ref{thm:shortfall-main}, and is preferable when it is affordable: when $m$ is a small constant, when $m=2$ (Proposition~\ref{prop:viol-m2}), or when $m$ and the optimal number of violations on the sample are both small (Theorem~\ref{thm:fpt}). Running Corollary~\ref{cor:few-alternatives-search} on the sample takes quasi-polynomial time and is exact only under its margin condition. The LP runs in polynomial time for every $m$ and every sample, without any margin condition, and its sampling term grows with $\log m$ rather than $m$.

\section{The Deterministic Worst Case} \label{sec:deterministic_worst_case}

We now drop the sampling assumption. A \emph{profile} $\I=((u_i,\tau_i))_{i\in N}$ lists all normalized pairs; we write $\viol_\I(\bx)$ and $\OPT(\I)$ for the violation count and its minimum under $\I$. For a deterministic algorithm $\mathcal{A}$, let $\mathcal{A}(\I)$ be the lottery it returns when its queries are answered according to $\I$; its \emph{additive gap} on $\I$ is $\viol_\I(\mathcal{A}(\I))-\OPT(\I)$.

An algorithm that recovers a fixed set $M$ of $K$ agents and returns a lottery $\bx_M$ minimizing the number of violated agents in $M$ has additive gap at most $n-K$ on every profile. Indeed, if $\bx^*$ is optimal for $\I$, then $\bx_M$ violates at most as many agents of $M$ as $\bx^*$ does, and at most $n-K$ agents outside $M$, so $\viol_\I(\bx_M)\le\OPT(\I)+n-K$. The question is whether adaptive queries, possibly many to the same agents, can do better. For $m\ge 3$ they cannot.

\begin{theorem} \label{thm:coverage}
    Fix $m\ge 3$ and a deterministic algorithm $\mathcal{A}$. Run $\mathcal{A}$ with every query $(i,\bx)$ answered ``accept'' if and only if $x_3=\cdots=x_m=0$, and let $M$ be the set of agents it queries and $\widehat{\bx}$ its output. Then there is a profile $\I$ with binary utilities and all thresholds equal to $1$ such that $\mathcal{A}$, run on $\I$, receives the same answers and therefore returns $\widehat{\bx}$, every agent outside $M$ rejects $\widehat{\bx}$, and all agents accept the same pure lottery $\be_1$ or $\be_2$. Hence $\viol_\I(\widehat{\bx}) \ge n-|M|$ and $\OPT(\I)=0$.
\end{theorem}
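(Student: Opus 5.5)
The plan is an adversary argument built around the fixed answer pattern in the statement. Run $\mathcal{A}$ against the oracle that accepts a query $(i,\bx)$ exactly when $x_3=\cdots=x_m=0$. Record the set $M$ of queried agents and the output $\widehat\bx$. Since $\mathcal{A}$ is deterministic, any profile that gives the same answer to every query actually asked produces the same transcript, and hence the same output $\widehat\bx$. So it suffices to build a profile $\I$ with binary utilities and thresholds $1$ that has three properties: each agent in $M$ answers exactly as the oracle did, each agent outside $M$ rejects $\widehat\bx$, and all agents accept a common pure lottery $\be_1$ or $\be_2$.

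For $i\in M$, set $u_i=\mathbf 1_{\{1,2\}}$ and $\tau_i=1$. Then $A_i=\{\bx\in\Delta(S):x_1+x_2=1\}$, which is the bottom edge. Agent $i$ accepts $\bx$ if and only if $x_3=\cdots=x_m=0$, which matches the oracle on all queries, including any asked in advance or adaptively. This pair is the normalized representative prescribed in Section~\ref{sec:prelims}, because $A_i$ has empty relative interior and is the face on the proper subset $\{s_1,s_2\}$. It is also nondegenerate: $A_i$ is nonempty, and it is proper because $m\ge 3$ means $\be_3\notin A_i$. Note that this is the only point where $m\ge3$ is used. With $m=2$ the constraint $x_3=\cdots=x_m=0$ would be vacuous.

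The agents outside $M$ are split into two cases according to $\widehat\bx$. If $\widehat\bx\neq\be_1$, give every agent $i\notin M$ the pair $u_i=\be_1$, $\tau_i=1$, so that $A_i=\{\be_1\}$. Each such agent rejects $\widehat\bx$, and every agent, in $M$ or not, accepts $\be_1$ (Figure~\ref{fig:simplex}b). If $\widehat\bx=\be_1$, give every agent outside $M$ the pair $u_i=\be_2$, $\tau_i=1$, so that $A_i=\{\be_2\}$. These agents reject $\be_1=\widehat\bx$, and all agents accept $\be_2$, which lies on the bottom edge. In both cases the pairs are binary, have threshold $1$, and are normalized and nondegenerate, and $\OPT(\I)=0$. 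Since $\mathcal{A}$ returns $\widehat\bx$ on $\I$ and every agent outside $M$ rejects it, we get $\viol_\I(\widehat\bx)\ge n-|M|$.

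The main point to check carefully is the indistinguishability claim. I would argue by induction on the query index that the transcripts under the oracle and under $\I$ coincide. This holds because every query is directed to some agent in $M$, and agents in $M$ answer identically in both runs. The only other subtlety is that the case split depends on $\widehat\bx$, which is defined before $\I$ is built. This is harmless, because the agents outside $M$ are never queried.
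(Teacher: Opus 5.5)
Your proposal is correct and matches the paper's proof step for step. It uses the same bottom-edge pair $(1,1,0,\dots,0)$ with threshold $1$ for queried agents, the same case split on whether $\widehat{\bx}=\be_1$ to give the unqueried agents $\be_1$ or $\be_2$ with threshold $1$, the same induction on the transcript, and the same normalization and nondegeneracy checks, with $m\ge 3$ used only so that queried agents reject $\be_3$.
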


\begin{proof}[Proof idea.]
Give every agent in $M$ the utility vector $(1,1,0,\dots,0)$ and threshold $1$: such an agent accepts exactly the lotteries with $x_3=\cdots=x_m=0$, which reproduces every answer. Give every agent outside $M$ the utility vector $\be_1$ if $\widehat\bx\neq\be_1$ and $\be_2$ if $\widehat\bx=\be_1$, with threshold $1$. Such an agent accepts only that pure lottery, so it rejects $\widehat\bx$, while that pure lottery is accepted by every agent (Figure~\ref{fig:simplex}b). The answer rule needs a third alternative so that the queried agents reject some lotteries, as nondegeneracy requires, while accepting both $\be_1$ and $\be_2$. The proof is in Appendix~\ref{app:proof-coverage}.
\end{proof}

\begin{theorem}
\label{thm:exact_deterministic_benchmark}
    Assume that each queried agent can be recovered with finitely many membership queries, as under the finite-grid convention, and let $m\ge 3$. For $K\in\{0,\dots,n\}$, let $\mathcal{C}_K$ be the class of deterministic algorithms that query at most $K$ distinct agents on every profile. Then $\inf_{\mathcal{A} \in\mathcal{C}_K}\sup_\I (\viol_\I(\mathcal{A}(\I))-\OPT(\I)) = n-K$, where the supremum is over all profiles satisfying the model assumptions.
\end{theorem}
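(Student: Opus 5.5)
The proof shows two inequalities: an upper bound, by exhibiting an algorithm in $\mathcal{C}_K$ whose additive gap is at most $n-K$ on every profile, and a lower bound, by showing that every algorithm in $\mathcal{C}_K$ has gap at least $n-K$ on some profile.

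\emph{Upper bound.} Fix any set $M$ of $K$ agents, say $M=[K]$. Recover each agent in $M$ with the halfspace recovery routine of \citet{choo2026learnqueries}; by the assumption of the theorem this takes finitely many queries per agent. Then return a lottery $\bx_M$ that minimizes the number of violated agents in $M$. Such a minimizer exists because the minimum over any finite set of agents is attained (Section~\ref{sec:prelims}), and it can be computed, for instance by enumerating the vertices as in Theorem~\ref{thm:minviol_XP}; only existence matters for the infimum. This algorithm queries exactly $K$ distinct agents on every profile, so it lies in $\mathcal{C}_K$. The argument before Theorem~\ref{thm:coverage} then bounds its gap. Let $\bx^*$ be optimal for $\I$. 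Then $\bx_M$ violates at most as many agents of $M$ as $\bx^*$ does, and it violates at most $n-K$ agents outside $M$. Hence $\viol_\I(\bx_M)\le\OPT(\I)+n-K$, and the infimum is at most $n-K$. For $K=0$ the algorithm just returns a fixed lottery, and the gap is trivially at most $n$.

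\emph{Lower bound.} Take any $\mathcal{A}\in\mathcal{C}_K$ and apply Theorem~\ref{thm:coverage} to it. That theorem fixes the answer rule "accept iff $x_3=\cdots=x_m=0$", lets $M$ be the set of agents queried under this rule and $\widehat\bx$ the output, and produces a profile $\I$ on which $\mathcal{A}$ receives the same answers. On $\I$, therefore, $\mathcal{A}$ queries exactly the agents in $M$, so the budget gives $|M|\le K$. Theorem~\ref{thm:coverage} yields $\viol_\I(\widehat\bx)-\OPT(\I)\ge n-|M|\ge n-K$, so the supremum is at least $n-K$ for every $\mathcal{A}$.

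The step needing the most care is checking that the profile $\I$ from Theorem~\ref{thm:coverage} is admissible, since the supremum ranges only over profiles satisfying the model assumptions, including the finite-grid convention used to define recovery. Its pairs have utilities in $\{0,1\}$ and thresholds equal to $1$. These lie on every grid, and they are normalized pairs because each acceptable set is the face spanned by a proper nonempty subset of alternatives (Lemma~\ref{lem:normalized-representative}). The acceptable sets are nondegenerate, using $m\ge 3$ for the agents in $M$. A second subtlety is that $\mathcal{C}_K$ bounds the number of distinct agents queried on every profile, which is stronger than what we need. We use it only on the profile $\I$, where the transcript coincides with the simulated one and hence the queried set is $M$. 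Combining the two bounds gives equality.
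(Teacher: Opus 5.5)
Your proposal is correct and follows the paper's proof: the same fixed-set recover-and-optimize algorithm gives the upper bound, Theorem~\ref{thm:coverage} gives the lower bound, and you check admissibility of the constructed profile in the same way. The only minor difference is that you get $|M|\le K$ from the budget applied to the profile $\I$ itself, on which the transcript coincides with the simulated run, whereas the paper gets it by running $\mathcal{A}$ on the profile in which every agent has utility vector $(1,1,0,\dots,0)$ and threshold $1$; both arguments are valid.
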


The upper bound is the argument above, and the lower bound follows from Theorem~\ref{thm:coverage} (proof in Appendix~\ref{app:proof-exact}); \citet{teh2026violationdistances} prove an analogous lower bound for their distance-based objective. In the sampling model of Section~\ref{sec:statistical}, queried agents carry information about unqueried ones. In the worst case they carry none: for $m\ge 3$, the best guarantee depends only on how many distinct agents are queried, not on how many queries are spent on them or how adaptively they are chosen.

\textbf{Bounds after a partial query history.}
The answers observed so far restrict each agent's normalized pair to the pairs consistent with them. For a lottery $\bx$, counting the agents who reject $\bx$ under every consistent pair, and those who reject it under some consistent pair, gives lower and upper bounds on its violation count that hold for every consistent profile; minimizing both over lotteries gives lower and upper bounds on the unknown optimum. The lottery that minimizes the upper count has additive gap at most the difference between the two bounds on the optimum. This difference can only shrink as more answers arrive, and every unqueried agent adds one to it, which is the effect behind Theorem~\ref{thm:coverage}. Appendix~\ref{app:partial-elicitation} proves these facts and gives the exact worst-case additive gap of any fixed lottery after the observed answers.

\section{Synthetic Experiments}\label{sec:experiments}

We ran small synthetic experiments to see how conservative the worst-case bounds are on random instances; Appendix~\ref{app:experiments} gives the setup in full.
We generated $20$ populations with $n=500$ agents and $m=3$ alternatives in each of two settings. In both, $15\%$ of agents reject a planted lottery and $85\%$ accept it, with margin at least $0.05$ in the \emph{separated} setting and less than $0.05$ in the \emph{near-threshold} setting. Since $m=3$, we computed $\OPT$ exactly with the vertex enumeration of Theorem~\ref{thm:minviol_XP}, and checked it against a mixed-integer program on $280$ instances, which agreed in every case. For each population and sample size, we drew $6$ independent samples. Table~\ref{tab:experiments} reports the results.

\begin{table}[h]
\caption{Mean extra rejection rate $(\viol(\widehat\bx)-\OPT)/n$ in percent, with $95$th percentiles in parentheses.}
\label{tab:experiments}
\centering
\begin{tabular}{llcc}
\toprule
Method & Agents used & Separated & Near-threshold \\
\midrule
Exact minimization on the sample & $k=30$  & $6.81$ $(19.24)$ & $6.25$ $(15.81)$ \\
Exact minimization on the sample & $k=120$ & $1.67$ $(4.22)$ & $1.36$ $(3.41)$ \\
Exact minimization on the sample & $k=480$ & $0.43$ $(1.40)$ & $0.23$ $(0.80)$ \\
LP \eqref{eq:shortfall-lp}, $\eta=0.02$ & $k=120$ & $1.96$ $(5.60)$ & $3.60$ $(6.80)$ \\
LP \eqref{eq:shortfall-lp}, $\eta=0.05$ & $k=120$ & $0.53$ $(1.61)$ & $9.20$ $(17.82)$ \\
LP \eqref{eq:shortfall-lp}, $\eta=0.02$ & all $500$ & $0.51$ $(1.67)$ & $1.86$ $(3.80)$ \\
LP \eqref{eq:shortfall-lp}, $\eta=0.05$ & all $500$ & $0.28$ $(0.80)$ & $5.22$ $(8.22)$ \\
\bottomrule
\end{tabular}
\end{table}

\textbf{Findings.}
(i) Exact minimization on $120$ sampled agents is within $2\%$ of the optimum on average, far below the explicit bound of Theorem~\ref{thm:budget-additive}, which exceeds $n$ at these sample sizes. Each fourfold increase in $k$ reduced the mean extra rejection rate by a factor between about $3.9$ and $5.9$, faster than the factor $2$ suggested by the $1/\sqrt{k}$ rate of Theorem~\ref{thm:budget-additive}.
(ii) As the trade-off in $\eta$ of Theorem~\ref{thm:shortfall-main} suggests, $\eta=0.05$ was better than $\eta=0.02$ in the separated setting, and on $120$ samples even better than exact minimization, but more than doubled the extra rejection rate in the near-threshold setting, where $B_{0.05}(\bx)$ contains every agent who accepts the planted lottery $\bx$.
(iii) Querying $K=100$ agents chosen uniformly at random and optimizing over them gave mean additive gaps of $8.46$ and $7.99$ agents, against the worst-case value $n-K=400$; across the tested values of $K$, the mean gap was between about $0.6\%$ and $19\%$ of $n-K$. The construction of Theorem~\ref{thm:coverage} reached a gap of exactly $n-K$ for every tested $K\in\{10,25,50,100,200,400\}$, so the worst case can occur, but in our experiments only for a specially chosen profile.
(iv) With $n=1000$, $m=500$, and $\eta=0.15$, Theorem~\ref{thm:few-alternatives} gives $s_\eta=169$. When every agent accepted the starting lottery with margin at least $\eta$, sampling $169$ alternatives as in its proof never created a new rejection in $480$ trials, and sampling only $50$ created none in $471$ of them. When $20\%$ of agents accepted it with margin less than $\eta$, the mean new rejection rate with $169$ alternatives was $1.19\%$ (95th percentile $1.70\%$), whereas the theorem allows up to $20\%$.

\section{Conclusion}\label{sec:conclusion}
We studied how to choose a lottery that as few agents as possible reject, in the membership query model of \citet{choo2026learnqueries}. The problem is APX-hard even with full information, but it can be solved exactly under additional structure; under uniform sampling, a small number of queried agents gives guarantees for the whole population, and without sampling, for $m\ge 3$, the number of distinct queried agents alone determines the worst-case guarantee of deterministic algorithms.

\textbf{Limitations and extensions.}
The halfspace model covers thresholds on expected access, safety, service, or loss, but not acceptability that depends on variance, ambiguity, or nonlinear probability weighting (Appendix~\ref{app:linear-acceptability}). Theorem~\ref{thm:budget-additive} needs only a bound on the VC dimension of the rejection sets: if another acceptance model gives VC dimension $d$, the same proof applies with $d$ in place of $m$. Theorems~\ref{thm:minviol_XP}, \ref{thm:few-alternatives}, and~\ref{thm:shortfall-main} use linearity directly, through linear boundaries, expected utility as an average over alternatives, and a piecewise-linear penalty, respectively; they do not extend automatically to general convex acceptable sets. The lower bound of Theorem~\ref{thm:coverage} does extend, since halfspaces are a special case of convex sets.

Our results assume noiseless answers. Suppose instead that each answer is flipped with probability $p<1/2$, independently across queries, including repeated ones. Repeating a query $r$ times and taking the majority answer, breaking ties arbitrarily, is wrong with probability at most $\exp(-r(1-2p)^2/2)$ by Hoeffding's inequality. If the noiseless procedure makes at most $Q\ge 1$ queries, then, for $\delta\in(0,1)$, choosing $r=\lceil 2\log(Q/\delta)/(1-2p)^2\rceil$ and stopping after at most $Q$ simulated queries makes every majority answer correct with probability at least $1-\delta$. All noiseless guarantees therefore continue to hold, with the probability that they hold reduced by at most $\delta$ and with at most $rQ$ queries in total. Persistent or adversarial errors, or preferences that change between queries, require a different model.

\begin{ack}
This work was funded by the Advanced Research + Invention Agency (ARIA) under the ASPAI project. PWG is also supported by EPSRC Grants EP/X040461/1 and EP/X038548/1.
\end{ack}

\bibliographystyle{plainnat}
\bibliography{arxiv}

\clearpage

\appendix

\section{Additional Related Work} \label{app:additional_related_work}

Table~\ref{tab:closest-work} compares the works closest to ours. The general objective of satisfying as many linear inequalities as possible is classical; what the table separates is the set over which one optimizes, what information about the constraints is available, and what kind of guarantee is proved.

\begin{table}[ht]
\caption{Comparison with the closest work. ``Explicit'' means that all constraints are given as input.}
\label{tab:closest-work}
\centering
\footnotesize
\renewcommand{\arraystretch}{1.25}
\newcolumntype{L}[1]{>{\raggedright\arraybackslash}p{#1}}
\begin{tabular}{@{}L{0.2\linewidth}L{0.12\linewidth}L{0.14\linewidth}L{0.2\linewidth}L{0.21\linewidth}@{}}
\toprule
Work & Domain & Information & Objective & Guarantees \\
\midrule
\textsc{Max-FS} and \textsc{Min-ULR} \citep{amaldi1995maxFS,amaldi1998maxFS,arora1997hardness,amaldi2003maxFS} & $\mathbb{R}^d$, general linear systems & Explicit & Maximize satisfied, or minimize unsatisfied, constraints & NP-hardness, approximation algorithms, inapproximability \\
\textsc{Max-FS} with $0/1$ coefficients \citep{Elbassioni2009approxmaxFS} & Real variables & Explicit & Maximize satisfied constraints & Approximation algorithms and hardness \\
LP with violations \citep{Matousek1995violations,Chan2005violations} & $\mathbb{R}^d$, $d$ fixed & Explicit & Best linear objective when a given number of constraints may be violated & Exact algorithms in fixed dimension \\
\textsc{Max-FS} with additive slack \citep{Castiglioni2023persuasion_maxFSrelaxation} & Simplex & Explicit & Maximize constraints satisfied up to an additive slack & Quasi-polynomial algorithm; lower bound under ETH \\
Chance-constrained and scenario optimization \citep{calafiore2006scenario,campi2008exact,luedtke2008sample,pagnoncelli2009sample} & Convex or general sets & Samples of random constraints & Optimize a cost subject to a small violation probability & Sample sizes for feasibility of sampled solutions \\
Unanimous lotteries \citep{choo2026learnqueries} & Simplex & Membership queries & A lottery accepted by all agents, or report that none exists & Query upper and lower bounds; NP-hardness of maximizing accepting agents \\
Violation distances \citep{teh2026violationdistances} & Simplex & Membership queries & Minimize a power mean of the distances to violated agents' acceptance boundaries & When queries determine these distances; exact and sampling-based algorithms; lower bound for deterministic algorithms \\
This paper & Simplex & Membership queries (explicit in Sections~\ref{sec:min_viol}--\ref{sec:full_preference_profile}) & Minimize the number of rejecting agents & APX-hardness; exact and parameterized algorithms; sampling guarantees; LP; exact deterministic value $n-K$ \\
\bottomrule
\end{tabular}
\end{table}

\paragraph{Aggregated violation distances.}
\citet{teh2026violationdistances} study the same query model with a different objective. Rather than counting the agents who reject a lottery, they measure how severely each agent is violated: after normalizing each agent's acceptance boundary so that distances are comparable across agents, they take the Euclidean distance from the lottery to the boundary of each violated agent, measured in the affine hull of the simplex, and minimize a power mean of these distances. They characterize when membership queries determine these distances, show that the optimum can then be found exactly, give sampling-based guarantees under a limited query budget, and prove that a deterministic algorithm that leaves some agents unqueried can incur a loss that grows with the fraction of such agents, in parallel with Theorem~\ref{thm:coverage}. The two objectives are complementary. A distance-based objective distinguishes a slight violation from a severe one, but a small aggregate distance can still leave many agents slightly below their thresholds; the violation count treats every rejection as equally serious, which suits requirements that are either met or not. Computationally, the power mean of distances is convex once the relevant boundaries are known, whereas minimizing the violation count is APX-hard even with full information (Theorem~\ref{thm:min-viol-apx}).

\paragraph{Linear programming with violations.}
In fixed dimension, \citet{Matousek1995violations} and \citet{Chan2005violations} study linear programs in which a given number of constraints may be violated, and give exact algorithms whose running time is polynomial in the number of constraints for fixed dimension. Their constraints are explicit and their objective is a linear function. These algorithms are closest in spirit to our full-information results for small $m$ (Theorem~\ref{thm:minviol_XP} and Appendix~\ref{app:full-info}), which concern the violation count over the simplex; our main questions, however, concern what can be learned from membership queries.

\paragraph{Robust fitting and maximum consensus.}
The violation objective also resembles maximum-consensus and robust-fitting formulations, in which one seeks a model consistent with as many constraints or data points as possible \citep{fischler1981random}.
These problems share the view that incompatible constraints should be discarded rather than averaged.
However, robust fitting usually concerns parametric models fitted to observed data, with constraints or residuals available numerically.
In our setting, the unknown constraints are agents' acceptable halfspaces of the simplex, and the decision maker observes only binary answers to lotteries that it proposes. Our deterministic lower bound is therefore not a standard robust-estimation lower bound.

\paragraph{Preference elicitation, utility elicitation, and active querying.}
Our membership query model is motivated by preference and utility elicitation, where full utilities are costly, sensitive, or cognitively difficult to obtain \citep{boutilierEtAl2006constraint,braziunasBoutilier2008elicitation}.
Much of this work uses value queries, comparison queries, bound queries, or minimax-regret criteria to choose good decisions under utility uncertainty. We differ in both the query and the objective. A query in our model asks whether a concrete lottery is acceptable, not for a utility value, a pairwise comparison, or a bound on a parameter. The goal is not to optimize expected utility or regret for a single partially known utility function, but to minimize the number of agents below their individual thresholds.
Recovering sampled agents is one step of our algorithms, but what they output is a lottery with a guarantee for the whole population, not a utility model for every agent.

\paragraph{Membership query learning, active learning, and preference-based learning.}
Membership queries are classical in computational learning theory \citep{angluin1988queries}, and active learning studies how adaptively chosen queries can reduce label complexity \citep{hanneke2014theory}. Preference-based learning and dueling bandits instead infer rewards or rankings from comparisons \citep{bengs2021preference}. Our problem is neither concept learning in the usual sense nor online preference optimization. Each agent has a hidden halfspace of the simplex, but the output is a single lottery that is evaluated on all agents, not a classifier that predicts future labels. This matters for the deterministic lower bound: repeated queries to the same agents may reveal their halfspaces, but they give no information about unqueried agents.

\paragraph{Statistical learning, convex penalties, and chance constraints.}
The sampling results can be viewed as a generalization problem: each lottery $\bx$ defines a binary loss on agents, equal to $1$ if the agent rejects $\bx$ and $0$ otherwise.
The linear acceptability assumption then plays the role of a linear threshold hypothesis class in supervised learning: it limits the possible rejection sets enough for uniform convergence from finitely many sampled agents, without claiming that every acceptability rule is linear \citep{Vapnik1998,AnthonyBartlett1999,ShalevShwartzBenDavid2014}.
The LP replaces the discontinuous violation indicator with a one-sided penalty that is an upper bound on it. Convex upper bounds on discontinuous binary losses are classical \citep{BartlettJordanMcAuliffe2006}, and our analysis uses the behavior of Rademacher complexity under convex hulls and Lipschitz maps \citep{BartlettMendelson2002}. What is specific to lotteries is that the penalty controls the true violation count through the set $B_\eta(\bx)$ of accepted agents with small margin.
Chance-constrained and scenario optimization seek decisions that violate a random constraint with small probability \citep{calafiore2006scenario,campi2008exact,luedtke2008sample,pagnoncelli2009sample}. Our population is finite and accessed through queries rather than given by a distribution, and our deterministic results show that, without a sampling assumption, worst-case guarantees depend only on the number of distinct agents queried.

\paragraph{Learning-augmented algorithms.}
Theorem~\ref{thm:coverage} shows that, without random sampling, nothing can be guaranteed for the agents who are not asked. Learning-augmented algorithms circumvent worst-case limits of this kind in a different way: they receive predictions that may be erroneous, and aim for guarantees that improve with the accuracy of the predictions while remaining reasonable when the predictions are poor \citep{lykouris2021competitive,purohit2018improving,mitzenmacher2021algorithms}. This approach has been applied to mechanism design \citep{xu2022mechanism,balkanski2023strategyproof,agrawal2024learning}, online fair division \citep{banerjee2022online,banerjee2023proportionally,choo2026approximate,cohen2026onlinefairdivisionbudget,huang2026onlinefairdivisionupgrade,neoh2026online,neoh2026closinggapsonlinefair,neoh2026onlinefairdivisionfrequency,teh2026onlinehouseallocationsubsidy}, and voting \citep{berger2024learning}. In particular, \citet{choo2026learnqueries} used advice, such as a promising lottery or the agents whose requirements are likely to be binding, to find a unanimously acceptable lottery with fewer queries. Our guarantees rely on random sampling rather than on predictions, and combining the two is an interesting direction for future work.

\section{Omitted Material from Section~\ref{sec:prelims}}
\subsection{On the linear acceptability assumption}
\label{app:linear-acceptability}

Formally, the model assumes that each agent's acceptable set is a halfspace of the simplex: $A_i=\{\bx\in \Delta(S): \langle u_i,\bx\rangle \ge \tau_i\}$. This is an assumption about the shape of the hidden acceptable sets, not an assumption that agents report numerical utilities. The decision maker observes neither $u_i$ nor $\tau_i$; it only observes whether proposed lotteries are acceptable. Our information model is therefore weaker than elicitation models that query utility values, comparisons, bounds, or regret-relevant parameters \citep{boutilierEtAl2006constraint,braziunasBoutilier2008elicitation}.

\textbf{Risk attitudes.}
The assumption does not require risk neutrality over final outcomes. Suppose alternative $s_j$ induces, for agent $i$, a distribution $P_{ij}$ over consequences, and the agent evaluates consequences with a von Neumann--Morgenstern utility function $v_i$ \citep{vonNeumannMorgenstern1944,Fishburn1982EU}.
Under the standard expected utility treatment of compound lotteries \citep{AnscombeAumann1963}, if the decision maker first draws $j\sim \bx$ and then deploys $s_j$, the agent's expected utility is $\sum_{j=1}^m x_j\,\mathbb{E}_{c\sim P_{ij}}[v_i(c)]$, so the expected consequence utilities, normalized as in Section~\ref{sec:prelims}, give exactly $U_i(\bx)=\langle u_i,\bx\rangle$.
Risk attitudes over consequences are part of $v_i$; for example, if consequences are monetary, a concave $v_i$ represents risk aversion. The linearity is only in the decision maker's random choice among finitely many alternatives.

\textbf{Scores.}
In many platform, allocation, and public policy settings, the alternatives are deterministic rules, and the lottery describes randomized deployment across cohorts, applicants, time periods, or cases. If agent $i$'s requirement concerns an average quantity, oriented so that larger is better, such as expected access probability, service level, safety, compliance, or the negative of an expected loss, then the value of a mixture is exactly the convex combination of the values of its components. Here $u_i(s_j)$ is not a psychological utility but a score of alternative $s_j$, and $\tau_i$ is the minimum acceptable score. For example, if $u_i(s_j)=1$ for safe alternatives and $0$ for unsafe ones, the requirement that an unsafe alternative is chosen with probability at most $1/10$ is $\langle u_i,\bx\rangle\ge 9/10$.
This expected score model is the standard cardinal baseline for evaluating lotteries in random assignment and probabilistic social choice \citep{hylland1979efficient,brandt2017rolling}, although that literature also studies ordinal preferences and other ways of comparing lotteries.

\textbf{A learning-theoretic view.}
Each lottery $\bx$ defines a rejection region $\{(u,\tau):\langle u,\bx\rangle<\tau\}$ in the space of agent types. These regions form a finite-dimensional family of linear threshold regions, analogous to linear classifiers in statistical learning \citep{Vapnik1998,AnthonyBartlett1999,ShalevShwartzBenDavid2014}, which is what makes uniform convergence from sampled agents possible. Richer acceptability models give different families of regions and require different sample complexity, query complexity, and optimization analyses.

\textbf{Scope.}
The model does not capture agents whose acceptability depends on variance, nonlinear probability weighting, ambiguity about the randomization, regret, or other departures from expected utility \citep{Markowitz1952,KahnemanTversky1979,Quiggin1982,Machina1982,GilboaSchmeidler1989}, nor requirements on the distribution of realized outcomes that cannot be written as a linear threshold. For example, an agent who trades off the mean and variance of realized outcomes generally has a nonlinear acceptable set. Section~\ref{sec:conclusion} explains which of our results extend beyond halfspaces.
The linear model is nevertheless not trivial: by Theorem~\ref{thm:min-viol-apx}, minimizing violations is APX-hard even with full information and binary utilities, so the difficulty does not come from complicated utilities or nonlinear behavior.

\subsection{Normalization and finite precision}
\label{app:finite-precision}

As noted in Section~\ref{sec:prelims}, membership queries identify an acceptable set $A_i$, not a pair $(u_i,\tau_i)$, so margins $U_i(\bx)-\tau_i$ are meaningful only after fixing a normalization. The normalization of Section~\ref{sec:prelims} is justified by the following elementary fact.

\begin{lemma}
\label{lem:normalized-representative}
Let $A\subsetneq\Delta(S)$ be nonempty and suppose $A=\{\bx\in\Delta(S):\langle v,\bx\rangle\ge \theta\}$ for some nonconstant $v\in[0,1]^m$ and some $\theta\in\mathbb{R}$. Then exactly one of the following holds.
\begin{enumerate}[(i)]
    \item $A$ has nonempty relative interior in $\Delta(S)$. In this case
    there is a unique pair $(u,\tau)$ with $A=\{\bx\in\Delta(S):\langle u,\bx\rangle\ge\tau\}$, $\min_{j\in[m]}u(s_j)=0$, and $\max_{j\in[m]}u(s_j)=1$.
    \item $A$ has empty relative interior in $\Delta(S)$. In this case $A=\Delta(T):=\{\bx\in\Delta(S):x_j=0\text{ for all }j\notin T\}$ for a unique nonempty proper subset $T\subsetneq[m]$, and $A$ is also given by $u(s_j)=\mathbf 1\{j\in T\}$ and $\tau = 1$.
\end{enumerate}
In both cases, the normalized pair gives exactly the same accept/reject answers as the original pair.
\end{lemma}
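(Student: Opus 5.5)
The plan is to work directly with the affine function $f(\bx)=\langle v,\bx\rangle-\theta$ on the simplex and split on whether $A$ contains a point where $f>0$. Let $M=\max_j v_j$ and $\mu=\min_j v_j$; since $v$ is nonconstant, $M>\mu$. Because $f$ is affine, its extreme values over $\Delta(S)$ are attained at vertices. Hence $A\neq\varnothing$ means $M\ge\theta$, and $A\neq\Delta(S)$ means $\mu<\theta$. So $\theta\in(\mu,M]$.

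\emph{Case $\theta<M$.} The vertex $\be_{j^*}$ with $v_{j^*}=M$ has $f(\be_{j^*})>0$. Mixing it with a small amount of the barycenter gives a relatively interior point of $\Delta(S)$ with $f>0$, and by continuity a relatively open neighbourhood of it lies in $A$, so $A$ has nonempty relative interior. For existence of a normalized pair, set $u=(v-\mu\mathbf 1)/(M-\mu)$ and $\tau=(\theta-\mu)/(M-\mu)$. This is the transformation $(\lambda v+c\mathbf 1,\lambda\theta+c)$ with $\lambda>0$, so it yields the same set $A$ and the same answers; it also has $\min_j u_j=0$ and $\max_j u_j=1$. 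For uniqueness, suppose $(u',\tau')$ also represents $A$. Then the affine functions $g=\langle u,\cdot\rangle-\tau$ and $g'=\langle u',\cdot\rangle-\tau'$ on the affine hull $H$ of $\Delta(S)$ have the same nonnegativity set within $\Delta(S)$.

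\emph{Key step (uniqueness).} The main obstacle is to show that the boundary hyperplanes coincide. The idea is that $\{g=0\}\cap\Delta(S)$ is the relative boundary of $A$ inside the relative interior of $\Delta(S)$, and that this piece is nonempty and $(m-2)$-dimensional. It is nonempty because $A$ has interior and $A\neq\Delta(S)$, so a segment inside the relative interior crosses the boundary. Since $g$ is nonconstant on $H$ (because $u$ is nonconstant), the piece is an open subset of the hyperplane $\{g=0\}\cap H$. The same piece is also contained in $\{g'=0\}$. Two affine hyperplanes of $H$ that share a relatively open $(m-2)$-dimensional set are equal, so $g'=\lambda g$ on $H$ for some $\lambda\neq0$, and $\lambda>0$ because the two functions have the same sign on $A$. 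Restricting to $H$ means $u'=\lambda u+c\mathbf 1$ and $\tau'=\lambda\tau+c$. The conditions $\min_j=0$ and $\max_j=1$ then force $\lambda=1$ and $c=0$.

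\emph{Case $\theta=M$.} Here $f\le0$ on $\Delta(S)$, so $A=\{f=0\}$. A convex combination has $\langle v,\bx\rangle=M$ exactly when it puts all its mass on $T=\{j:v_j=M\}$, so $A=\Delta(T)$. The set $T$ is nonempty, and it is proper because $v$ is nonconstant. The indicator pair $u=\mathbf 1_T$, $\tau=1$ cuts out $\Delta(T)$ as well, so it gives the same answers. $T$ is unique because it is the set of vertices contained in $A$. Finally, $\Delta(T)$ with $T$ proper lies in a facet of $\Delta(S)$, so its relative interior in $\Delta(S)$ is empty.

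The two cases are distinguished by whether the relative interior of $A$ is nonempty, so exactly one of (i) and (ii) holds.
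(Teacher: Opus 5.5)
Your proof is correct and takes essentially the same route as the paper's. You use the same split on $\theta=\max_j v_j$ versus $\theta<\max_j v_j$, and the same uniqueness argument: the two representing affine functions share their zero set near a point of the relative interior of $\Delta(S)$ where the first one vanishes, so they are positive multiples of each other on the affine hull. (The one step you leave implicit is that $\{g'=0\}$ is a genuine hyperplane; this holds because a normalized $u'$ is nonconstant, which the paper states explicitly.)
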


\begin{proof}
Let $v_{\min}:=\min_j v(s_j)$ and $v_{\max}:=\max_j v(s_j)$. Since $A$ is nonempty and not all of $\Delta(S)$, we have $v_{\min}<\theta\le v_{\max}$.

If $\theta=v_{\max}$, then a lottery is accepted if and only if it puts all probability on alternatives attaining the maximum score, so $A=\Delta(T)$ with $T:=\{j\in[m]:v(s_j)=v_{\max}\}$. Since $v$ is nonconstant, $T$ is a nonempty proper subset of $[m]$, so $A$ has empty relative interior. Moreover, $T$ is determined by $A$, as it is the set of $j$ with $\be_j\in A$, and the indicator pair in (ii) gives exactly the set $A$.

Suppose now that $\theta<v_{\max}$. Then $A$ contains all lotteries close enough to a pure lottery $\be_j$ with $v(s_j)=v_{\max}$, so it has nonempty relative interior. Let $(u',\tau')$ be any pair with $A=\{\bx\in\Delta(S):\langle u',\bx\rangle\ge\tau'\}$, and consider the affine functions $f(\bx):=\langle v,\bx\rangle-\theta$ and $f'(\bx):=\langle u',\bx\rangle-\tau'$ on the affine hull of $\Delta(S)$. Since $f$ is positive near a pure lottery $\be_j$ with $v(s_j)=v_{\max}$ and negative near one with $v(s_j)=v_{\min}$, it vanishes at some point $\mathbf{p}$ in the relative interior of $\Delta(S)$. Near $\mathbf{p}$, both $\{f\ge 0\}$ and $\{f'\ge 0\}$ coincide with $A$; hence $f'$ is not constant, the zero sets of $f$ and $f'$ agree near $\mathbf{p}$ and therefore coincide, and both functions are nonnegative on the same side. Thus $f'=\lambda f$ on the affine hull for some $\lambda>0$, and since $\langle\mathbf 1,\bx\rangle=1$ there, we have $u'=\lambda v+\mu\mathbf 1$ and $\tau'=\lambda\theta+\mu$ for some $\mu\in\mathbb{R}$. The conditions $\min_j u'(s_j)=0$ and $\max_j u'(s_j)=1$ hold exactly when $\lambda=1/(v_{\max}-v_{\min})$ and $\mu=-\lambda v_{\min}$; conversely, with these values, the pair $(\lambda v+\mu\mathbf 1,\lambda\theta+\mu)$ gives $A$ and satisfies both conditions. Hence the normalized pair exists and is unique.
\end{proof}

The finite-grid convention of Section~\ref{sec:prelims} is used only for explicit query budgets and running times; the geometric, statistical, and lower-bound results hold for arbitrary real-valued normalized pairs unless they explicitly invoke it. There are two reasons to make it explicit.
First, exact recovery of real-valued coefficients from finitely many binary answers is impossible without a finite description or a tolerance. This already happens when $m=2$: if an agent accepts $\lambda \be_1+(1-\lambda)\be_2$ if and only if $\lambda\ge a$, a query at $\lambda$ returns only whether $\lambda\ge a$. There are only countably many finite binary answer sequences, while $a$ ranges over a continuum.
Second, the grid gives every recovered pair a finite description, so the algorithms that use recovered agents have well-defined running times.
We use the per-agent bound $T_\eps= \mathcal{O}(m\log(1/\eps))$ of \citet{choo2026learnqueries} as a black box. The convention does not discretize the lottery: the output may be any point of $\Delta(S)$.

\section{Proof of Theorem~\ref{thm:min-viol-apx}}\label{app:proof-apx}
    We give a polynomial-time approximation-preserving reduction from \textsc{Minimum Vertex Cover} on cubic graphs, which is APX-hard \citep{AlimontiKann2000}. The idea is that vertex agents ask for probability at least $1/q$ on their vertex, while many copies of each edge agent forbid this much probability on both endpoints of an edge; the vertices that do not receive probability $1/q$ then form a vertex cover.

    Let $G=(V,E)$ be a cubic graph with vertices $V=\{v_1,\dots,v_q\}$. Set $m := q+1$, where $s_j$ corresponds to $v_j$ for $j \in [q]$ and $s_m$ is an extra alternative. Create $q$ vertex agents $a_1,\dots,a_q$ and, for each edge $e \in E$, $q^2$ copies $b_{e,1},\dots,b_{e,q^2}$ of an edge agent. Then $n = q + q^2 |E| = \mathcal{O}(q^3)$, since $|E| = 3q/2$. The utilities and thresholds are as follows.
    \begin{itemize}
        \item For each $j\in[q]$, the \emph{vertex agent} $a_j$ has threshold $1/q$, utility $1$ on $s_j$, and utility $0$ on every other alternative. Thus $U_{a_j}(\bx)=x_j$, and $a_j$ accepts $\bx$ if and only if $x_j\ge 1/q$.
        \item For each edge $e=\{v_j,v_{j'}\}\in E$, every copy $b_{e,r}$ has threshold $1-1/q$, utility $0$ on $s_j$ and $s_{j'}$, and utility $1$ on every other alternative. Thus $U_{b_{e,r}}(\bx)=1-x_j-x_{j'}$, and $b_{e,r}$ accepts $\bx$ if and only if $x_j+x_{j'}\le 1/q$.
    \end{itemize}
    Every utility vector lies in $\{0,1\}^m$ and takes both values $0$ and $1$, and every threshold lies in $(0,1)$ (a cubic graph has $q \ge 4$ vertices), so all pairs are normalized.
    Every agent is nondegenerate: $a_j$ accepts $\be_j$ and rejects $\be_m$, while each edge agent of $e$ accepts $\be_m$ and rejects the pure lotteries on the endpoints of $e$.
    Let $\mathrm{vc}(G)$ be the minimum size of a vertex cover of $G$. We show that $\OPT=\mathrm{vc}(G)$ for the constructed instance and that every lottery can be turned into a vertex cover of size at most its violation count.

    \textbf{$\OPT\le \mathrm{vc}(G)$.}
    Let $C\subseteq V$ be a vertex cover, and define $\bx^C\in\Delta(S)$ by $x^C_j=1/q$ if $v_j\notin C$, $x^C_j=0$ if $v_j\in C$, and $x^C_m:=1-(q-|C|)/q=|C|/q$.
    For every edge, at most one endpoint lies outside $C$, so the probabilities of its endpoints sum to at most $1/q$ and all its edge agents accept $\bx^C$. The vertex agent $a_j$ accepts $\bx^C$ if and only if $v_j\notin C$. Therefore $\viol(\bx^C)=|C|$, and minimizing over $C$ gives $\OPT\le \mathrm{vc}(G)$.

    \textbf{From lotteries to vertex covers.}
    Given a lottery $\bx$, let $C(\bx) := V$ if $x_j + x_{j'} > 1/q$ for some edge $\{v_j,v_{j'}\} \in E$, and $C(\bx) := \{v_j \in V : x_j < 1/q\}$ otherwise; $C(\bx)$ can be computed in polynomial time.
    In the first case, $\bx$ violates all $q^2$ copies of some edge agent, so $|C(\bx)| = q \le q^2 \le \viol(\bx)$, and $C(\bx)=V$ is a vertex cover.
    In the second case, no edge has both endpoints with probability at least $1/q$, since that would give $x_j+x_{j'}\ge 2/q$. Hence $C(\bx)$ is a vertex cover. No edge agent is violated, and the violated vertex agents are exactly those of the vertices in $C(\bx)$, so $|C(\bx)| = \viol(\bx)$.
    In both cases, $C(\bx)$ is a vertex cover with $|C(\bx)|\le\viol(\bx)$. In particular, $\mathrm{vc}(G)\le\OPT$, so $\OPT=\mathrm{vc}(G)$.

    \textbf{Approximation preservation.}
    If $\bx$ is an $\alpha$-approximate solution of the constructed instance, then $|C(\bx)| \le \viol(\bx) \le \alpha \cdot \OPT = \alpha \cdot \mathrm{vc}(G)$, so $C(\bx)$ is an $\alpha$-approximate vertex cover of $G$. Hence a PTAS for \textsc{Min-Viol} would give a PTAS for \textsc{Minimum Vertex Cover} on cubic graphs, which does not exist unless $\mathrm{P}=\mathrm{NP}$. Therefore \textsc{Min-Viol} is APX-hard, even with binary utilities.

\section{Omitted Proofs from Section~\ref{sec:full_preference_profile}}

\subsection{Proof of Theorem~\ref{thm:minviol_XP}}\label{app:proof-XP}
    The idea is that some optimal lottery is a vertex of the polytope of lotteries accepted by the agents that it satisfies, and every such vertex is the unique solution of a system of $m$ linear equations; enumerating these systems therefore finds an optimal lottery.
    For any $N' \subseteq N$, let $P(N') := \{\bx \in \Delta(S) : \langle u_i, \bx\rangle \ge \tau_i \text{ for all } i \in N'\} = \bigcap_{i\in N'} A_i$, with the convention $P(\varnothing) = \Delta(S)$.
    We first prove two lemmas.
    \begin{lemma} \label{lem:viol-vertex-opt}
        Let $\bx^*\in\arg\min_{\bx\in\Delta(S)} \viol(\bx)$ and $N^* := \{ i\in N : \bx^* \in A_i \}$.
        Then there exists an extreme point $\bar \bx$ of $P(N^*)$
        such that $\viol(\bar \bx)=\viol(\bx^*)$.
    \end{lemma}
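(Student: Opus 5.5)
The plan is to show that $P(N^*)$ is a nonempty polytope, so it has an extreme point, and then compare violation counts. First, $P(N^*)=\bigcap_{i\in N^*}A_i$ is the intersection of $\Delta(S)$ with finitely many closed halfspaces. It is therefore a polytope: it is bounded, being a subset of the simplex, and it is defined by finitely many linear inequalities together with the equation $\langle\mathbf 1,\bx\rangle=1$. It is nonempty because $\bx^*\in P(N^*)$ by the definition of $N^*$. A nonempty polytope is the convex hull of its finitely many extreme points (Minkowski / Krein--Milman in finite dimension). In particular it has at least one extreme point, and one could even choose it to be a vertex of the face containing $\bx^*$.

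Next, let $\bar\bx$ be any extreme point of $P(N^*)$. Since $\bar\bx\in P(N^*)$, every agent $i\in N^*$ accepts $\bar\bx$. Hence the agents rejecting $\bar\bx$ form a subset of $N\setminus N^*$, which gives $\viol(\bar\bx)\le n-|N^*|=\viol(\bx^*)$. On the other side, $\bx^*$ minimizes $\viol$ over $\Delta(S)$ and $\bar\bx\in\Delta(S)$, so $\viol(\bar\bx)\ge\viol(\bx^*)$. Together these give equality, which is the claim. (As a byproduct, $\bar\bx$ is rejected by exactly the agents in $N\setminus N^*$.)

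There is no real obstacle here. The only point needing care is the existence of an extreme point, which depends on $P(N^*)$ being nonempty and containing no line. Boundedness inside $\Delta(S)$ settles the second condition, and $\bx^*\in P(N^*)$ settles the first.

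For later use in Theorem~\ref{thm:minviol_XP}, I would also record the standard characterization of extreme points: $\bar\bx$ is the unique solution of $\langle\mathbf 1,\bx\rangle=1$ together with $m-1$ linearly independent constraints, each of the form $x_j=0$ or $\langle u_i,\bx\rangle=\tau_i$ for some $i\in N^*$, all tight at $\bar\bx$.
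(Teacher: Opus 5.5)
Your proof is correct and takes the same approach as the paper: $P(N^*)$ is nonempty because $\bx^*\in P(N^*)$, it has an extreme point because it is a compact polytope, and $\viol(\bar\bx)\le|N\setminus N^*|=\viol(\bx^*)$ then holds with equality by the optimality of $\bx^*$. Your closing remark on the tight-constraint characterization of extreme points is not needed for this lemma; the paper proves it separately as the next lemma.
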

    \begin{proof}
        Since $\bx^* \in P(N^*)$, the set $P(N^*)$ is nonempty. 
        By the definition of $A_i$, $P(N^*) = \Delta(S)\cap \bigcap_{i\in N^*}\{\bx\in\mathbb R^m:\langle u_i,\bx\rangle\ge \tau_i\}$, so $P(N^*)$ is a compact polytope.
        Thus, $P(N^*)$ has at least one extreme point;
        fix any extreme point $\bar \bx \in P(N^*)$.
        By definition of $P(N^*)$, we have $\langle u_i,\bar \bx \rangle \geq \tau_i$ for every $i\in N^*$,
        so $\bar \bx$ violates at most the agents in $N\setminus N^*$. Therefore, $\viol(\bar \bx) \leq |N\setminus N^*| = \viol(\bx^*)$.
        Since $\bx^*$ is optimal, equality must hold and $\bar \bx$ is also optimal.
    \end{proof}

    \begin{lemma} \label{lem:viol-active-set}
        Let $N'\subseteq N$ and let $\bx$ be an extreme point of $P(N')$.
        Then there exist subsets $J\subseteq [m]$ and $N'_0\subseteq N'$ with $|J|+|N'_0|=m-1$ such that $\bx$
        is the unique solution to the linear system $\sum_{j=1}^m z_j=1$, $z_j = 0$  for all $j \in J$, and $\langle u_i,\mathbf{z}\rangle=\tau_i$ for all $i \in N'_0$.
    \end{lemma}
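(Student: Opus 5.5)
The plan is to apply the standard characterization of extreme points of a polyhedron given by linear equalities and inequalities: a point is extreme exactly when the constraints tight at it have a coefficient matrix of full rank $m$. Write $P(N')$ as the set of $\mathbf{z}\in\mathbb{R}^m$ satisfying the equation $\sum_j z_j=1$, the inequalities $z_j\ge 0$ for $j\in[m]$, and $\langle u_i,\mathbf{z}\rangle\ge\tau_i$ for $i\in N'$. Let $J^{\mathrm{act}}:=\{j: x_j=0\}$ and $N'^{\mathrm{act}}:=\{i\in N':\langle u_i,\bx\rangle=\tau_i\}$ be the constraints tight at the extreme point $\bx$. The first step is to show that the vectors $\mathbf 1$, $\be_j$ for $j\in J^{\mathrm{act}}$, and $u_i$ for $i\in N'^{\mathrm{act}}$ together span $\mathbb{R}^m$.

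For this I would argue by contradiction. If they do not span, pick a nonzero $\mathbf d$ orthogonal to all of them. Then $\langle\mathbf 1,\mathbf d\rangle=0$, $d_j=0$ for $j\in J^{\mathrm{act}}$, and $\langle u_i,\mathbf d\rangle=0$ for every tight agent $i$. Every non-tight constraint holds strictly at $\bx$, and there are finitely many of them, so $\bx\pm t\mathbf d\in P(N')$ for all sufficiently small $t>0$. This contradicts extremality, since $\bx$ would then be the midpoint of two distinct points of $P(N')$.

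The second step extracts a basis containing $\mathbf 1$. Since the spanning family includes $\mathbf 1$, I extend $\{\mathbf 1\}$ to a basis of $\mathbb{R}^m$ using $m-1$ further vectors from the family, which is possible by the standard exchange/greedy argument. Let $J\subseteq J^{\mathrm{act}}$ index the chosen $\be_j$ and $N'_0\subseteq N'^{\mathrm{act}}$ the chosen $u_i$. Then $|J|+|N'_0|=m-1$. Taking distinct chosen vectors, with repeated equal $u_i$ counted once, ensures the count is exact.

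The resulting $m\times m$ system is nonsingular, so it has a unique solution. The point $\bx$ satisfies every equation, because each chosen constraint is tight at $\bx$, so $\bx$ is that unique solution. The only mildly delicate point is this last bookkeeping: choosing a basis that contains $\mathbf 1$ and counting exactly $m-1$ additional constraints. I do not expect a genuine obstacle; the main care needed is in the perturbation argument, where one must note that non-tight constraints have strict slack and finitely many constraints give a uniform $t$.
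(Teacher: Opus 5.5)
Your proposal is correct and follows essentially the same route as the paper. The paper also uses the two-sided perturbation $\bx\pm\alpha\mathbf d$, but it treats the tight constraints as functionals on the translation space $L=\{\mathbf d:\sum_j d_j=0\}$, shows that they span $L^*$, and picks a basis of $m-1$ of them. Your version, which works in $\mathbb{R}^m$ and extends $\{\mathbf 1\}$ to a basis, is the same argument, since restricting to $L$ removes exactly the span of $\mathbf 1$.
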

    
    \begin{proof}
        Let $L:=\{\mathbf{d}\in\mathbb{R}^m : \sum_{j=1}^m d_j = 0\}$ be the translation space of the affine hull of $\Delta(S)$, where $\dim(L)=m-1$.
        
        Let $T_0:=\{j\in [m] : x_j=0\}$ and $T_1:=\{i\in N' : \langle u_i,\bx\rangle=\tau_i\}$ be the sets of tight nonnegativity constraints and tight agent constraints at $\bx$, respectively. Consider the family $\mathcal{F} := \{ \mathbf{d}\mapsto d_j : j\in T_0\} \cup \{\mathbf{d}\mapsto \langle u_i, \mathbf{d}\rangle : i\in T_1\}$ of linear functionals on $L$.
        
        We claim that $\mathcal{F}$ spans $L^*$. 
        Indeed, suppose not. Then there
        exists a nonzero $\mathbf{d}\in L$ such that $d_j=0$ for all $j \in T_0$ and $\langle u_i, \mathbf{d}\rangle=0$ for all $i \in T_1$.
        
        For each $j \notin T_0$, we have $x_j > 0$; and for each $i \in N' \setminus T_1$, we have
        $\langle u_i, \bx\rangle > \tau_i$. Since there are only finitely many such indices, there exists
        $\alpha_0 > 0$ such that, for every $|\alpha| \le \alpha_0$, we have $x_j + \alpha d_j \ge 0$ for all $j \notin T_0$ and $\langle u_i, \bx + \alpha \mathbf{d}\rangle \ge \tau_i$ for all $i \in N' \setminus T_1$.
        For $j \in T_0$, we have $x_j + \alpha d_j = 0$. For $i \in T_1$, we have $\langle u_i, \bx + \alpha \mathbf{d}\rangle = \langle u_i, \bx\rangle + \alpha \langle u_i, \mathbf{d}\rangle = \tau_i$.
        Also, $\sum_{j=1}^m (x_j + \alpha d_j) = 1$ because $\mathbf{d} \in L$.
        Thus, $\bx + \alpha \mathbf{d} \in P(N')$ for every $|\alpha| \le \alpha_0$.
        Since $\mathbf{d} \neq 0$, this gives us a nontrivial line segment in $P(N')$ through $\bx$, contradicting the fact that $\bx$ is an extreme point.
        Therefore $\mathcal{F}$ spans $L^*$.
        Choose from $\mathcal{F}$ a basis $\{\mathbf{d}\mapsto d_j : j\in J\} \cup \{ \mathbf{d} \mapsto \langle u_i, \mathbf{d}\rangle : i\in N'_0\}$ of $L^*$, where $J\subseteq T_0$, $N'_0\subseteq T_1$, and $|J|+|N'_0|=m-1$.

        We now show that $\bx$ is the unique solution to the system $\sum_{j=1}^m z_j = 1$, $z_j=0$ for all $j \in J$, and $\langle u_i, \mathbf{z}\rangle=\tau_i$ for all $i\in N'_0$.
        Clearly $\bx$ satisfies this system.
        Conversely, if $\mathbf{z}$ is any solution, then
        $\mathbf{d} := \mathbf{z} - \bx \in L$, $d_j=0$ for all $j \in J$, and $\langle u_i, \mathbf{d} \rangle=0$ for all $i\in N'_0$.
        Since the chosen functionals form a basis of $L^*$, this forces $\mathbf{d}=\mathbf{0}$, and thus $\mathbf{z}=\bx$. Therefore, $\bx$ is the unique solution.
    \end{proof}
    
    Now, for each pair $(J, N_0)$ with $J \subseteq [m]$, $N_0 \subseteq N$, and
    $|J| + |N_0| = m - 1$, let $\mathcal{L}(J, N_0)$ denote the linear system $\sum_{j=1}^m x_j = 1$, $x_j = 0$ for all $j \in J$, and $\langle u_i, \bx\rangle = \tau_i$ for all $i \in N_0$.
    Lemma~\ref{lem:viol-active-set} shows that every extreme point of every polytope $P(N')$ is the unique
    solution of $\mathcal{L}(J,N_0)$ for some such pair $(J,N_0)$. This motivates the following algorithm (Algorithm~\ref{alg:minviol-full-profile}).
    
    \begin{algorithm}[t]
    \caption{Returns a lottery with the minimum number of violations.}
    \label{alg:minviol-full-profile}
    \textbf{Input:} The normalized pairs $(u_i,\tau_i)_{i\in N}$.\\
    \textbf{Output:} A lottery $\widehat\bx$ minimizing $\viol$.
    \begin{algorithmic}[1]
    \State $\widehat\bx \leftarrow \be_1$
    \For{each pair $(J,N_0)$ with $J \subseteq [m]$, $N_0 \subseteq N$, and $|J|+|N_0|=m-1$}
        \If{$\mathcal{L}(J,N_0)$ has a unique solution $\bx$, $\bx \in \Delta(S)$, and $\viol(\bx)<\viol(\widehat\bx)$}
            \State $\widehat\bx \leftarrow \bx$
        \EndIf
    \EndFor
    \State return $\widehat\bx$
    \end{algorithmic}
    \end{algorithm}
    
    We first prove correctness. Let $\bx^*$ be an optimal lottery and $N^* := \{ i\in N : \bx^* \in A_i \}$. By Lemma~\ref{lem:viol-vertex-opt}, some extreme point $\bar\bx$ of $P(N^*)$ is also optimal, and by Lemma~\ref{lem:viol-active-set}, $\bar\bx$ is the unique solution of $\mathcal{L}(J,N_0)$ for some $J\subseteq[m]$ and $N_0\subseteq N^*$ with $|J|+|N_0|=m-1$. When Algorithm~\ref{alg:minviol-full-profile} considers this pair, it finds $\bar\bx\in\Delta(S)$, so the returned lottery satisfies $\viol(\widehat\bx)\le\viol(\bar\bx)=\OPT$; that is, it is optimal.
    
    Next, we prove the running time of Algorithm~\ref{alg:minviol-full-profile}.
    For each $t \in \{0,\dots,m-1\}$, the number of pairs $(J,N_0)$ with $|N_0|=t$ and $|J|=m-1-t$ is $\binom{n}{t}\binom{m}{m-1-t}$, so, by Vandermonde's identity, the total number of pairs is $\sum_{t=0}^{m-1} \binom{n}{t}\binom{m}{m-1-t} = \binom{n+m}{m-1}$.
    For each pair, the system $\mathcal{L}(J,N_0)$ has exactly $m$ equations in $m$ unknowns, so testing consistency and uniqueness and solving it can be done in $\mathcal{O}(m^3)$ time by Gaussian elimination. 
    Evaluating $\viol(\bx)$ requires computing $\langle u_i, \bx\rangle$ for all $i \in N$, which takes $\mathcal{O}(nm)$ time. 
    Thus, the total running time is $\mathcal{O}(\binom{n+m}{m-1}(m^3 + nm))$.
    
    Finally, the decision version (is $\OPT$ at most a given number?) is solved by running Algorithm~\ref{alg:minviol-full-profile} and comparing the violation count of the returned lottery with that number.
    Since $\binom{n+m}{m-1} \le (n+m)^{m-1}$, the running time is polynomial for every constant $m$, and the decision problem is in XP
    when parameterized by $m$.

\subsection{Proof of Theorem~\ref{thm:few-alternatives}}\label{app:proof-few}

    The sampling step below is the argument of \citet{Castiglioni2023persuasion_maxFSrelaxation} for their relaxation of \textsc{Max-FS} over the simplex; we include it for completeness.
    Draw $J_1,\dots,J_{s_\eta}$ independently from $\bx$, i.e., $\Pr[J_t=j]=x_j$ for all $j \in [m]$ and $t \in [s_\eta]$, and let $\tilde{\bx} := \frac{1}{s_\eta}\sum_{t=1}^{s_\eta} \be_{J_t}$ be their empirical distribution.
    Then $\tilde{\bx}\in\Delta(S)$, and it uses at most $s_\eta$ alternatives.
    For a fixed agent $i$, the values $u_i(s_{J_1}),\dots,u_i(s_{J_{s_\eta}})$ are independent, lie in $[0,1]$, and have mean $\sum_{j=1}^m x_j u_i(s_j)=U_i(\bx)$, and $U_i(\tilde{\bx})$ is their average. Hoeffding's inequality gives $\Pr[U_i(\tilde{\bx}) < U_i(\bx)-\eta] \le e^{-2s_\eta\eta^2}$.
    By a union bound and the definition of $s_\eta$, $\Pr[\exists i \in N : U_i(\tilde \bx) < U_i(\bx) - \eta] \le ne^{-2s_\eta\eta^2} \le ne^{-\log(2n)} = 1/2$.
    Hence some realization of $\tilde \bx$ satisfies $U_i(\tilde \bx) \ge U_i(\bx) - \eta$ for all $i \in N$; fix it.

    We now bound the number of violations of $\tilde\bx$. If agent $i$ rejects $\tilde\bx$, then $U_i(\bx)<\tau_i+\eta$, since otherwise $U_i(\tilde \bx) \ge U_i(\bx) - \eta \ge \tau_i$; that is, agent $i$ either rejects $\bx$ or belongs to $B_\eta(\bx)$. Therefore $\viol(\tilde \bx) \le \viol(\bx) + |B_\eta(\bx)|$.
    If $\bx^*$ is optimal and $B_\eta(\bx^*) = \varnothing$, applying this with $\bx=\bx^*$ gives $\viol(\tilde \bx) \le \OPT$, so $\tilde\bx$ is an optimal lottery that uses at most $s_\eta$ alternatives.

\subsection{Proof of Proposition~\ref{prop:bounded-search}}\label{app:proof-bounded}

    For $T \subseteq [m]$, recall from Lemma~\ref{lem:normalized-representative} that $\Delta(T)$ is the set of lotteries that use only alternatives in $T$.
    Every lottery that uses at most $s$ alternatives lies in $\Delta(T)$ for $T = \{j \in [m] : x_j > 0\}$, so it suffices to minimize $\viol$ over $\Delta(T)$ for each $T$ with $1\le t:=|T|\le\min\{s,m\}$ and keep the best solution.

    Fix such a set $T$. If $t=1$, then $\Delta(T)$ consists of a single pure lottery, whose violation count can be computed in $\mathcal{O}(n)$ time. Now suppose $t \ge 2$. On $\Delta(T)$, agent $i$ accepts $\bx$ if and only if $\sum_{j \in T} u_i(s_j)\,x_j \ge \tau_i$, so minimizing $\viol$ over $\Delta(T)$ is the same problem with the $t$ alternatives in $T$. Some agents may accept or reject every lottery in $\Delta(T)$, but the proof of Theorem~\ref{thm:minviol_XP} uses only the simplex constraints, compactness, and the linear acceptance inequalities, so it applies with $m$ replaced by $t$. Hence an optimizer over $\Delta(T)$ can be computed in time
    $\mathcal{O}(\binom{n+t}{t-1}(t^3+nt))$, which also covers the case $t=1$.
    There are $\binom{m}{t}$ sets $T$ of size $t$, and for each $t \le s$ we have $\binom{m}{t}\le m^t$, $\binom{n+t}{t-1}\le (n+s)^{t-1}$, and $t^3+nt\le 2(n+s)^3$; hence all sets of size $t$ together take time $\mathcal{O}(m^t (n+s)^{t+2})$. Summing over the at most $s$ values of $t$ gives total time $\mathcal{O}(s\,m^s (n+s)^{s+2})$, which is $(m(n+s))^{\mathcal{O}(s)}$. The best lottery found over all such $T$ minimizes $\viol$ among all lotteries that use at most $s$ alternatives.

\subsection{Proof of Corollary~\ref{cor:few-alternatives-search}}\label{app:proof-cor}
    The running time is that of Proposition~\ref{prop:bounded-search} with $s=s_\eta$. Let $\widehat\bx$ be the returned lottery and let $\bx$ be any lottery. By Theorem~\ref{thm:few-alternatives}, some lottery $\tilde{\bx}$ that uses at most $s_\eta$ alternatives satisfies $\viol(\tilde{\bx}) \le \viol(\bx) + |B_\eta(\bx)|$. Since $\widehat\bx$ minimizes $\viol$ among all such lotteries, $\viol(\widehat\bx)\le\viol(\tilde\bx)\le\viol(\bx) + |B_\eta(\bx)|$.
    Taking $\bx$ to be an optimal lottery $\bx^*$ gives $\viol(\widehat\bx)\le\OPT+|B_\eta(\bx^*)|$, which equals $\OPT$ if $B_\eta(\bx^*)=\varnothing$.

\section{Omitted Material from Section~\ref{sec:parameterized}}\label{app:full-info}
This appendix proves the algorithm for $m=2$ mentioned in Section~\ref{sec:parameterized}, and Theorems~\ref{thm:fpt} and~\ref{thm:min-viol-w2-hard-k}.

\subsection{Exact optimization on a line segment: $m=2$}\label{app:full-info-m2}

\begin{proposition} \label{prop:viol-m2}
    When $m=2$, there is a deterministic algorithm that, given all normalized pairs
    $(u_i,\tau_i)_{i\in N}$, returns a lottery minimizing $\viol$ in time $\mathcal{O}(n\log n)$.
\end{proposition}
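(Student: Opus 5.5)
When $m=2$, I parametrize the simplex by a single scalar: write $\bx=\lambda\be_1+(1-\lambda)\be_2$ with $\lambda\in[0,1]$. For each agent, $U_i(\bx)=u_i(s_2)+\lambda\,(u_i(s_1)-u_i(s_2))$ is affine in $\lambda$, so $A_i$ corresponds to a closed subinterval $I_i\subseteq[0,1]$. By nondegeneracy, $u_i(s_1)\neq u_i(s_2)$, since otherwise $A_i$ would be empty or all of $\Delta(S)$.

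If $u_i(s_1)>u_i(s_2)$, then $I_i=[a_i,1]$ with
\[
a_i=\max\Bigl\{0,\ \frac{\tau_i-u_i(s_2)}{u_i(s_1)-u_i(s_2)}\Bigr\}.
\]
If $u_i(s_1)<u_i(s_2)$, then $I_i=[0,b_i]$ with the analogous $b_i$. Each endpoint takes $\mathcal{O}(1)$ time, so all intervals are computed in $\mathcal{O}(n)$ time. Minimizing $\viol$ is then the same as finding $\lambda\in[0,1]$ that lies in the largest number of the intervals $I_i$.

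The maximum coverage is attained at some left endpoint, or at $0$. The argument is that if $\lambda$ is optimal, moving it left to the largest left endpoint $\le\lambda$ (or to $0$ if there is none) keeps it inside every closed interval that contained it. Every interval here has the form $[a_i,1]$ or $[0,b_i]$, so the candidate points are the values $a_i$ together with $0$.

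To evaluate all candidates I sweep. I sort the multiset of events: value $a_i$ with type $+1$, and value $b_i$ with type $-1$, the latter processed just after position $b_i$. At $\lambda=0$ the coverage equals the number of intervals $[0,b_i]$, plus the intervals $[a_i,1]$ with $a_i=0$. Sweeping in increasing order, I apply all $+1$ events at a value before recording the coverage there, and apply $-1$ events for $b_i$ only once the sweep passes strictly beyond $b_i$. This closed-interval tie-breaking is the only delicate point. I order ties so that at equal values, openings come before closings, and I record the count after the openings.

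Sorting takes $\mathcal{O}(n\log n)$ and the sweep takes $\mathcal{O}(n)$. The algorithm outputs the lottery at the best recorded $\lambda$, and its violation count is $n$ minus the maximum coverage, which equals $\OPT$.
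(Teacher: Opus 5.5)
Your proposal is correct and takes essentially the paper's approach: parametrize lotteries by $\lambda$, reduce to finding a point of $[0,1]$ that lies in the most closed intervals, then sort the endpoints and sweep, processing openings before closings at ties. The differences are minor. The paper reads the intervals directly as $[\tau_i,1]$ or $[0,1-\tau_i]$, since the normalization forces $u_i\in\{(1,0),(0,1)\}$, and it evaluates the count at every distinct endpoint rather than only at left endpoints and $0$.
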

\begin{proof}
    Since $m=2$, every lottery can be written uniquely as $\bx_\lambda := (\lambda,1-\lambda)$ with $\lambda\in[0,1]$. With two alternatives, the normalization of Section~\ref{sec:prelims} gives $u_i=(1,0)$ or $u_i=(0,1)$ for every agent $i$. Hence the set $\Lambda_i := \{\lambda\in[0,1] : \bx_\lambda \in A_i\}$ is the interval $[\ell_i,r_i]$, where $(\ell_i,r_i)=(\tau_i,1)$ if $u_i=(1,0)$ and $(\ell_i,r_i)=(0,1-\tau_i)$ if $u_i=(0,1)$. Since $n-\viol(\bx_\lambda)=|\{i\in N : \lambda\in\Lambda_i\}|$ for every $\lambda\in[0,1]$, it suffices to find a point of $[0,1]$ contained in the largest number of these intervals, which Algorithm~\ref{alg:m=2_minviol} does by sorting the endpoints.

    \begin{algorithm}[t]
    
    \caption{Returns a lottery with the minimum number of violations when $m=2$.}
    \textbf{Input:} The normalized pairs $(u_i,\tau_i)_{i\in N}$, with $S = \{s_1,s_2\}$. \\
    \textbf{Output:} A lottery $\bx^*$ minimizing $\viol$. \par
    \begin{algorithmic}[1]
    \For{$i=1,\dots,n$}
        \If{$u_i=(1,0)$}
            \State Set $(\ell_i,r_i)\gets (\tau_i,1)$
        \Else
            \State Set $(\ell_i,r_i)\gets (0,1-\tau_i)$
        \EndIf
    \EndFor
    \State Form the list of tagged endpoints $(\ell_i,\mathtt{L})$ and $(r_i,\mathtt{R})$ for every $i\in N$, together with $(0,\mathtt{sent})$ and $(1,\mathtt{sent})$
    \State Sort this list by coordinate and scan it once to obtain the distinct coordinates
    $\lambda_1<\dots<\lambda_d$ and, for each $j\in[d]$, the counts
    $L_j:=|\{i\in N:\ell_i=\lambda_j\}|$ and $R_j:=|\{i\in N:r_i=\lambda_j\}|$
    \State $c\gets 0$, $c_{\max}\gets -1$, $\lambda^*\gets 0$
    \For{$j=1,\dots,d$}
        \State $c\gets c+L_j$
        \If{$c>c_{\max}$}
            \State $c_{\max}\gets c$
            \State $\lambda^*\gets \lambda_j$
        \EndIf
        \State $c\gets c-R_j$
    \EndFor
    \State \textbf{return} $\bx^*:= (\lambda^*, 1-\lambda^*)$
    \end{algorithmic}
    \label{alg:m=2_minviol}
    \end{algorithm}
    
    We now prove correctness. Let $\lambda_1<\dots<\lambda_d$ and $L_j,R_j$ be the quantities produced by Algorithm~\ref{alg:m=2_minviol}. 
    We claim that, for each $j\in[d]$, immediately before the update $c\leftarrow c+L_j$, the value of $c$ is $|\{i\in N : \ell_i < \lambda_j \le r_i\}|$.
    This is true for $j=1$ because $\lambda_1=0$ (by the sentinels), initially $c=0$, and no interval satisfies $\ell_i<0$. If the claim holds at the start of iteration $j$, then after the update $c\leftarrow c+L_j$ we have $c = |\{i\in N : \ell_i \le \lambda_j \le r_i\}| = |\{i\in N : \lambda_j\in\Lambda_i\}|$.
    Thus, the algorithm evaluates the overlap number at $\lambda_j$. After this evaluation, the update $c\leftarrow c-R_j$ removes exactly the intervals with $r_i=\lambda_j$, so the resulting value of $c$ is $|\{i\in N : \ell_i \le \lambda_j < r_i\}|$.
    For $j<d$, since there is no endpoint in $(\lambda_j,\lambda_{j+1})$, this equals $|\{i\in N : \ell_i < \lambda_{j+1} \le r_i\}|$, which is precisely the invariant at the start of iteration $j+1$.
    
    Now, let $\lambda\in[0,1]$. 
    If $\lambda=\lambda_j$ for some $j\in[d]$, its overlap number is explicitly evaluated by the algorithm. 
    Otherwise, $\lambda\in(\lambda_j,\lambda_{j+1})$ for some consecutive coordinates $\lambda_j<\lambda_{j+1}$. 
    Since no interval endpoint lies in $(\lambda_j,\lambda_{j+1})$, the set $\{i\in N : \lambda\in\Lambda_i\}$ is independent of the choice of $\lambda$ inside this open interval. 
    Moreover, if $\lambda\in\Lambda_i=[\ell_i,r_i]$, then $\ell_i\notin(\lambda_j,\lambda_{j+1})$ and $r_i\notin(\lambda_j,\lambda_{j+1})$, while $\ell_i\le \lambda\le r_i$. Hence $\ell_i\le \lambda_j$ and $r_i\ge \lambda_{j+1}$, so $\lambda_j\in\Lambda_i$. 
    Therefore, $|\{i\in N : \lambda\in\Lambda_i\}| \le |\{i\in N : \lambda_j\in\Lambda_i\}|$.
    So a maximum overlap point can always be chosen among $\lambda_1,\dots,\lambda_d$, and the algorithm returns some $\lambda^*\in[0,1]$ maximizing $|\{i\in N : \lambda\in\Lambda_i\}|$.
    
    Using $n-\viol(\bx_\lambda)=|\{i\in N:\lambda\in\Lambda_i\}|$, we conclude that $\bx^* := \bx_{\lambda^*}\in \arg\min_{\bx\in\Delta(S)} \viol(\bx)$.
    Finally, computing all values $\ell_i,r_i$ takes $\mathcal{O}(n)$ time, sorting the $2n+2$ tagged endpoints takes $\mathcal{O}(n\log n)$ time, the scan that forms $(\lambda_j,L_j,R_j)_{j=1}^d$ is linear, and the
    final sweep is linear. 
    Therefore, the total running time is $\mathcal{O}(n\log n)$.
\end{proof}

\subsection{Proof of Theorem~\ref{thm:fpt}}\label{app:full-info-fpt}
\begin{proof}
    We first show that an infeasible family of acceptable sets contains an infeasible subfamily of at most $m$ sets, and that such a subfamily can be found with few LP feasibility checks. We then show that branching on which agent of this subfamily to drop decides the problem. Throughout, we use the convention $\bigcap_{i\in\varnothing} A_i=\Delta(S)$.

    \begin{lemma} \label{lem:helly}
        Let $N'\subseteq N$. If $\bigcap_{i\in N'} A_i=\varnothing$, then there exists $W\subseteq N'$ with $|W|\le m$ such that $\bigcap_{i\in W} A_i=\varnothing$.
    \end{lemma}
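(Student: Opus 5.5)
The plan is to apply Helly's theorem inside the affine hull of the simplex. Let $H:=\{\bx\in\mathbb{R}^m:\sum_j x_j=1\}$; this is an affine space of dimension $m-1$. Each $A_i$ is a compact convex subset of $H$, since it is the intersection of the simplex with a closed halfspace. The difficulty is that Helly's theorem in dimension $m-1$ yields subfamilies of size $m$ only for convex sets in $H$ itself. If we applied it only to the family $\{A_i\}_{i\in N'}$, we would also need the simplex as a member, and that could push the count to $m+1$. So the simplex constraint has to be absorbed into the agents' sets.

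The key observation is that the sets $A_i$ already lie in $\Delta(S)$. Helly's theorem, in the form ``a finite family of convex sets in $\mathbb{R}^{d}$ in which every $d+1$ members intersect has nonempty intersection,'' applies directly to the finite family $\{A_i\}_{i\in N'}$ of convex subsets of $H\cong\mathbb{R}^{m-1}$, so no extra member is needed. We argue by contraposition. Suppose every subfamily of at most $m$ sets has a common point. Then every $m$ members intersect; if $|N'|<m$, the whole family is itself such a subfamily. Helly's theorem then gives $\bigcap_{i\in N'}A_i\neq\varnothing$. Therefore, if the full intersection is empty, some $W\subseteq N'$ with $|W|\le m$ already has empty intersection.

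Two edge cases need care. First, $N'=\varnothing$ cannot occur, because $\bigcap_{i\in\varnothing}A_i=\Delta(S)\neq\varnothing$. Second, some individual $A_i$ may be lower-dimensional, as in case (ii) of Lemma~\ref{lem:normalized-representative}. This does not matter, because Helly's theorem needs only convexity, and each $A_i$ is convex as the intersection of two convex sets. Finally, identifying $H$ with $\mathbb{R}^{m-1}$ through an affine bijection preserves both convexity and intersections. There is no real obstacle beyond making sure the simplex is not counted as an additional set.
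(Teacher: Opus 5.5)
Your proof is correct and matches the paper's: both apply Helly's theorem in the $(m-1)$-dimensional affine hull of $\Delta(S)$ directly to the convex sets $\{A_i\}_{i\in N'}$. Since these sets already lie in $\Delta(S)$, no extra member is needed, and subfamilies of size at most $(m-1)+1=m$ suffice.
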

    \begin{proof}
        Each $A_i$ is a convex subset of the affine hull of $\Delta(S)$, which has dimension $m-1$. By Helly's theorem \citep{Helly1923} in dimension $m-1$, if a finite family of convex subsets of that affine hull has empty intersection, then some subfamily of size at most $(m-1)+1=m$ already has empty intersection. Applying this to $\{A_i\}_{i\in N'}$ proves the claim.
    \end{proof}
    \begin{lemma} \label{lem:helly-compute}
        There is a deterministic procedure that, given $N'\subseteq N$ with $\bigcap_{i\in N'}A_i=\varnothing$,
        returns $W\subseteq N'$ such that $|W|\le m$ and $\bigcap_{i\in W}A_i=\varnothing$ using at most $|N'|$
        feasibility checks of linear programs over $\Delta(S)$.
    \end{lemma}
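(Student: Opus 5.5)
The procedure is a deletion filter. It maintains a current set $W\subseteq N'$, initialized to $W:=N'$, and keeps the invariant $\bigcap_{i\in W}A_i=\varnothing$, which holds at the start by hypothesis. Fix an arbitrary order $i_1,\dots,i_{|N'|}$ of $N'$ and process the agents one at a time. At step $t$, solve one LP feasibility problem: does some $\bx\in\Delta(S)$ satisfy $\langle u_i,\bx\rangle\ge\tau_i$ for all $i\in W\setminus\{i_t\}$? If this system is infeasible, set $W:=W\setminus\{i_t\}$; otherwise keep $i_t$. The invariant is preserved because an agent is removed only when the remaining sets still have empty intersection. The procedure makes exactly $|N'|$ checks, each an LP over $\Delta(S)$ with explicit constraints.

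It remains to show $|W|\le m$ at the end. I will show that the final $W$ is \emph{minimal}, meaning that for every $i\in W$ we have $\bigcap_{j\in W\setminus\{i\}}A_j\neq\varnothing$. Take $i=i_t\in W$. When $i_t$ was tested, the current set $W_t$ contained the final $W$, since sets only shrink. The test found $\bigcap_{j\in W_t\setminus\{i_t\}}A_j\neq\varnothing$, because otherwise $i_t$ would have been removed. Since $W\setminus\{i_t\}\subseteq W_t\setminus\{i_t\}$, intersecting fewer sets gives a superset, so $\bigcap_{j\in W\setminus\{i_t\}}A_j\neq\varnothing$ as well.

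Now apply Lemma~\ref{lem:helly} to $W$: some $W'\subseteq W$ with $|W'|\le m$ has $\bigcap_{i\in W'}A_i=\varnothing$. If $W'\subsetneq W$, pick $i\in W\setminus W'$. Then $W'\subseteq W\setminus\{i\}$, so $\bigcap_{j\in W\setminus\{i\}}A_j\subseteq\bigcap_{j\in W'}A_j=\varnothing$, which contradicts minimality. Hence $W'=W$ and $|W|\le m$.

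The only subtle step is the monotonicity argument behind minimality, namely that a test passed early against the larger set $W_t$ still holds for the final, smaller $W$. The Helly step needs no work, since it is Lemma~\ref{lem:helly}.
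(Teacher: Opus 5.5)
Your proposal is correct and follows essentially the same approach as the paper: a deletion filter that keeps the invariant of empty intersection and uses at most $|N'|$ LP checks. Like the paper, you use the monotonicity argument to show the final $W$ is inclusion-minimal infeasible, and then invoke Lemma~\ref{lem:helly} to conclude $|W|\le m$.
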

    
    \begin{proof}
        Initialize $W \leftarrow N'$, and fix an arbitrary ordering
        $i_1,\dots,i_{|N'|}$ of the elements of $N'$.
        For $t = 1, \dots, |N'|$, if $\bigcap_{\ell\in W\setminus\{i_t\}} A_\ell = \varnothing$, then set $W \leftarrow W\setminus\{i_t\}$.
        
        This procedure performs at most $|N'|$ feasibility checks, one for each $i_t$.
        Moreover, throughout the procedure we maintain $\bigcap_{\ell \in W} A_\ell = \varnothing$, because we delete $i_t$ only when the remaining family is still infeasible.
        
        We claim that the final set $W$ is inclusion-minimal infeasible.
        Indeed, fix any $i_t$ that remains in the final set $W$.
        When $i_t$ was processed, it was not deleted, so at that moment the current set, say $W'$, satisfied $\bigcap_{\ell \in W' \setminus \{i_t\}} A_\ell \neq \varnothing$.
        Afterwards the algorithm only deletes more agents, so $W \setminus \{i_t\} \subseteq W' \setminus \{i_t\}$.
        Feasibility is preserved when constraints are removed, and thus $\bigcap_{\ell \in W \setminus \{i_t\}} A_\ell \neq \varnothing$.
        Therefore, $W$ is inclusion-minimal infeasible.
        
        By Lemma~\ref{lem:helly}, every infeasible family contains an infeasible subfamily of size at most $m$.
        If $|W| > m$, then applying Lemma~\ref{lem:helly} to the family $\{A_i\}_{i \in W}$ would give us a proper infeasible subfamily of $W$, contradicting the inclusion-minimality of $W$.
        Therefore $|W| \le m$.
    \end{proof}

    We run Algorithm~\ref{alg:hellybranch} on input $(N,v)$.

    \begin{algorithm}[t]
    \caption{\texttt{HellyBranch}$(N',v)$ (deletes up to $v$ agents from $N'$)}
    \label{alg:hellybranch}
    \textbf{Input:} Index set $N'\subseteq N$, integer $v\ge 0$. \\
    \textbf{Output:} Either \textsc{NO}, or a lottery $\bx \in \Delta(S)$ such that there exists $V \subseteq N'$ with $|V| \le v$ and $\bx \in \bigcap_{i \in N' \setminus V} A_i$.
    \begin{algorithmic}[1]
    \If{$\bigcap_{i\in N'} A_i \neq \varnothing$}
    \State return any $\bx\in\bigcap_{i\in N'} A_i$ (found by LP feasibility)
    \EndIf
    \If{$v=0$}
    \State return \textsc{NO}
    \EndIf
    \State Compute $W\subseteq N'$ with $|W|\le m$ and $\bigcap_{i\in W} A_i = \varnothing$ (Lemma~\ref{lem:helly-compute})
    \For{each $i\in W$}
    \State $\bx \leftarrow \texttt{HellyBranch}(N'\setminus\{i\},v-1)$
    \If{$\bx\neq \textsc{NO}$}
    \State return $\bx$
    \EndIf
    \EndFor
    \State return \textsc{NO}
    \end{algorithmic}
    \end{algorithm}

    We prove the following stronger recursive claim: for every $N' \subseteq N$ and every
    integer $r \ge 0$, the call \texttt{HellyBranch}$(N',r)$ returns \textsc{NO} if and only if there is no set
    $V \subseteq N'$ with $|V| \le r$ such that $\bigcap_{i \in N' \setminus V} A_i \neq \varnothing$.
    Moreover, whenever \texttt{HellyBranch}$(N',r)$ returns a lottery $\bx$, there exists such a set
    $V \subseteq N'$ with $\bx \in \bigcap_{i \in N' \setminus V} A_i.$
    We prove the claim by induction on $r$.
    
    If $r=0$, then the algorithm returns a lottery exactly when $\bigcap_{i \in N'} A_i \neq \varnothing$ (Line~1), and otherwise returns \textsc{NO} (Lines~4--5). 
    This is precisely the claim with $V = \varnothing$.
    
    Now let $r \ge 1$ and assume the claim holds for $r-1$.
    If $\bigcap_{i \in N'} A_i \neq \varnothing$,
    then Line~2 returns some $\bx \in \bigcap_{i \in N'} A_i$,
    so the claim holds with $V=\varnothing$.
    
    Suppose instead that $\bigcap_{i \in N'} A_i = \varnothing$.
    By Lemma~\ref{lem:helly-compute}, the algorithm computes a set $W \subseteq N'$ such that $|W| \le m$ and $\bigcap_{i \in W} A_i = \varnothing$.
    
    If, for some $i \in W$, the recursive call \texttt{HellyBranch}$(N' \setminus \{i\}, r-1)$
    returns a lottery $\bx$, then by the induction hypothesis there exists
    $V' \subseteq N' \setminus \{i\}$ with $|V'| \le r-1$ such that $\bx \in \bigcap_{t \in N' \setminus (V' \cup \{i\})} A_t$.
    Thus, the current call returns a valid lottery with deleted set
    $V := V' \cup \{i\}$, and $|V| \le r$.
    
    Conversely, suppose there exists $V^* \subseteq N'$ with $|V^*| \le r$ such that $\bigcap_{i \in N' \setminus V^*} A_i \neq \varnothing$.
    Since $\bigcap_{i \in W} A_i = \varnothing$, we must have $V^* \cap W \neq \varnothing$; otherwise $W \subseteq N' \setminus V^*$, which would imply $\bigcap_{i \in N' \setminus V^*} A_i
        \subseteq
        \bigcap_{i \in W} A_i
        =
        \varnothing$,  a contradiction. 
    Fix any $i^* \in V^* \cap W$.
    Then $V^* \setminus \{i^*\} \subseteq N' \setminus \{i^*\}$ and $|V^* \setminus \{i^*\}| \le r-1$, and deleting $V^* \setminus \{i^*\}$ from $N' \setminus \{i^*\}$ leaves exactly $N' \setminus V^*$, whose acceptable sets have a common point.
    By the induction hypothesis, the recursive call on
    $(N' \setminus \{i^*\}, r-1)$ returns a lottery, and since the algorithm tries every
    $i \in W$, the current call also returns a lottery.
    This proves the recursive claim. Applying it to the top-level call $(N,v)$, we obtain the desired correctness.
    
    If there exists a lottery $\bx \in \Delta(S)$ with $\viol(\bx) \leq v$, define $V := \{ i \in N : \langle u_i, \bx \rangle < \tau_i \}$.
    By the definition of $A_i$, we have $|V| = \viol(\bx) \le v$ and $\bx \in \bigcap_{i \in N \setminus V} A_i$.
    Thus, by the recursive claim, $\texttt{HellyBranch}(N,v)$ does not return \textsc{NO}.
    
    Conversely, if $\texttt{HellyBranch}(N,v)$ returns a lottery $\bx$, then by the recursive claim there exists
    $V \subseteq N$ with $|V| \le v$ such that $\bx \in \bigcap_{i \in N \setminus V} A_i$.
    Therefore, every violated agent lies in $V$, and so $\viol(\bx) \leq |V| \le v$.
    Thus Algorithm~\ref{alg:hellybranch} returns \textsc{NO} if and only if there is no lottery $\bx \in \Delta(S)$ with $\viol(\bx) \le v$, and whenever it returns a lottery, that lottery has at most $v$ violations.
    
    For the running time, each internal node has at most $m$ children and the recursion depth is at most $v$, so the recursion tree has at most $1+m+\dots+m^v = \mathcal{O}(m^v)$ nodes. 
    At each node, Line~1 solves one LP feasibility problem over $\Delta(S)$; if the current family is infeasible and $r>0$, Lemma~\ref{lem:helly-compute} performs at most $|N'|\le n$ additional LP feasibility checks. 
    Each such LP has $m$ variables and at most $m+|N'|+1\le m+n+1$ linear constraints.
    Thus the algorithm performs $\mathcal{O}(m^v n)$ LP feasibility checks, each of which takes polynomial time, so the problem is fixed-parameter tractable when parameterized by $(v,m)$.
\end{proof}

\subsection{Proof of Theorem~\ref{thm:min-viol-w2-hard-k}}\label{app:full-info-w2}
\begin{proof}
    We give a parameterized reduction from \textsc{Hitting Set}. Let
    $(q,\mathcal{F},v)$ be an instance, where $\mathcal{F}=\{F_1,\dots,F_t\}$ is a family of subsets of $[q]$, and the question is whether there exists a set $C\subseteq [q]$ with $|C|\le v$ such that $C\cap F_\ell\neq \varnothing$ for every
    $\ell\in\{1,\dots,t\}$.
    
    We may assume that $q\ge 1$, that every $F_\ell$ is nonempty, and that $v\le q$: if some $F_\ell$ is empty, the instance is a no-instance, and otherwise, if $q=0$ or $v>q$, it is a yes-instance (take $C=[q]$); in these cases we output a fixed no-instance or yes-instance of \textsc{Min-Viol} with parameter $0$.
    
    From $(q,\mathcal{F},v)$, construct an instance of \textsc{Min-Viol} as follows. 
    Let the set of alternatives be $S=\{s_1,\dots,s_q,s_{q+1}\}$, where $s_{q+1}$ is an extra alternative.
    
    For each $j\in [q]$, create an element agent $a_j$ with threshold $\tau_{a_j}=1/q$ and utility vector $u_{a_j}(s_j)=1$, $u_{a_j}(s_\ell)=0$ for every $\ell \neq j$.
    For any lottery $\bx=(x_1,\dots,x_{q+1}) \in \Delta(S)$, we have $U_{a_j}(\bx)=x_j$.
    Thus, agent $a_j$ accepts $\bx$ if and only if $x_j \ge 1/q$.
    
    Next, for each set $F_\ell\in\mathcal{F}$ and each copy index $r\in\{1,\dots,v+1\}$, create a set agent $b_{\ell,r}$ with threshold $\tau_{b_{\ell,r}} = 1-(|F_\ell|-1)/q$ and utilities $u_{b_{\ell,r}}(s_j)=0$ for $j\in F_\ell$ and $u_{b_{\ell,r}}(s_j)=1$ for $j\in [q+1]\setminus F_\ell$.
    In particular, $u_{b_{\ell,r}}(s_{q+1})=1$. Therefore, for every
    $\bx\in\Delta(S)$, $U_{b_{\ell,r}}(\bx) = \sum_{j\notin F_\ell} x_j   = 1-\sum_{j\in F_\ell} x_j$.
    Thus $b_{\ell,r}$ accepts $\bx$ if and only if $1-\sum_{j\in F_\ell} x_j \ge 1-(|F_\ell|-1)/q$,
    that is, if and only if $\sum_{j\in F_\ell} x_j \le (|F_\ell|-1)/q$.
    
    All utilities in the construction are binary, and all pairs are normalized: every utility vector takes both values $0$ and $1$, and every threshold lies in $(0,1]$. When the threshold is $1$, the utility vector is precisely the indicator vector specified in case~(ii) of Lemma~\ref{lem:normalized-representative}. Moreover, the constructed profile satisfies the nondegeneracy assumption: each element agent $a_j$ accepts $\be_j$ and rejects $\be_{q+1}$; and, since $F_\ell \neq \varnothing$ implies $\tau_{b_{\ell,r}} = (q-|F_\ell|+1)/q \in (0,1]$, each set agent $b_{\ell,r}$ accepts $\be_{q+1}$ and rejects every $\be_j$ with $j \in F_\ell$. 
    
    We now prove correctness.    
    Assume first that the \textsc{Hitting Set} instance is a yes-instance, and let $C\subseteq [q]$ be a hitting set with $|C|\le v$. Define a lottery $\bx^C \in \Delta(S)$ by $x^C_j = 0$ for $j \in C$, $x^C_j = 1/q$ for $j \in [q] \setminus C$, and $x^C_{q+1} = 1 - \sum_{j=1}^q x^C_j = |C|/q$.
    For every $j \in [q]$, the element agent $a_j$ is violated if and only if $j \in C$, since $U_{a_j}(\bx^C)=x^C_j$ and $\tau_{a_j}=1/q$.
    Thus, exactly $|C|$ element agents are violated.
    
    Now fix any $\ell \in \{1,\dots,t\}$. Since $C$ is a hitting set, $C \cap F_\ell \neq \varnothing$, so at most $|F_\ell|-1$ elements of $F_\ell$ belong to $[q] \setminus C$. 
    Since $x_j^C = 1/q$ for $j \in [q] \setminus C$ and $x_j^C = 0$ for $j \in C$, it follows that $\sum_{j\in F_\ell} x_j^C \le (|F_\ell|-1)/q$.
    Hence every copy $b_{\ell,r}$ accepts $\bx^C$. Thus no set agent is violated.
    Therefore, $\viol(\bx^C)=|C|\le v$, so the constructed \textsc{Min-Viol} instance is a yes-instance.
    
    Conversely, suppose there exists a lottery $\bx\in\Delta(S)$ such that $\viol(\bx)\le v$.
    We first show that $\bx$ violates no set agent. 
    Indeed, if $\bx$ violated some $b_{\ell,r}$, then it would violate all $v+1$ identical copies $b_{\ell,1},\dots,b_{\ell,v+1}$, which would imply $\viol(\bx)\ge v+1$, contradicting $\viol(\bx) \le v$.
    
    Thus, every set agent accepts $\bx$, and so for every $\ell\in\{1,\dots,t\}$ we have $\sum_{j\in F_\ell} x_j \le (|F_\ell|-1)/q$.
    Define $C(\bx) := \{ j \in [q] : x_j < 1/q \}$.
    We claim that $C(\bx)$ is a hitting set. 
    Suppose not. 
    Then there exists some $\ell \in \{1,\dots,t\}$ such that $C(\bx) \cap F_\ell = \varnothing$.
    Thus, $x_j \ge 1/q$ for every $j \in F_\ell$, and therefore $\sum_{j \in F_\ell} x_j \ge |F_\ell|/q > (|F_\ell|-1)/q$, contradicting the acceptance of the set agents associated with $F_\ell$. 
    Thus $C(\bx)$ intersects every set $F_\ell$, so it is a hitting set.
    
    Finally, an element agent $a_j$ is violated if and only if $x_j < 1/q$, that is, if and only if $j \in C(\bx)$. 
    Since no set agent is violated, it
    follows that $|C(\bx)|=\viol(\bx)\le v$.
    So $C(\bx)$ is a hitting set of size at most $v$.
    
    We have shown that in the case $q\ge 1$, every $F_\ell\neq\varnothing$, and $v\le q$, the instance $(q,\mathcal{F},v)$ is a yes-instance of \textsc{Hitting Set} if and only if the constructed \textsc{Min-Viol} instance is a yes-instance. In this branch the construction uses $q+1$ alternatives and $q+t(v+1)$ agents, and since $v\le q$, its size is polynomial in the size of $(q,\mathcal{F},v)$. The output parameter is $v$, or $0$ in the trivial cases above. Therefore this is a parameterized reduction from \textsc{Hitting Set} parameterized by $v$ to the decision version of \textsc{Min-Viol}. Since \textsc{Hitting Set} is W[2]-hard parameterized by $v$, the latter problem is also W[2]-hard when parameterized by $v$  alone.
    
    The same hardness immediately carries over to the search version: any
    algorithm that returns a lottery $\bx$ with $\viol(\bx)\leq v$ whenever one exists also decides the decision problem.
\end{proof}

\section{Omitted Proofs from Section~\ref{sec:statistical}}

Throughout this appendix, $i_1,\dots,i_k$ are sampled independently from $\mathrm{Unif}(N)$. For a lottery $\bx$, let $R_\bx := \{i \in N : \bx \notin A_i\}$ be the set of agents who reject $\bx$, let $r(\bx) := |R_\bx|/n = \viol(\bx)/n$, and let $\widehat r(\bx) := \frac1k \sum_{t=1}^k \mathbf{1}\{i_t\in R_\bx\}$ be the fraction of sampled agents who reject $\bx$, counted with multiplicity. The lotteries $\widehat\bx$ in Theorems~\ref{thm:budget-additive} and~\ref{thm:budget-relative} are exactly the minimizers of $\widehat r$. Both $r$ and $\widehat r$ take finitely many values, so all minima below are attained.

\subsection{Proof of Theorem~\ref{thm:budget-additive}}\label{app:proof-additive}
    Since $\sum_{j=1}^m x_j = 1$ for every lottery, we have $i \in R_\bx$ if and only if $\langle u_i, \bx\rangle < \tau_i$, that is, if and only if $\sum_{j=1}^m (u_i(s_j)-\tau_i)x_j < 0$.
    If we identify agent $i$ with the point $(u_i(s_1)-\tau_i,\dots,u_i(s_m)-\tau_i)\in\mathbb{R}^m$, then $R_\bx$ is the set of agents whose points lie in the open halfspace $\{\mathbf{y}\in\mathbb{R}^m : \langle \mathbf{y},\bx\rangle < 0\}$. Hence the family $\{R_\bx:\bx\in\Delta(S)\}$ is induced on $N$ by a subfamily of the open halfspaces of $\mathbb{R}^m$ whose boundary passes through the origin, and its VC dimension is at most $m$.
    Since $r(\bx)=\Pr_{i\sim \mathrm{Unif}(N)}[i\in R_\bx]$, the VC inequality \citep{Vapnik1998,AnthonyBartlett1999} together with Sauer's lemma (using $k\ge m$) gives, with probability at least $1-\delta$, $\sup_{\bx\in\Delta(S)} |\widehat r(\bx)-r(\bx)| \le 2\sqrt{2(m\log(2ek/m)+\log(4/\delta))/k} \le \alpha$, where $\alpha:=4\sqrt{(m\log(2ek/m)+\log(8/\delta))/k}$.
    On this event, if $\bx^*$ minimizes $r$ and $\widehat\bx$ minimizes $\widehat r$, then $r(\widehat \bx) \le \widehat r(\widehat \bx) + \alpha \le \widehat r(\bx^*) + \alpha \le r(\bx^*) + 2\alpha$.
    Multiplying by $n$ gives $\viol(\widehat\bx)\le\OPT+2\alpha n$, which is the claim.

\subsection{Proof of Theorem~\ref{thm:budget-relative}}\label{app:relative-sampling}
    The proof follows that of Theorem~\ref{thm:budget-additive}, with the VC inequality replaced by a bound on the relative error, which is much smaller for lotteries rejected by few agents. By the proof of Theorem~\ref{thm:budget-additive}, the family $\{R_\bx:\bx\in\Delta(S)\}$ has VC dimension at most $m$.
    Let $\beta:=\zeta/20$. By the relative-error bound of \citet[Theorem~5]{LiLongSrinivasan2001}, applied with parameter $\rho$ and accuracy $\beta$, there is an absolute constant $C$ such that, if $k\ge C(m\log(1/\rho)+\log(1/\delta))/(\zeta^2\rho)$, then with probability at least $1-\delta$, $|\widehat r(\bx)-r(\bx)|\le\beta(\widehat r(\bx)+r(\bx)+\rho)$ for all $\bx\in\Delta(S)$.
    Fix this event. Rearranging the two directions of this inequality gives $(1-\beta)\,r(\bx)\le(1+\beta)\,\widehat r(\bx)+\beta\rho$ and $(1-\beta)\,\widehat r(\bx)\le(1+\beta)\,r(\bx)+\beta\rho$ for every lottery $\bx$.
    Let $\bx^*$ minimize $r$, so that $r(\bx^*)=\OPT/n$. Since $\widehat\bx$ minimizes $\widehat r$,
    \[
    (1-\beta)\,r(\widehat\bx)
    \le(1+\beta)\,\widehat r(\widehat\bx)+\beta\rho
    \le(1+\beta)\,\widehat r(\bx^*)+\beta\rho
    \le\frac{(1+\beta)^2}{1-\beta}\,r(\bx^*)+\frac{1+\beta}{1-\beta}\,\beta\rho+\beta\rho,
    \]
    and dividing by $1-\beta$ gives $r(\widehat\bx)\le((1+\beta)/(1-\beta))^2\, r(\bx^*)+2\beta\rho/(1-\beta)^2$.
    For $0<\beta\le 1/20$, we have $((1+\beta)/(1-\beta))^2\le 1+10\beta$ (the left-hand side is convex in $\beta$, equals $1$ at $\beta=0$, and is below $1.23$ at $\beta=1/20$) and, since $(1-\beta)^2\ge 9/10$, $2\beta/(1-\beta)^2\le 10\beta$. Since $10\beta=\zeta/2$, multiplying by $n$ gives $\viol(\widehat\bx)\le(1+\zeta/2)\OPT+(\zeta/2)\rho n$.
    If $\OPT\ge\rho n$, then $(\zeta/2)\rho n\le(\zeta/2)\OPT$, so $\viol(\widehat\bx)\le(1+\zeta)\OPT$.

\subsection{Proof of Theorem~\ref{thm:shortfall-main}}\label{app:proof-lp}

    We show that the LP penalty is an upper bound on the violation indicator (Lemma~\ref{lem:shortfall-domination}), that its sample average is uniformly close to its population average (Lemma~\ref{lem:shortfall-uniform}), and that its population average for any lottery $\bx$ is at most $((1+1/\eta)\viol(\bx)+|B_\eta(\bx)|)/n$.
    For an agent $i$ and a lottery $\bx$, let $\ell(i,\bx):=\max\{0,(\tau_i+\eta-U_i(\bx))/\eta\}$ be the penalty of Section~\ref{subsec:shortfall-lp}, and let $\widehat L(\bx):=\frac{1}{k}\sum_{t=1}^k \ell(i_t,\bx)$ and $L(\bx):=\frac{1}{n}\sum_{i\in N}\ell(i,\bx)$ be its sample and population averages.

    \begin{lemma} \label{lem:shortfall-domination}
        For every agent $i$ and lottery $\bx$, $\mathbf{1}\{U_i(\bx)<\tau_i\}\le \ell(i,\bx)$.
        Moreover, the optimal value of \eqref{eq:shortfall-lp} is $\min_{\bx\in\Delta(S)} \widehat L(\bx)$, and the $\bx$-part of every optimal solution of \eqref{eq:shortfall-lp} minimizes $\widehat L$.
    \end{lemma}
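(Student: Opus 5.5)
The first claim is a two-case check on the sign of the margin $U_i(\bx)-\tau_i$. If $U_i(\bx)\ge\tau_i$, the indicator is $0$, and $\ell(i,\bx)\ge 0$ holds because $\ell$ is a maximum with $0$. If $U_i(\bx)<\tau_i$, the indicator is $1$. In that case $\tau_i+\eta-U_i(\bx)>\eta$, so $(\tau_i+\eta-U_i(\bx))/\eta>1$, and hence $\ell(i,\bx)>1$. In both cases $\mathbf 1\{U_i(\bx)<\tau_i\}\le\ell(i,\bx)$.

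For the LP part, I will use the standard epigraph argument. Fix any feasible solution $(\bx,z)$ of \eqref{eq:shortfall-lp}. The two constraints on $z_t$ give $z_t\ge\max\{0,(\tau_{i_t}+\eta-\langle u_{i_t},\bx\rangle)/\eta\}=\ell(i_t,\bx)$, so the objective is at least $\widehat L(\bx)$. Conversely, for every $\bx\in\Delta(S)$, the choice $z_t:=\ell(i_t,\bx)$ is feasible and attains objective value exactly $\widehat L(\bx)$. These two facts together show that the LP optimum equals $\inf_{\bx\in\Delta(S)}\widehat L(\bx)$.

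It remains to show that this infimum is attained. The function $\widehat L$ is a finite average of maxima of affine functions, so it is continuous, and $\Delta(S)$ is compact; hence the minimum exists. The LP is also feasible and bounded below by $0$, so it has an optimal solution.

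Now let $(\widehat\bx,\widehat z)$ be any optimal solution. Then
\[
\widehat L(\widehat\bx)\le \tfrac1k\textstyle\sum_t \widehat z_t=\min_{\bx}\widehat L(\bx),
\]
so $\widehat\bx$ minimizes $\widehat L$.

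If repeated sampled agents are merged into one constraint with a weight, the same argument applies with weighted sums. There is no real obstacle in this lemma; the only point needing care is that $\ell$ is strictly greater than $1$, not merely at least $1$, when $U_i(\bx)<\tau_i$, and the bound only needs $\ge 1$ anyway.
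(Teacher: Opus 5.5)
Your proof is correct and follows the paper's argument: the same two-case check on the sign of $U_i(\bx)-\tau_i$ for the indicator bound, and the same observation that for a fixed $\bx$ the smallest feasible $z_t$ is $\ell(i_t,\bx)$. Your continuity-and-compactness step showing that the minimum of $\widehat L$ is attained is a detail the paper leaves implicit.
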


    \begin{proof}
        If $U_i(\bx)\ge \tau_i$, the indicator is $0\le \ell(i,\bx)$. If $U_i(\bx)<\tau_i$, then $\tau_i+\eta-U_i(\bx)>\eta$, so $\ell(i,\bx)>1$.
        For a fixed $\bx$, the smallest feasible value of $z_t$ in \eqref{eq:shortfall-lp} is $\ell(i_t,\bx)$, so the smallest objective value with this $\bx$ is $\widehat L(\bx)$. Hence the optimal value of \eqref{eq:shortfall-lp} is $\min_{\bx}\widehat L(\bx)$, and if $(\bx,z_1,\dots,z_k)$ is optimal, then $\widehat L(\bx)\le\frac1k\sum_t z_t=\min_{\bx'}\widehat L(\bx')$.
    \end{proof}

    \begin{lemma} \label{lem:shortfall-uniform}
        There is an absolute constant $C>0$ such that, for every integer $k\ge 1$, $\delta\in(0,1)$, and $\eta\in(0,1]$, with probability at least $1-\delta$, $\sup_{\bx\in\Delta(S)} |\widehat L(\bx)-L(\bx)| \le (C/\eta)\sqrt{(\log m+\log(1/\delta))/k}$.
    \end{lemma}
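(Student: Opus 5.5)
The plan is a standard Rademacher-complexity uniform convergence argument, as sketched after Theorem~\ref{thm:shortfall-main}.

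\textbf{Step 1: a bounded, Lipschitz loss class.} Write $\ell(i,\bx)=\phi(U_i(\bx)-\tau_i)$, where $\phi(t):=\max\{0,(\eta-t)/\eta\}$. The map $\phi$ is $(1/\eta)$-Lipschitz. Since $u_i\in[0,1]^m$ and $\tau_i\in(0,1]$, we have $U_i(\bx)-\tau_i\in[-1,1]$, so $\ell\in[0,1+1/\eta]\subseteq[0,2/\eta]$. Using $\sum_j x_j=1$,
\[
U_i(\bx)-\tau_i=\sum_{j=1}^m x_j\, g_j(i), \qquad g_j(i):=u_i(s_j)-\tau_i\in[-1,1].
\]
Hence the class $\mathcal{G}:=\{i\mapsto U_i(\bx)-\tau_i:\bx\in\Delta(S)\}$ is exactly the convex hull of the $m$ functions $g_1,\dots,g_m$.

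\textbf{Step 2: bound the Rademacher complexity.} Condition on the sample $i_1,\dots,i_k$. Two standard facts apply.
\begin{itemize}
\item The empirical Rademacher complexity of a convex hull equals that of its generating set, since a linear functional over a simplex is maximized at a vertex.
\item By Massart's finite-class lemma, $m$ functions bounded by $1$ give empirical Rademacher complexity at most $\sqrt{2\log m/k}$. (For $m\ge 2$, $\log m\ge\log 2$, so no degenerate case arises.)
\end{itemize}
Talagrand's contraction lemma \citep{BartlettMendelson2002} then bounds the complexity of $\{\phi\circ g:g\in\mathcal{G}\}$ by $(1/\eta)\sqrt{2\log m/k}$.

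\textbf{Step 3: concentrate the supremum.} The standard symmetrization bound for i.i.d.\ samples with losses in $[0,b]$ gives, with probability at least $1-\delta$,
\[
\sup_{\bx}\bigl|\widehat L(\bx)-L(\bx)\bigr|\le 2\mathfrak{R}_k+b\sqrt{\log(2/\delta)/(2k)},
\]
where $\mathfrak{R}_k$ is the Rademacher complexity from Step~2. Here the population average $L(\bx)$ is exactly $\mathbb{E}_{i\sim\mathrm{Unif}(N)}\ell(i,\bx)$. The two-sided bound follows by applying McDiarmid's inequality to the supremum, since changing one sample moves it by at most $b/k$, together with a union bound over the two directions. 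With $b=2/\eta$, the total is
\[
\frac{C}{\eta}\sqrt{\frac{\log m+\log(1/\delta)}{k}}
\]
for an absolute constant $C$. The $\log 2$ term is absorbed using $\log m\ge\log 2$.

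\textbf{Main obstacle.} The main care is needed in applying contraction. The usual form of Talagrand's lemma requires $\phi(0)=0$ or is stated for a one-sided supremum, but $\phi(0)=1$. I would handle this by centering: use $\phi-1$ in place of $\phi$, or note that the Rademacher average of a constant has zero mean. This costs only a constant factor, which is absorbed into $C$. Everything else is routine bookkeeping of constants.
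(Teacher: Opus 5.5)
Your proposal is correct and follows essentially the same route as the paper. Both arguments write the class as the convex hull of the $m$ functions $i\mapsto u_i(s_j)-\tau_i$, apply Massart's lemma, and use contraction after centering by $-1$ (the paper's $\psi$). Both then obtain the two-sided bound by treating each direction separately with $\delta/2$. The only difference is that you derive the final concentration step from McDiarmid's inequality with the expected Rademacher complexity, whereas the paper cites a ready-made empirical-Rademacher uniform convergence theorem; this changes only the absolute constant.
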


    \begin{proof}
        We use the empirical Rademacher complexity of a class $\mathcal H$ of functions on $N$, namely $\widehat{\mathfrak R}(\mathcal H):=\mathbb{E}_\sigma[\sup_{g\in\mathcal H}\frac1k\sum_{t=1}^k\sigma_t\, g(i_t)]$, where $\sigma_1,\dots,\sigma_k$ are independent and uniform on $\{-1,1\}$, together with the standard facts about it in \citet[Chapter~26]{ShalevShwartzBenDavid2014}.

        For each lottery $\bx$, define $f_\bx(i):=U_i(\bx)-\tau_i$, and for each alternative $s_j$, define $h_j(i):=u_i(s_j)-\tau_i$. Since $x_j\ge 0$ and $\sum_{j=1}^m x_j=1$, we have $f_\bx=\sum_{j=1}^m x_j h_j$, so the class $\mathcal F:=\{f_\bx:\bx\in\Delta(S)\}$ is the convex hull of $h_1,\dots,h_m$. Taking the convex hull does not change the Rademacher complexity, and since each $h_j$ takes values in $[-1,1]$, Massart's lemma gives $\widehat{\mathfrak R}(\mathcal F)=\widehat{\mathfrak R}(\{h_1,\dots,h_m\})\le 2\sqrt{(2\log m)/k}$.

        Let $\psi(z):=\max\{0,1-z/\eta\}-1$, so that $\ell(i,\bx)=1+\psi(f_\bx(i))$, and let $\mathcal G:=\{\psi\circ f_\bx:\bx\in\Delta(S)\}$. Since $\psi$ is $(1/\eta)$-Lipschitz, the contraction lemma gives $\widehat{\mathfrak R}(\mathcal G)\le\widehat{\mathfrak R}(\mathcal F)/\eta$. Moreover, $f_\bx(i)\in[-1,1]$ because $U_i(\bx)\in[0,1]$ and $\tau_i\in(0,1]$, so every function in $\mathcal G$ takes values in $[-1,1/\eta]\subseteq[-1/\eta,1/\eta]$. The uniform convergence bound in terms of $\widehat{\mathfrak R}$ \citep[Theorem~26.5]{ShalevShwartzBenDavid2014}, applied to $\mathcal G$ and to $-\mathcal G$ (which has the same Rademacher complexity, since $\sigma$ is symmetric), each with $\delta/2$ in place of $\delta$, shows that with probability at least $1-\delta$,
        \[
        \sup_{g\in \mathcal G}
        \left|
        \frac1k\sum_{t=1}^k g(i_t)-\frac1n\sum_{i\in N} g(i)
        \right|
        \le
        2\widehat{\mathfrak R}(\mathcal G)+\frac{4}{\eta}\sqrt{\frac{2\log(8/\delta)}{k}}
        \le \frac{4}{\eta}\sqrt{\frac{2\log m}{k}}+\frac{4}{\eta}\sqrt{\frac{2\log(8/\delta)}{k}}.
        \]
        Since the constant $1$ in $\ell(i,\bx)=1+\psi(f_\bx(i))$ cancels, the left-hand side equals $\sup_{\bx\in\Delta(S)}|\widehat L(\bx)-L(\bx)|$. Finally, $\log(8/\delta)\le 3\log m+\log(1/\delta)$ because $m\ge 2$, which gives the lemma.
    \end{proof}

    Let $\kappa:=(C/\eta)\sqrt{(\log m+\log(1/\delta))/k}$ be the bound of Lemma~\ref{lem:shortfall-uniform}, and fix the event of that lemma. We use three bounds. First, by Lemma~\ref{lem:shortfall-domination}, $\viol(\widehat \bx_\eta)\le nL(\widehat \bx_\eta)$. Second, since $\widehat \bx_\eta$ minimizes $\widehat L$ (Lemma~\ref{lem:shortfall-domination}), every lottery $\bx$ satisfies $L(\widehat \bx_\eta) \le \widehat L(\widehat \bx_\eta)+\kappa \le \widehat L(\bx)+\kappa \le L(\bx)+2\kappa$.
    Third, we bound $nL(\bx)$ for a fixed lottery $\bx$. If $U_i(\bx)\ge \tau_i+\eta$, then $\ell(i,\bx)=0$. If $i\in B_\eta(\bx)$, then $0<\ell(i,\bx)\le 1$. If $U_i(\bx)<\tau_i$, then $\ell(i,\bx)= (\tau_i+\eta-U_i(\bx))/\eta\le (1+\eta)/\eta=1+1/\eta$, because $\tau_i\le 1$ and $U_i(\bx)\ge 0$. Therefore $nL(\bx)\le(1+1/\eta) \viol(\bx)+|B_\eta(\bx)|$.
    Combining the three bounds gives $\viol(\widehat{\bx}_\eta) \le nL(\bx)+2\kappa n \le (1+1/\eta)\viol(\bx)+|B_\eta(\bx)|+2\kappa n$ for every lottery $\bx$, which is the claim of Theorem~\ref{thm:shortfall-main} with constant $2C$.
    If the LP is run on all $n$ agents, each once, then $\widehat L=L$, and the same argument with $\kappa=0$ gives the full-information bound stated in Section~\ref{subsec:shortfall-lp}.

\section{Converting Sample Sizes into Query Budgets} \label{app:budget-implementations}
Suppose each agent can be recovered with at most $T_\eps= \mathcal{O}(m\log(1/\eps))$ queries, as under the finite-grid convention. With a budget of $Q$ queries, sample $k:=\lfloor Q/T_\eps\rfloor$ agents independently from $\mathrm{Unif}(N)$ and recover only the distinct sampled agents. This uses at most $kT_\eps\le Q$ queries, and the sampled agents, with their multiplicities, then define the problem on the sample exactly.

\begin{proposition}
\label{prop:budget-additive}
    Assume the finite-grid convention and that $m$ is a constant. Fix $\delta\in(0,1)$, let $k:=\lfloor Q/T_\eps\rfloor\ge m$, sample and recover agents as above, and minimize the number of rejecting sampled agents exactly with the algorithm of Theorem~\ref{thm:minviol_XP}. This uses at most $Q$ queries, the optimization takes time polynomial in $k$, and with probability at least $1-\delta$ the returned lottery satisfies the bound of Theorem~\ref{thm:budget-additive}. If, in addition, $\delta$, $\rho$, $\zeta$, and $k$ satisfy the conditions of Theorem~\ref{thm:budget-relative} and $\OPT\ge\rho n$, then with probability at least $1-\delta$ the returned lottery has at most $(1+\zeta)\OPT$ violations.
\end{proposition}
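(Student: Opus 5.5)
The proposition is an assembly of earlier results, so the plan is to check three things in order: the query count, the running time, and the fact that the output satisfies the hypotheses of Theorems~\ref{thm:budget-additive} and~\ref{thm:budget-relative}.

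\textbf{Queries.} Draw the $k$ indices $i_1,\dots,i_k$ independently from $\mathrm{Unif}(N)$. Let $D$ be the set of distinct indices, so $|D|\le k$. Run the recovery routine of \citet{choo2026learnqueries} once per agent in $D$. Under the finite-grid convention each run uses at most $T_\eps$ queries, so the total is at most $|D|T_\eps\le kT_\eps\le Q$, because $k=\lfloor Q/T_\eps\rfloor$. Recovery returns the exact normalized pair. Together with the stored multiplicities, the recovered pairs give the multiset instance on the sample exactly. Its violation count is $k\,\widehat r(\bx)$, with repeated samples counted with multiplicity.

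\textbf{Running time.} Apply Algorithm~\ref{alg:minviol-full-profile} to the sampled multiset, with $k$ agents listed with repetition. It could also be applied to $|D|$ weighted agents, but the multiset version suffices. By Theorem~\ref{thm:minviol_XP}, whose proof never uses that the agents are distinct, the time is $\mathcal{O}(\binom{k+m}{m-1}(m^3+km))$. This is polynomial in $k$ for constant $m$. Its output $\widehat\bx$ minimizes the number of rejecting sampled agents counted with multiplicity, that is, it minimizes $\widehat r$.

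\textbf{Guarantees.} The conditions $\delta\in(0,1)$ and $k\ge m$ hold by assumption. Theorem~\ref{thm:budget-additive} applies to every minimizer of $\widehat r$, so it applies to the returned $\widehat\bx$ on its probability-$(1-\delta)$ event. The only subtlety is that the randomness lies solely in the sample. Recovery is deterministic and exact, so the event depends only on $i_1,\dots,i_k$, and the output is a genuine empirical minimizer. For the relative part, if $k\ge C(m\log(1/\rho)+\log(1/\delta))/(\zeta^2\rho)$ and $\OPT\ge\rho n$, then Theorem~\ref{thm:budget-relative} gives $\viol(\widehat\bx)\le(1+\zeta)\OPT$ with probability at least $1-\delta$.

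I expect no real obstacle. The one point needing care is that Theorem~\ref{thm:minviol_XP} extends to multisets, and this follows because its proof uses only the finite list of linear constraints.
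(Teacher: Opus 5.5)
Your proposal is correct and follows essentially the same route as the paper: the query count $|D|T_\eps\le kT_\eps\le Q$, exact minimization over the $k$-agent multiset profile with Theorem~\ref{thm:minviol_XP} in time polynomial in $k$ for constant $m$, and the guarantees read off from Theorems~\ref{thm:budget-additive} and~\ref{thm:budget-relative} applied to the returned empirical minimizer. Your remark that Theorem~\ref{thm:minviol_XP} applies to multisets agrees with the paper, which treats the sampled agents with their multiplicities as a profile of $k$ agents.
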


\begin{proof}
    The query count was shown above. The sampled agents, counted with multiplicity, form a profile of $k$ agents, and Theorem~\ref{thm:minviol_XP} minimizes the number of violations on it exactly in time polynomial in $k$ for constant $m$; for $m=2$, Proposition~\ref{prop:viol-m2} takes time $\mathcal{O}(k\log k)$. The guarantees then follow from Theorems~\ref{thm:budget-additive} and~\ref{thm:budget-relative}.
\end{proof}

\begin{proposition}
\label{prop:shortfall-budget}
	Assume the finite-grid convention. Fix $\delta\in(0,1)$ and $\eta\in(0,1]$, let $k:=\lfloor Q/T_\eps\rfloor\ge 1$, sample and recover agents as above, and solve the linear program \eqref{eq:shortfall-lp}. This uses at most $Q$ queries, the optimization takes time polynomial in $k$ and $m$, and with probability at least $1-\delta$ the returned lottery satisfies the bound of Theorem~\ref{thm:shortfall-main}.
\end{proposition}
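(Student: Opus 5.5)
The proof has three parts, mirroring Proposition~\ref{prop:budget-additive}: count the queries, bound the running time, and transfer the statistical guarantee. For the query count, we sample $k=\lfloor Q/T_\eps\rfloor$ indices independently from $\mathrm{Unif}(N)$. We then recover each distinct sampled agent once with the routine of \citet{choo2026learnqueries}. This costs at most $T_\eps$ queries per distinct agent, so at most $kT_\eps\le Q$ queries in total. Under the finite-grid convention, recovery returns the exact normalized pair $(u_{i_t},\tau_{i_t})$ of every sampled agent. Hence the linear program \eqref{eq:shortfall-lp} built from the recovered pairs is exactly the program whose optimal solutions are analysed in Theorem~\ref{thm:shortfall-main}, with repeated samples merged into one weighted constraint as remarked after \eqref{eq:shortfall-lp}.

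For the running time, the program has $m+k$ variables and $2k+m+1$ constraints: the $k$ penalty constraints, the $k$ nonnegativity constraints on $z_t$, the nonnegativity of $\bx$, and the simplex equation. Every coefficient is either a grid value of the form $a\eps$ with $a\in\{0,\dots,1/\eps\}$, or $1/\eta$, or $1/k$, so all coefficients have bounded encoding length. Solving it with the ellipsoid or an interior-point method then takes time polynomial in $k$, $m$ and the encoding length, and these methods return an exact optimal vertex for rational data. The one point to check is that an \emph{exact} optimal solution is produced, not an approximate one, because Theorem~\ref{thm:shortfall-main} is stated for the $\bx$-part of any optimal solution. Polynomial-time LP algorithms with a final rounding step do return an exact optimal basic solution for rational input; I would treat $\eta$ as rational, or note that its encoding enters the bound polynomially.

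Finally, conditional on the sampled indices, the returned $\widehat\bx_\eta$ is the $\bx$-part of an optimal solution of \eqref{eq:shortfall-lp} for the i.i.d.\ uniform sample $i_1,\dots,i_k$. Theorem~\ref{thm:shortfall-main} therefore applies directly, with $k\ge1$ arbitrary: with probability at least $1-\delta$ over the sample, $\viol(\widehat\bx_\eta)\le(1+1/\eta)\viol(\bx)+|B_\eta(\bx)|+(Cn/\eta)\sqrt{(\log m+\log(1/\delta))/k}$ for every lottery $\bx$. Recovery is deterministic and error-free, so it introduces no further failure probability. I do not expect any step to be a real obstacle; the only delicate point is the exact-solvability remark about the LP.
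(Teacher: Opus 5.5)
Your proposal is correct and follows the paper's proof: recovering only the distinct sampled agents gives the query count $kT_\eps\le Q$, the LP with $m+k$ variables and $\mathcal{O}(m+k)$ constraints is solvable in polynomial time, and the guarantee then follows by a direct appeal to Theorem~\ref{thm:shortfall-main}. Your remarks that the solver must return an exactly optimal solution, and that the encoding lengths of $\eta$ and $\eps$ enter the running time, are extra care that the paper leaves implicit.
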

\begin{proof}
The query count was shown above. Once the sampled agents are recovered, \eqref{eq:shortfall-lp} is a linear program with $m+k$ variables and $\mathcal{O}(m+k)$ constraints, which can be solved in polynomial time. The guarantee follows from Theorem~\ref{thm:shortfall-main}.
\end{proof}

\section{Omitted Proofs from Section~\ref{sec:deterministic_worst_case}}

\subsection{Proof of Theorem~\ref{thm:coverage}}\label{app:proof-coverage}

    The idea is to choose utilities that reproduce every answer the algorithm received, while making every unqueried agent accept only a pure lottery other than $\widehat\bx$. Consider the run of $\mathcal{A}$ in which every query $(i,\bx)$ is answered ``accept'' if and only if $x_3=\cdots=x_m=0$, with queried set $M$ and output $\widehat\bx$.
    For each queried agent $i \in M$, set $u_i=(1,1,0,\dots,0)$ and $\tau_i = 1$. Then $A_i = \{\bx \in \Delta(S) : x_1+x_2 \ge 1\} = \{\bx \in \Delta(S) : x_3=\dots=x_m=0\}$, so these agents give exactly the answers of the run above.
    For each unqueried agent $i \in N \setminus M$, set $\tau_i=1$ and $u_i=\mathbf{z}$, where $\mathbf{z}:=\be_1$ if $\widehat{\bx} \neq \be_1$ and $\mathbf{z}:=\be_2$ if $\widehat{\bx} = \be_1$.
    Let $\I:=((u_i,\tau_i))_{i\in N}$. Every unqueried agent accepts exactly one lottery, the pure lottery $\mathbf{z}$, so it rejects $\widehat{\bx}$.
    All agents are nondegenerate: queried agents accept exactly the lotteries $(\lambda,1-\lambda,0,\dots,0)$ with $\lambda \in [0,1]$ and reject $\be_3$, which exists because $m\ge 3$, while each unqueried agent accepts only $\mathbf{z}$. All pairs are indicator vectors of proper subsets of alternatives with threshold $1$, so they are normalized.

    We claim that running $\mathcal{A}$ on $\I$ produces the same query history as the original run. We show by induction on $t$ that the first $t$ query--answer pairs agree, for every $t$ up to the number of queries in the original run. The case $t=0$ is trivial. If the first $t$ pairs agree, then, because $\mathcal{A}$ is deterministic, its $(t+1)$-st query is the same in both runs, so the queried agent belongs to $M$. For agents in $M$, the answers under $\I$ agree with the answer rule of the original run, so the $(t+1)$-st answer is also the same. Hence the entire query history is the same, and since $\mathcal{A}$ is deterministic, it returns $\widehat{\bx}$ on $\I$.

    Every agent in $N \setminus M$ rejects $\widehat{\bx}$, so $\viol_\I(\widehat{\bx}) \ge n-|M|$. Every queried agent accepts $\mathbf{z}\in \{\be_1,\be_2\}$, and every unqueried agent accepts $\mathbf{z}$ by construction, so $\viol_\I(\mathbf{z})=0$ and $\OPT(\I) = 0$.

\subsection{Proof of Theorem~\ref{thm:exact_deterministic_benchmark}}\label{app:proof-exact}
    \textbf{Upper bound.} Fix $M\subseteq N$ with $|M|=K$. Consider the deterministic algorithm that queries only agents in $M$, recovers each of them, and returns a lottery $\bx_M$ minimizing the number of violated agents in $M$, chosen by a fixed tie-breaking rule. This algorithm belongs to $\mathcal{C}_K$. For any profile $\I$, let $\bx^*$ be optimal for $\I$. Then $\bx_M$ violates at most as many agents of $M$ as $\bx^*$ does, and at most $n-K$ agents outside $M$, so $\viol_\I(\bx_M)\le \viol_\I(\bx^*) + (n-K)=\OPT(\I)+(n-K)$.
    Hence $\inf_{\mathcal{A}\in\mathcal{C}_K}\sup_\I (\viol_\I(\mathcal{A}(\I))-\OPT(\I))\le n-K$.

    \textbf{Lower bound.} Fix any $\mathcal{A} \in \mathcal{C}_K$ and run it on the profile in which every agent has utility vector $(1,1,0,\dots,0)$ and threshold $1$; its answers are exactly those of the answer rule in Theorem~\ref{thm:coverage}. Let $M$ be the set of agents queried and $\widehat{\bx}$ the output. This profile satisfies the model assumptions, so $|M| \le K$.
    By Theorem~\ref{thm:coverage}, there is a profile $\I$ on which $\mathcal{A}$ returns $\mathcal{A}(\I)=\widehat{\bx}$, every agent in $N\setminus M$ rejects $\widehat{\bx}$, and $\OPT(\I)=0$. Therefore $\viol_\I(\mathcal{A}(\I))-\OPT(\I) = \viol_\I(\widehat{\bx}) \ge n-|M| \ge n-K$.
    The profile $\I$ satisfies the model assumptions: it is nondegenerate by the proof of Theorem~\ref{thm:coverage}, and since its utilities are binary and its thresholds equal $1$, it satisfies the finite-grid convention for every $\eps$. Hence $\sup_\I(\viol_\I(\mathcal{A}(\I))-\OPT(\I))\ge n-K$ for every $\mathcal{A}\in\mathcal{C}_K$, which proves the claim.

\section{Bounds After a Partial Query History}\label{app:partial-elicitation}
This appendix studies what can be said after an arbitrary query history, before all agents have been recovered: we bound the unknown optimum, and the worst-case loss of a chosen lottery, using only the answers observed so far.

\subsection{Bounds from observed queries}\label{app:partial-elicitation-bounds}
Throughout this subsection, no finite-precision assumption is needed.

A partial query history is the set of accept/reject answers observed so far. We
write it as a finite set $h\subseteq N\times\Delta(S)\times\{0,1\}$, where
$(i,\bx,b)\in h$ means that querying agent $i$ on lottery $\bx$ returned answer $b$.
For each agent $i\in N$, let $h_i := \{(\bx,b) : (i,\bx,b)\in h\}$ be the answers observed for agent $i$.
We say that $h'$ \emph{extends} $h$ if $h\subseteq h'$, and that $h$ is \emph{consistent} if there exists at least one profile satisfying our model whose answers reproduce every triple in $h$.

For each agent $i$, let $\Theta_i(h_i)$ be the set of normalized pairs $(u,\tau)$, in the sense of Section~\ref{sec:prelims}, that agree with all observed answers, i.e., $\mathbf{1}\{\langle u,\bx\rangle\ge \tau\}=b$ for every $(\bx,b)\in h_i$. Thus $\Theta_i(h_i)$ contains the pairs that remain possible for agent $i$, and $h$ is consistent if and only if $\Theta_i(h_i)\neq\varnothing$ for every $i\in N$.

A profile $\I=((u_i,\tau_i))_{i\in N}$ is \emph{consistent with $h$} if $(u_i,\tau_i)\in \Theta_i(h_i)$ for every $i\in N$; we use $\viol_\I$ and $\OPT(\I)$ as in Section~\ref{sec:deterministic_worst_case}.
The answers observed so far determine two sets of lotteries for each agent: $\underline{A}_i(h_i) := \{\bx \in \Delta(S) : \langle u, \bx\rangle \ge \tau \text{ for every } (u,\tau) \in \Theta_i(h_i)\}$ and $\overline{A}_i(h_i) := \{\bx \in \Delta(S) : \langle u, \bx\rangle \ge \tau \text{ for some } (u,\tau) \in \Theta_i(h_i)\}$.
The first set contains the lotteries that agent $i$ is guaranteed to accept given the observed answers, and the second contains those that agent $i$ accepts under at least one consistent profile.
If $h$ is consistent, then $\Theta_i(h_i)\neq \varnothing$, so $\underline{A}_i(h_i)\subseteq \overline{A}_i(h_i)$, and if $\Theta_i(h_i)=\{(u_i,\tau_i)\}$, then both sets equal $A_i$.

For any lottery $\bx\in \Delta(S)$, define the lower and upper violation counts
consistent with the observed answers as $\underline{\viol}_h(\bx):=|\{i\in N : \bx\notin \overline{A}_i(h_i)\}|$ and $\overline{\viol}_h(\bx):=|\{i\in N : \bx\notin \underline{A}_i(h_i)\}|$.
The lower count $\underline{\viol}_h(\bx)$ counts agents who must reject
$\bx$ under every profile still consistent with the observed answers, while the
upper count $\overline{\viol}_h(\bx)$ counts agents who could reject $\bx$
under at least one consistent profile.

\begin{theorem}
	For every consistent query history $h$, every lottery $\bx\in\Delta(S)$, and every profile $\I$ consistent with $h$, $\underline{\viol}_h(\bx)\le \viol_\I(\bx)\le \overline{\viol}_h(\bx)$.
	Consequently, if $L(h):=\min_{\bx\in\Delta(S)} \underline{\viol}_h(\bx)$ and $H(h):=\min_{\bx\in\Delta(S)} \overline{\viol}_h(\bx)$, then every profile $\I$ consistent with $h$ satisfies $L(h)\le \OPT(\I)\le H(h)$.
	Moreover, if $\widehat{\bx}_h\in\arg\min_{\bx\in\Delta(S)} \overline{\viol}_h(\bx)$, then $\viol_\I(\widehat{\bx}_h)\le \OPT(\I)+(H(h)-L(h))$.
	In particular, if $H(h)=L(h)$, then $\widehat{\bx}_h$ is optimal for every profile consistent with $h$.
\end{theorem}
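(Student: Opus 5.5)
The plan is to prove the four assertions in order, each from the previous one, using only the definitions of $\Theta_i(h_i)$, $\underline{A}_i(h_i)$ and $\overline{A}_i(h_i)$.

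\textbf{Pointwise sandwich.} Fix a consistent history $h$, a lottery $\bx$, and a profile $\I$ consistent with $h$. Then $(u_i,\tau_i)\in\Theta_i(h_i)$ for every agent $i$. Hence $\underline{A}_i(h_i)\subseteq A_i\subseteq\overline{A}_i(h_i)$: a lottery accepted under every consistent pair is accepted under $(u_i,\tau_i)$, and $(u_i,\tau_i)$ itself witnesses membership in $\overline{A}_i(h_i)$. Taking complements, an agent counted in $\underline{\viol}_h(\bx)$ (that is, $\bx\notin\overline{A}_i(h_i)$) rejects $\bx$ under $\I$. Likewise, an agent rejecting $\bx$ under $\I$ satisfies $\bx\notin\underline{A}_i(h_i)$ and so is counted in $\overline{\viol}_h(\bx)$. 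Counting these agents gives the two inequalities.

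\textbf{Bounds on the optimum.} The minima defining $L(h)$ and $H(h)$ are attained, since both counts take values in the finite set $\{0,\dots,n\}$. Take the minimum over $\bx$ of the three functions in the sandwich. Because $\underline{\viol}_h\le\viol_\I\le\overline{\viol}_h$ pointwise, the minima are ordered the same way, which gives $L(h)\le\OPT(\I)\le H(h)$.

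\textbf{Additive gap.} For $\widehat{\bx}_h$, I would chain the previous steps:
\[
\viol_\I(\widehat{\bx}_h)\le\overline{\viol}_h(\widehat{\bx}_h)=H(h)=\OPT(\I)+(H(h)-\OPT(\I))\le\OPT(\I)+(H(h)-L(h)).
\]
The last inequality uses $\OPT(\I)\ge L(h)$. When $H(h)=L(h)$ the gap is zero, so $\viol_\I(\widehat{\bx}_h)\le\OPT(\I)$, and $\widehat{\bx}_h$ is optimal for every consistent profile.

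\textbf{Main obstacle.} There is no real difficulty. The only care needed is that the sets $\Theta_i(h_i)$ are nonempty, which is exactly the consistency of $h$, and that the relevant minima exist, which holds because the counts are integers in $\{0,\dots,n\}$. No closedness of $\underline{A}_i(h_i)$ or $\overline{A}_i(h_i)$ is needed for this finite-valued argument.
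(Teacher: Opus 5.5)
Your proposal is correct and follows the paper's proof essentially step for step. The paper also argues agent by agent that $(u_i,\tau_i)\in\Theta_i(h_i)$ forces rejection when $\bx\notin\overline{A}_i(h_i)$ and acceptance when $\bx\in\underline{A}_i(h_i)$, then minimizes over lotteries, and finally combines $\viol_\I(\widehat{\bx}_h)\le\overline{\viol}_h(\widehat{\bx}_h)=H(h)$ with $\OPT(\I)\ge L(h)$.
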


\begin{proof}
	Fix $\bx \in \Delta(S)$ and a profile $\I$ consistent with $h$. 
	For each agent $i$, we have $(u_i,\tau_i) \in \Theta_i(h_i)$. Therefore:
	\begin{itemize}
	    \item if $\bx \notin \overline{A}_i(h_i)$, then every pair in $\Theta_i(h_i)$ rejects $\bx$, so agent $i$ rejects $\bx$ in profile $\I$;
	    \item if $\bx \in \underline{A}_i(h_i)$, then every pair in $\Theta_i(h_i)$ accepts $\bx$, so agent $i$ accepts $\bx$ in profile $\I$.
	\end{itemize}
	Summing over agents gives $\underline{\viol}_h(\bx) \le \viol_\I(\bx) \le \overline{\viol}_h(\bx)$.
Minimizing each side over $\bx\in\Delta(S)$ gives $L(h)\le\OPT(\I)\le H(h)$. Finally, $\viol_\I(\widehat{\bx}_h)\le \overline{\viol}_h(\widehat{\bx}_h)=H(h)$, and combining this with $\OPT(\I)\ge L(h)$ gives $\viol_\I(\widehat{\bx}_h) - \OPT(\I)\le H(h)-L(h)$.
\end{proof}

\begin{proposition}
	Let $h'$ extend $h$, and assume that $h'$ is consistent. 
	Then, for every lottery $\bx \in \Delta(S)$, $\underline{\viol}_h(\bx) \le \underline{\viol}_{h'}(\bx) \le \overline{\viol}_{h'}(\bx) \le \overline{\viol}_h(\bx)$.
Consequently, $L(h)\le L(h')\le H(h')\le H(h)$.
In particular, the uncertainty gap $H(h)-L(h)$ cannot increase as more queries are answered.
\end{proposition}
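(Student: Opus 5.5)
The plan is a monotonicity argument. Adding observations can only shrink each agent's set of consistent normalized pairs, and everything else follows by taking intersections and unions over smaller sets. First, since $h\subseteq h'$, for every agent $i$ we have $h_i\subseteq h'_i$. Any normalized pair agreeing with all answers in $h'_i$ in particular agrees with all answers in $h_i$. Hence
\[
\Theta_i(h'_i)\subseteq\Theta_i(h_i).
\]
Because $h'$ is consistent, $\Theta_i(h'_i)\neq\varnothing$ for every $i$. The same then holds for $\Theta_i(h_i)$, so $h$ is consistent too.

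Next I translate this into inclusions of the guaranteed and possible acceptance sets. The set $\underline{A}_i$ is defined by a condition quantified over \emph{every} pair in $\Theta_i$. Shrinking $\Theta_i$ weakens that condition, so $\underline{A}_i(h_i)\subseteq\underline{A}_i(h'_i)$. The set $\overline{A}_i$ is defined by a condition over \emph{some} pair. Shrinking $\Theta_i$ strengthens it, so $\overline{A}_i(h'_i)\subseteq\overline{A}_i(h_i)$. Nonemptiness of $\Theta_i(h'_i)$ gives the middle inclusion $\underline{A}_i(h'_i)\subseteq\overline{A}_i(h'_i)$, as already noted in the paper for consistent histories. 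So for each agent and each lottery $\bx$,
\[
\underline{A}_i(h_i)\subseteq\underline{A}_i(h'_i)\subseteq\overline{A}_i(h'_i)\subseteq\overline{A}_i(h_i).
\]

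Then I count. An agent with $\bx\notin\overline{A}_i(h_i)$ also has $\bx\notin\overline{A}_i(h'_i)$, which gives $\underline{\viol}_h(\bx)\le\underline{\viol}_{h'}(\bx)$. An agent with $\bx\notin\overline{A}_i(h'_i)$ has $\bx\notin\underline{A}_i(h'_i)$, which gives $\underline{\viol}_{h'}(\bx)\le\overline{\viol}_{h'}(\bx)$. An agent with $\bx\notin\underline{A}_i(h'_i)$ has $\bx\notin\underline{A}_i(h_i)$, which gives $\overline{\viol}_{h'}(\bx)\le\overline{\viol}_h(\bx)$. Since these pointwise inequalities hold for every $\bx$, taking minima over $\Delta(S)$ preserves them: $L(h)\le L(h')\le H(h')\le H(h)$. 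The minima exist because the counts take finitely many integer values. Subtracting gives $H(h')-L(h')\le H(h)-L(h)$.

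There is no real obstacle here. The only care needed is the direction of each inclusion (for all versus exists), and noting that consistency of $h'$ is what guarantees the middle inequality and the consistency of $h$.
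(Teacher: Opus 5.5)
Your proof is correct and follows essentially the paper's argument: $\Theta_i(h'_i)\subseteq\Theta_i(h_i)$ gives $\underline{A}_i(h_i)\subseteq\underline{A}_i(h'_i)$ and $\overline{A}_i(h'_i)\subseteq\overline{A}_i(h_i)$, and counting agents and then minimizing over $\Delta(S)$ yields both chains. You are slightly more explicit than the paper: you derive the middle inequality from $\Theta_i(h'_i)\neq\varnothing$ and note that the minima are attained, points the paper leaves implicit.
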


\begin{proof}
If $h'$ extends $h$, then $\Theta_i(h'_i) \subseteq \Theta_i(h_i)$ for every agent $i$. Thus, $\underline{A}_i(h_i) \subseteq \underline{A}_i(h'_i)$ and $\overline{A}_i(h'_i) \subseteq \overline{A}_i(h_i)$.
Therefore, for each fixed lottery $\bx$, the set of agents for whom $\bx$ is not even possibly acceptable can only grow, while the set of agents for whom $\bx$ is guaranteed acceptable can only grow. Equivalently, $\underline{\viol}_h(\bx)$ can only increase and $\overline{\viol}_h(\bx)$ can only decrease. This proves the first inequality, and minimizing over $\bx \in \Delta(S)$ gives the second inequality.
\end{proof}

A useful extreme case is an agent with no observed answers. If $h_i=\varnothing$, then $\Theta_i(h_i)$ is the set of all normalized pairs. Thus, every lottery $\bx\in\Delta(S)$ is accepted by some pair in $\Theta_i(h_i)$ and rejected by some (possibly different) pair in $\Theta_i(h_i)$: to accept $\bx$, choose any $j$ with $x_j>0$ and take $u=\be_j$, $\tau=x_j/2$; to reject $\bx$, choose any $\ell$ with $x_\ell<1$ (which exists because $m\ge 2$) and take $u=\be_\ell$, $\tau=(x_\ell+1)/2$. Both pairs are normalized, since their thresholds lie strictly between $0$ and $1$. Therefore $\overline{A}_i(h_i)=\Delta(S)$ and $\underline{A}_i(h_i)=\varnothing$.
So an unseen agent contributes $0$ to $\underline{\viol}_h(\bx)$ and $1$ to $\overline{\viol}_h(\bx)$ for every lottery $\bx$. 
In particular, if exactly $K$ agents have nonempty histories, then $\overline{\viol}_h(\bx)-\underline{\viol}_h(\bx)\ge n-K$ for every $\bx\in\Delta(S)$, and thus $H(h)-L(h)\ge n-K$. 
This shows, for an arbitrary partial history, why unqueried agents drive the lower bound of Theorem~\ref{thm:coverage}.

The bound $H(h)-L(h)$ for $\widehat{\bx}_h$ can be loose. To state the exact worst-case additive gap of an arbitrary fixed lottery $\bx$ over all profiles consistent with $h$, define, for $\bx, \mathbf{z} \in \Delta(S)$,
\[
D_h(\bx,\mathbf{z}) := \sum_{i \in N}
\max_{(u,\tau) \in \Theta_i(h_i)}
(\mathbf{1}\{\langle u, \bx\rangle < \tau\} - \mathbf{1}\{\langle u, \mathbf{z}\rangle < \tau\}).
\]

\begin{theorem} \label{thm:exact_worst-case_gap}
For every consistent query history $h$ and every lottery $\bx \in \Delta(S)$, the supremum of $\viol_\I(\bx) - \OPT(\I)$ over all profiles $\I$ consistent with $h$ equals $\max_{\mathbf{z} \in \Delta(S)} D_h(\bx,\mathbf{z})$.
Consequently, the smallest worst-case additive gap that any lottery can guarantee after observing $h$ is $\min_{\bx \in \Delta(S)} \max_{\mathbf{z} \in \Delta(S)} D_h(\bx,\mathbf{z})$.
\end{theorem}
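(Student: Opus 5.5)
We prove the two inequalities separately, using that consistency with $h$ is a product condition: $\I$ is consistent with $h$ exactly when each $(u_i,\tau_i)$ lies in its own set $\Theta_i(h_i)$, with no coupling across agents. Write $g_i(u,\tau):=\mathbf{1}\{\langle u,\bx\rangle<\tau\}-\mathbf{1}\{\langle u,\mathbf{z}\rangle<\tau\}\in\{-1,0,1\}$. Then $D_h(\bx,\mathbf z)=\sum_i\max_{\Theta_i(h_i)}g_i$. Since $g_i$ takes only three values, each maximum is attained whenever $\Theta_i(h_i)\ne\varnothing$, which holds because $h$ is consistent.

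\textbf{Upper bound.} Fix a profile $\I$ consistent with $h$. Let $\mathbf z$ be an optimal lottery for $\I$, which exists because the minimum is attained (Section~\ref{sec:prelims}). Then
\[
\viol_\I(\bx)-\OPT(\I)=\sum_i g_i(u_i,\tau_i)\le\sum_i\max_{\Theta_i(h_i)}g_i=D_h(\bx,\mathbf z)\le\sup_{\mathbf z'}D_h(\bx,\mathbf z').
\]

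\textbf{Lower bound.} Fix any $\mathbf z\in\Delta(S)$. For each $i$, choose $(u_i,\tau_i)\in\Theta_i(h_i)$ attaining $\max g_i$. The resulting profile $\I$ is consistent with $h$, because the choices are made independently per agent. Using $\OPT(\I)\le\viol_\I(\mathbf z)$,
\[
\viol_\I(\bx)-\OPT(\I)\ge\viol_\I(\bx)-\viol_\I(\mathbf z)=D_h(\bx,\mathbf z).
\]
Hence the supremum over consistent profiles equals $\sup_{\mathbf z}D_h(\bx,\mathbf z)$.

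\textbf{Attainment of the maximum over $\mathbf z$.} This is the delicate step, since the statement writes $\max$ rather than $\sup$. $D_h(\bx,\cdot)$ is integer-valued and bounded by $n$, because every term lies in $\{-1,0,1\}$. So its supremum is a maximum over its finite set of attained values. The same reasoning shows the supremum over profiles is attained.

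\textbf{Consequence.} Minimizing over $\bx$ gives the second claim. The function $\bx\mapsto\max_{\mathbf z}D_h(\bx,\mathbf z)$ is also integer-valued and bounded, so its minimum is attained. Each value $\max_{\mathbf z}D_h(\bx,\mathbf z)$ is exactly the worst-case additive gap of $\bx$ by the first part, so this minimum is the smallest worst-case gap any lottery can guarantee after observing $h$.
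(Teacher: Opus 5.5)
Your proposal is correct and follows the paper's proof essentially step by step. The upper bound compares termwise against an optimal lottery $\mathbf z_\I$ for $\I$, and the lower bound assembles per-agent maximizers into a consistent profile and uses $\OPT(\I)\le\viol_\I(\mathbf z)$, with attainment coming from integrality and boundedness; the only cosmetic difference is that you prove the lower bound for every $\mathbf z$ and then take the supremum, whereas the paper fixes a maximizer $\mathbf z^*$ first.
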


\begin{proof}
	Fix $\bx \in \Delta(S)$. For each agent $i$, the expression maximized in the $i$-th summand of $D_h$ takes values in $\{-1,0,1\}$ on the nonempty set $\Theta_i(h_i)$, so its maximum is attained; and since $D_h(\bx,\mathbf{z})$ is an integer in $\{-n,\dots,n\}$, the maximum over $\mathbf{z}\in\Delta(S)$ is attained as well.
	For the upper bound, let $\I$ be any profile consistent with $h$, and let $\mathbf{z}_\I \in \arg\min_{\mathbf{z} \in \Delta(S)} \viol_\I(\mathbf{z})$.
Then $\viol_\I(\bx)-\OPT(\I) = \viol_\I(\bx)-\viol_\I(\mathbf{z}_\I)$, which is the sum over $i \in N$ of $\mathbf{1}\{\langle u_i,\bx\rangle < \tau_i\} - \mathbf{1}\{\langle u_i,\mathbf{z}_\I\rangle < \tau_i\}$.
Since $(u_i,\tau_i) \in \Theta_i(h_i)$ for every $i$, each summand is at most the corresponding term in the definition of $D_h(\bx,\mathbf{z}_\I)$. Thus $\viol_\I(\bx)-\OPT(\I) \le D_h(\bx,\mathbf{z}_\I) \le \max_{ \mathbf{z} \in \Delta(S)} D_h(\bx,\mathbf{z})$, and taking the supremum over all profiles $\I$ consistent with $h$ gives one direction of the claimed equality.

For the reverse inequality, let $\mathbf{z}^* \in \arg\max_{\mathbf{z} \in \Delta(S)} D_h(\bx,\mathbf{z})$.
For each agent $i$, choose $(u_i^*,\tau_i^*)\in\Theta_i(h_i)$ attaining the maximum in the $i$-th summand of $D_h(\bx,\mathbf{z}^*)$. 
Since the consistency constraints are separate across agents, these choices combine into a profile $\I^* := ((u_i^*,\tau_i^*))_{i \in N}$ that is consistent with $h$. 
By construction, $\viol_{\I^*}(\bx)-\viol_{\I^*}(\mathbf{z}^*)=D_h(\bx,\mathbf{z}^*)$.
Since $\OPT(\I^*) \le \viol_{\I^*}(\mathbf{z}^*)$, we get $\viol_{\I^*}(\bx) - \OPT(\I^*) \ge D_h(\bx,\mathbf{z}^*) = \max_{\mathbf{z} \in \Delta(S)} D_h(\bx,\mathbf{z})$.
Combining with the upper bound proves the claim.
\end{proof}

This suggests a query rule: choose the next query so as to minimize the largest value that $\min_{\bx}\max_{\mathbf{z}} D_h(\bx,\mathbf{z})$ can take after the answer. Such a rule targets the \textsc{Min-Viol} objective directly, rather than learning each agent's halfspace separately.

\section{Details of the Synthetic Experiments}\label{app:experiments}

The experiments measure how far the worst-case guarantees are from the behavior of the algorithms on random instances.

\textbf{Populations.}
For each of two settings, we generated $20$ independent populations with $n=500$ agents and $m=3$ alternatives. In each population, the planted lottery $\bx$ was drawn uniformly from $\Delta(S)$. Each agent's utilities were $3$ independent uniform values on $[0,1]$, rescaled so that the smallest is $0$ and the largest is $1$; this gives the normalized pair of Section~\ref{sec:prelims}, in which margins are measured. For $425$ agents ($85\%$), $\tau_i$ is $U_i(\bx)$ minus a margin drawn uniformly from $[0.05,0.25]$ in the separated setting and from $[0,0.05)$ in the near-threshold setting, so these agents accept $\bx$. For the other $75$ agents, $\tau_i$ is $U_i(\bx)$ plus an amount drawn uniformly from $[0.05,0.25]$, so they reject $\bx$. An agent whose threshold fell outside $(0,1)$ was drawn again. Hence $B_{0.05}(\bx)=\varnothing$ in the separated setting, whereas in the near-threshold setting $B_{0.05}(\bx)$ consists of all $425$ accepting agents, and $B_{0.02}(\bx)$ of about $169$ of them on average. The planted lottery was optimal in every near-threshold population ($\OPT=75$); in the separated setting, $\OPT$ was between $71$ and $75$.

\textbf{Computing the optimum.}
Since $m=3$, we computed $\OPT$ exactly by checking all candidate lotteries formed by the edges of the simplex and the agents' acceptance boundaries, i.e., the vertex enumeration of Theorem~\ref{thm:minviol_XP}. We compared this exact solver with a mixed-integer program that has one binary variable per agent, indicating whether the agent's constraint may be violated, on all $40$ populations and on all $240$ samples of size $k=120$ described below; the optimal values agreed in every case.

\textbf{Sampling and exact minimization.}
For each population and each sample size $k\in\{30,120,480\}$, we drew $6$ independent samples of $k$ agents uniformly with replacement, minimized the number of rejecting sampled agents exactly, and measured the extra rejection rate $(\viol(\widehat\bx)-\OPT)/n$ on all $500$ agents. As in Section~\ref{sec:statistical}, the normalized pairs of the sampled agents are taken as recovered; we did not simulate the membership queries. Among the minimizers on the sample, vertex enumeration returns the first minimizing vertex it finds; Theorem~\ref{thm:budget-additive} applies to every minimizer. Table~\ref{tab:experiments} reports the mean over all populations and samples and the $95$th percentile. For comparison, with $m=3$ and $k\le 480$, the explicit bound of Theorem~\ref{thm:budget-additive} exceeds $n$ for every $\delta\in(0,1)$, whereas the observed mean extra rejection rate is below $2\%$ from $k=120$ on.

\textbf{The LP.}
Theorem~\ref{thm:shortfall-main} contains an unspecified absolute constant, so we did not evaluate its bound numerically. Instead, we separated the error caused by the LP from the error caused by sampling: we ran the LP with $\eta\in\{0.02,0.05\}$ on all $500$ agents and on the same samples of $k=120$ agents used for exact minimization (Table~\ref{tab:experiments}). In the bound of Theorem~\ref{thm:shortfall-main}, a larger $\eta$ reduces the factor $1+1/\eta$ and the sampling term but enlarges $B_\eta(\bx)$, and the results follow this trade-off. In the separated setting, where $B_{0.05}(\bx)=\varnothing$ for the planted lottery $\bx$, increasing $\eta$ from $0.02$ to $0.05$ lowered the mean extra rejection rate from $0.51\%$ to $0.28\%$ on all agents and from $1.96\%$ to $0.53\%$ on the samples, below the $1.67\%$ of exact minimization on the same samples. This is possible because exact minimization returns a vertex, which typically lies on the boundary of some sampled agents' acceptable sets, whereas the LP favors lotteries that the sampled agents accept with margin. In the near-threshold setting, where $B_{0.05}(\bx)$ contains every accepting agent, increasing $\eta$ raised the rate from $1.86\%$ to $5.22\%$ on all agents and from $3.60\%$ to $9.20\%$ on the samples. Thus the LP can be more accurate than exact minimization when the agents who accept a good lottery do so with a clear margin, but it loses accuracy when many of them are close to their thresholds and $\eta$ is too large. The guarantee of exact minimization does not depend on margins, which is why we prefer it when it is affordable (Section~\ref{subsec:shortfall-lp}).

\textbf{Querying $K$ agents.}
For each population and each $K\in\{10,25,50,100,200,400\}$, we chose $K$ distinct agents uniformly at random $6$ times, minimized the number of violations among them exactly, and measured the additive gap among all $500$ agents. For $K=100$, the mean gap was $8.46$ agents in the separated setting and $7.99$ agents in the near-threshold setting, that is, $2.11\%$ and $2.00\%$ of the worst-case value $n-K=400$. Table~\ref{tab:K-agents} gives all values: the mean gap was between $0.57\%$ and $19.17\%$ of $n-K$, and the largest gap in any single run was $202$ agents, or $41\%$ of $n-K$ (with $K=10$). We also implemented the profile constructed in the proof of Theorem~\ref{thm:coverage} for the algorithm that recovers $K$ fixed agents and minimizes the number of violations among them; on it, the gap was exactly $n-K$ for every tested $K$.

\begin{table}[h]
\caption{Mean additive gap $\viol(\widehat\bx)-\OPT$ of exact minimization over $K$ distinct agents chosen uniformly at random, as a percentage of the worst-case value $n-K$, with $n=500$ and $m=3$.}
\label{tab:K-agents}
\centering
\begin{tabular}{lcccccc}
\toprule
$K$ & $10$ & $25$ & $50$ & $100$ & $200$ & $400$ \\
\midrule
Separated & $13.01$ & $8.62$ & $3.77$ & $2.11$ & $1.23$ & $0.58$ \\
Near-threshold & $19.17$ & $7.79$ & $3.66$ & $2.00$ & $0.88$ & $0.57$ \\
\bottomrule
\end{tabular}
\end{table}

\textbf{Few alternatives.}
With $n=1000$, $m=500$, and $\eta=0.15$, Theorem~\ref{thm:few-alternatives} gives $s_\eta=\lceil\log(2000)/0.045\rceil=169$. In each of $480$ trials, we drew a new population: each agent's utilities were $500$ independent uniform values on $[0,1]$, rescaled as above, and the starting lottery $\bx$ was drawn uniformly from $\Delta(S)$, so that it uses every alternative. Every agent accepts $\bx$: $\tau_i$ is $U_i(\bx)$ minus a margin drawn uniformly from $[\eta,\eta+0.05]$, so that $B_\eta(\bx)=\varnothing$. In a harder setting, the margins of $20\%$ of the agents were instead drawn uniformly from $[0,\eta)$, so that $|B_\eta(\bx)|=200$. For $s\in\{50,169\}$, we drew $s$ alternatives independently from $\bx$, took their empirical distribution as in the proof of Theorem~\ref{thm:few-alternatives}, and counted the agents who reject it. When $B_\eta(\bx)=\varnothing$, sampling $169$ alternatives created no new rejection in any trial; in fact, no agent's expected utility dropped by more than $\eta$ in any trial, whereas the proof guarantees this only with probability at least $1/2$. Sampling only $50$ alternatives created no new rejection in $471$ of the $480$ trials and one new rejection in each of the others. In the harder setting, the mean new rejection rate was $1.19\%$ with $169$ alternatives ($95$th percentile $1.70\%$) and $2.19\%$ with $50$ alternatives ($95$th percentile $3.00\%$), whereas Theorem~\ref{thm:few-alternatives} allows up to $|B_\eta(\bx)|/n=20\%$.

\textbf{Implementation and compute.}
We used Python~3.13 with \href{https://numpy.org/}{NumPy} and \href{https://scipy.org/}{SciPy}, both distributed under the BSD 3-Clause license, and solved the LP and the mixed-integer program with the \href{https://highs.dev/}{HiGHS} solvers (MIT license) in SciPy. In floating point, agent $i$ counts as accepting $\bx$ if $U_i(\bx)\ge\tau_i-10^{-9}$. All random draws use fixed seeds. All instances are small ($n\le 1000$ and $m\le 500$): on a cloud virtual machine with $2$ Intel Xeon vCPUs at $2.1$\,GHz and $8$\,GB of memory, and no GPU, all experiments together take about $2$ minutes, of which about $80$ seconds are spent on the mixed-integer check.
Source files are available at \url{https://github.com/nicteh/Learning-Lotteries-Minimal-Stakeholder-Violations}.

\end{document}